\documentclass{article}
\PassOptionsToPackage{numbers, compress}{natbib}

\usepackage[main,preprint]{neurips_2026}

\usepackage[utf8]{inputenc} 
\usepackage[T1]{fontenc}    
\usepackage{hyperref}       
\usepackage{url}            
\usepackage{booktabs}       
\usepackage{amsfonts}       
\usepackage{nicefrac}       
\usepackage{microtype}      
\usepackage{xcolor}         
\usepackage{graphicx}
\usepackage{booktabs}
\usepackage{titletoc}
\usepackage{multirow}
\usepackage{subcaption}
\usepackage{wrapfig}
\usepackage{float}

\usepackage{algorithm}
\usepackage{algpseudocode}
\usepackage[normalem]{ulem}
\useunder{\uline}{\ul}{}

\usepackage[accsupp]{axessibility}  

\usepackage{amssymb}
\usepackage{pifont}
\newcommand{\cmark}{\ding{51}}%
\newcommand{\xmark}{\ding{55}}%

\usepackage{mathtools}

\DeclarePairedDelimiterX{\infdivx}[2]{(}{)}{%
  #1\;\delimsize\|\;#2%
}

\usepackage{xcolor}
\newcommand{\mcal}[1]{\mathcal{#1}}
\newcommand{\dom}[1]{\mathcal{D}_{\mathrm{#1}}}
\newcommand{\dblend}{\mathcal{D}_{\mathrm{blend}}}

\DeclareMathOperator{\dsim}{dsim}

\DeclareMathOperator{\acc}{acc}

\usepackage{cleveref}

\usepackage{amsmath, amssymb, amsthm}

\theoremstyle{plain}
\newtheorem{theorem}{Theorem}[section]
\newtheorem{proposition}[theorem]{Proposition}

\newtheorem{corollary}[theorem]{Corollary}

\theoremstyle{definition}

\newtheorem{assumption}[theorem]{Assumption}

\theoremstyle{remark}
\newtheorem{remark}[theorem]{Remark}

\begin{document}

\title{RADAR: Relative Angular Divergence Across Representations}

\author{
\textbf{Xavier Cadet}\thanks{Equal contribution}\\
 Dartmouth College\\
 {\tt\small xavier.fjf.cadet@dartmouth.edu}
 \and
 \textbf{Mateusz Nowak}$^*$\\
 Dartmouth College\\
 {\tt\small mateusz.m.nowak.th@dartmouth.edu}
 \and \textbf{Peter Chin}\\
 Dartmouth College\\
 {\tt\small peter.chin@dartmouth.edu}
}

\maketitle

\begin{abstract}
Machine learning methods rely on data. However, gathering suitable data can be challenging due to availability constraints, cost, or the need for domain expertise. Expanding datasets with additional sources is a common response to limited data, yet this practice does not always improve downstream performance and can sometimes lead to a loss of performance, known as negative transfer. We propose RADAR, a simple, geometrically grounded metric for estimating cross-domain transferability in foundation models. RADAR analyzes the layer-wise evolution of representations by measuring angular alignments and relative changes in distance along layer-to-layer displacement trajectories, and by comparing empirical distributions of within-domain and cross-domain dynamics. We hypothesize that domain transferability is related to the divergence between these trajectory distributions. We evaluate the metric across multiple modalities, including cross-lingual sentiment classification with text embedding models and cross-domain image classification with foundation vision models. Across several settings, RADAR provides competitive predictive performance relative to existing transferability metrics on several vision and text benchmarks, with particularly strong results when domain transitions are smooth or cleanly separated. Our ablations further suggest that the effectiveness of transferability estimation depends on the geometry of the model's internal representation space, with different modalities favoring different topological formulations.
\end{abstract}

\section{Introduction}
\label{section:introduction}

In modern machine learning, expanding training datasets with auxiliary sources is a standard strategy to address data constraints. However, this practice does not guarantee improved downstream performance and can even lead to negative transfer. Identifying which source domains will yield positive transfer remains a difficult task, as empirically evaluating each candidate impact requires costly full-model retraining. To address this challenge, we propose RADAR (\textbf{R}elative \textbf{A}ngular \textbf{D}ivergence \textbf{A}cross \textbf{R}epresentations), a simple, geometrically grounded metric designed to efficiently estimate cross-domain transferability in foundation models using only frozen features.

Unlike baseline metrics that require pre-training on the target dataset or evaluate only the static output of a single layer, RADAR analyzes the dynamic, layer-wise structural evolution of representations using frozen foundation models. We provide a high-level overview of the metric in Figure~\ref{fig:radar_pipeline}. The framework measures divergence by extracting intermediate feature representations from the model to map layer-to-layer displacement trajectories. These trajectories are then characterized by computing relative distance changes and angles between samples within the same domain versus samples across different domains. Based on these angles and relative distances, we fit high-dimensional Gaussian Mixture Models (GMMs) to model the continuous probability distributions of the within-domain and cross-domain dynamics. Finally, given these two distributions, we measure the overall structural shift between domains by computing the symmetric Kullback-Leibler (KL) divergence between them.

\begin{figure}[ht!]
    \centering
    \includegraphics[width=\linewidth]{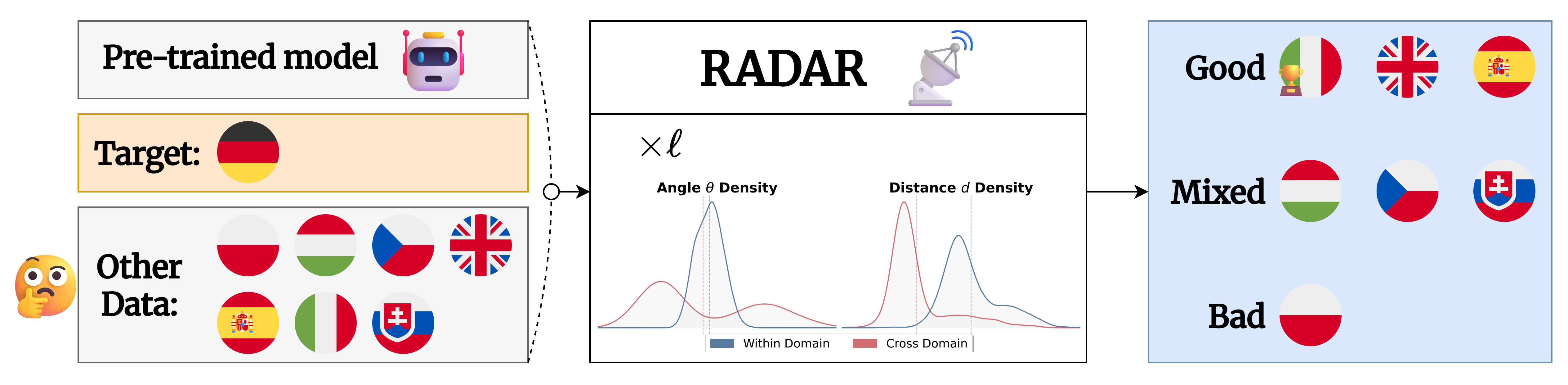}
    \caption{\textbf{Overview of the RADAR framework.} RADAR identifies optimal source data for transfer learning by extracting layer-wise ($\ell$) geometric trajectories from pre-trained models. By computing the divergence in angle ($\theta$) and relative distance ($d$) densities between within- and cross-domain pairs, it efficiently ranks auxiliary candidates without requiring computationally expensive fine-tuning.}
    \label{fig:radar_pipeline}
\end{figure}

We evaluate RADAR across both vision modalities (using CLIP \cite{radford2021learning} and DINOv3 \cite{simoni2025dinov3} architectures) and text modalities (using Qwen3-Embedding \cite{zhang2025qwen3} and EmbeddingGemma \cite{vera2025embeddinggemma}) on benchmark datasets such as DomainNet \cite{peng2019moment}, OfficeHome \cite{venkateswara2017deep}, PACS \cite{li2017deeperbroaderartierdomain} for the vision modality, AmazonReview \cite{nielsen2025encoder}, and EuroEval \cite{nielsen2025encoder} for the text modality, comparing it against state-of-the-art transferability metrics. We show that RADAR achieves top-three performance in seven out of ten configurations, securing the absolute best performance in six of them. Furthermore, we thoroughly evaluate the design choices of our algorithm—specifically focusing on domain subsampling, density estimation, and alternative optimal transport algorithms—demonstrating the overall robustness of our method. Our core contributions and findings are summarized as follows:

\begin{itemize}
    \item \textbf{Dynamic, Layer-Wise Structural Evolution:} We propose RADAR---a geometrically grounded metric that evaluates dynamic representation shifts between different domains. By measuring the divergence between these dynamic, continuous distributions, we effectively capture both smooth domain transitions and distinctly separated domains.
    \item \textbf{Highly Competitive Predictive Performance:} RADAR consistently ranks as the best or second-best transferability metric against established baselines (such as LEEP \cite{nguyen2020leep}, LogME \cite{you2021logme}, NCE \cite{tran2019transferability}, TransRate \cite{huang2022frustratingly}, and S-OTDD \cite{nguyen2025lightspeed}) on text and vision modalities.
    \item \textbf{Robust and Efficient Estimation:} We demonstrate the robustness and high computational efficiency of our method through an extensive ablation study, focusing on hyperparameter choices, density estimation techniques, representation subsampling, and alternative optimal transport algorithms.
\end{itemize}

Ultimately, RADAR offers a competitive, cross-modal transferability metric that reliably predicts transfer gain rankings across diverse vision and text benchmarks --- particularly when domain transitions are smooth or cleanly separated --- while remaining computationally efficient relative to empirical- and optimal transport-based metrics.

\section{Problem Statement}
\label{sec:problem}

\paragraph{Domains and shared label space.}
Let $\mcal{D} = \{\dom{1}, \ldots, \dom{K}\}$ be a collection of $K$ domains, each containing labeled data $\dom{i} = \{(x_j, y_j)\}_{j=1}^{N_i}$ drawn from a domain-specific joint distribution $P_i(X, Y)$. We assume all domains share a common label space $\mcal{Y}$, with every class represented in every domain. One domain $\dom{\mathrm{tgt}} \in \mcal{D}$ is designated the \emph{target}; the remaining $K-1$ domains form the pool of \emph{source} candidates, $\mcal{D}_{\mathrm{src}} = \mcal{D} \setminus \{\dom{\mathrm{tgt}}\}$. For instance, assuming we have three domains \{real, paintings, sketches\} dataset, we would like to know whether we should use paintings, and sketches to improve our performance on real images.

\paragraph{Blended training set.}
A \emph{blend} is a subset $\mcal{S} \subseteq \mcal{D}_{\mathrm{src}}$;
the corresponding blended dataset is the union of its member domains,
\begin{equation}
  \dblend\left(\mcal{S}\right) = \bigcup_{\dom{i} \in \mcal{S}} \dom{i}.
  \label{eq:blend}
\end{equation}
There are $2^{K-1}$ possible blends. For instance, if our target domain is real, should we train with \{\{\}, \{paintings\}, \{sketches\}, \{sketches, paintings\}\}.
Given a blend $\dblend\left(\mcal{S}\right)$, we train a classifier on the union of the target data and the source blend, $\dom{\mathrm{tgt}} \cup \dblend(\mcal{S})$, and evaluate it on $\dom{\mathrm{tgt}}$. We can now naturally evaluate the target-only baseline as $\mcal{S} = \varnothing$, yielding $A(\varnothing) = \acc\bigl(f_{\dom{\mathrm{tgt}}}, \dom{\mathrm{tgt}}\bigr)$.
We record two quantities as ground truth:
\begin{align}
  A(\mcal{S}) &= \acc\bigl(f_{\dom{\mathrm{tgt}} \cup \dblend\left(\mcal{S}\right)}, \dom{\mathrm{tgt}}\bigr),
  \label{eq:abs_acc} \\
  \Delta(\mcal{S}) &= A(\mcal{S}) - A(\varnothing).
  \label{eq:transfer_gain}
\end{align}
A positive $\Delta(\mcal{S})$ indicates that blending in the sources $\mcal{S}$ improves over training on target data alone; a negative value indicates negative transfer.

\paragraph{Source selection problem.}
Computing $A(\mcal{S})$ for every subset $\mcal{S}$ requires $2^{K-1}$ full
training runs, which is intractable for large $K$.
Our goal is to predict $\Delta(\mcal{S})$ --- and thereby rank blend configurations --- from \emph{frozen features alone}, without any model retraining. Concretely, we seek a \emph{blend similarity function}
\begin{equation}
  \dsim\bigl(\dblend(\mcal{S}), \dom{\mathrm{tgt}}\bigr) \in \mathbb{R}
  \label{eq:dsim_def}
\end{equation}
such that its ranking over all $2^{K-1}-1$ non-empty blends is consistent with
the ranking induced by $\Delta(\mcal{S})$:
\begin{equation}
  \dsim\!\bigl(\dblend(\mcal{S}),\, \dom{\mathrm{tgt}}\bigr) <
  \dsim\!\bigl(\dblend(\mcal{S}'),\, \dom{\mathrm{tgt}}\bigr)
  \implies
  \Delta(\mcal{S}) > \Delta(\mcal{S}').
  \label{eq:dsim_goal}
\end{equation}
We evaluate the quality of a similarity function by its Spearman rank correlation $\rho$ with the ground-truth $\Delta$ values across all blend configurations.

\section{Relative Angular Divergence Across Representations as a Metric}
\label{sec:method}

\paragraph{Feature extraction.}
Let $\Phi$ denote a pre-trained foundation model, such as a text embedding or vision architecture. For an input sample $x_i$ drawn from a specific domain $\dom{d} \in \mcal{D}$, we define its intermediate feature representation extracted at layer $l$ of $\Phi$ as $\mathbf{h}^{(l)}(x_i) \in \mathbb{R}^{H_l}$, where $H_l$ is the dimensionality of the representation at layer $l$.

\begin{figure}[ht!]
  \centering
  \subfloat[Within-domain dynamic]{\includegraphics[width=0.49\textwidth]{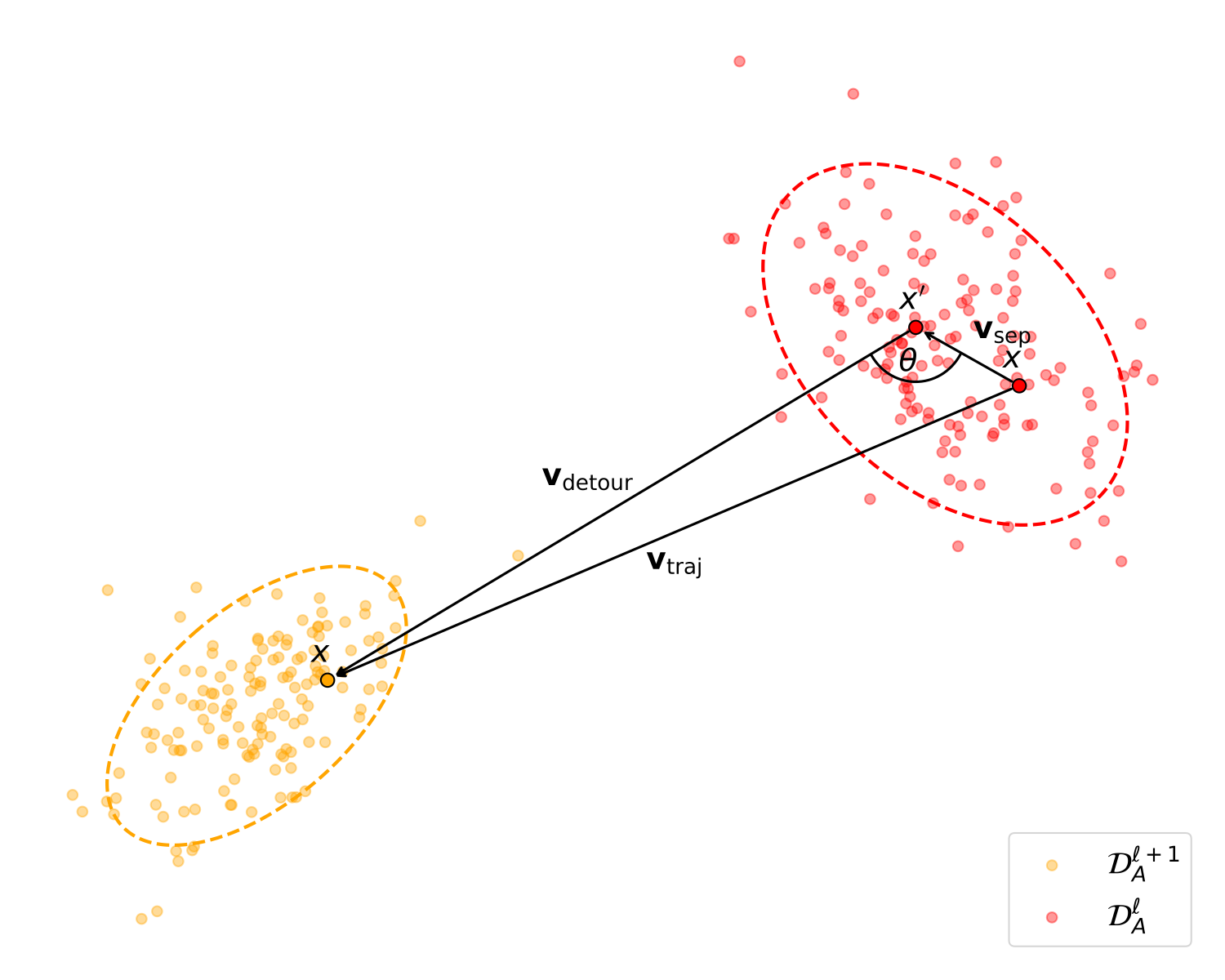}}\label{fig:traid_within}
  \hfill
  \subfloat[Across-domain dynamic]{\includegraphics[width=0.49\textwidth]{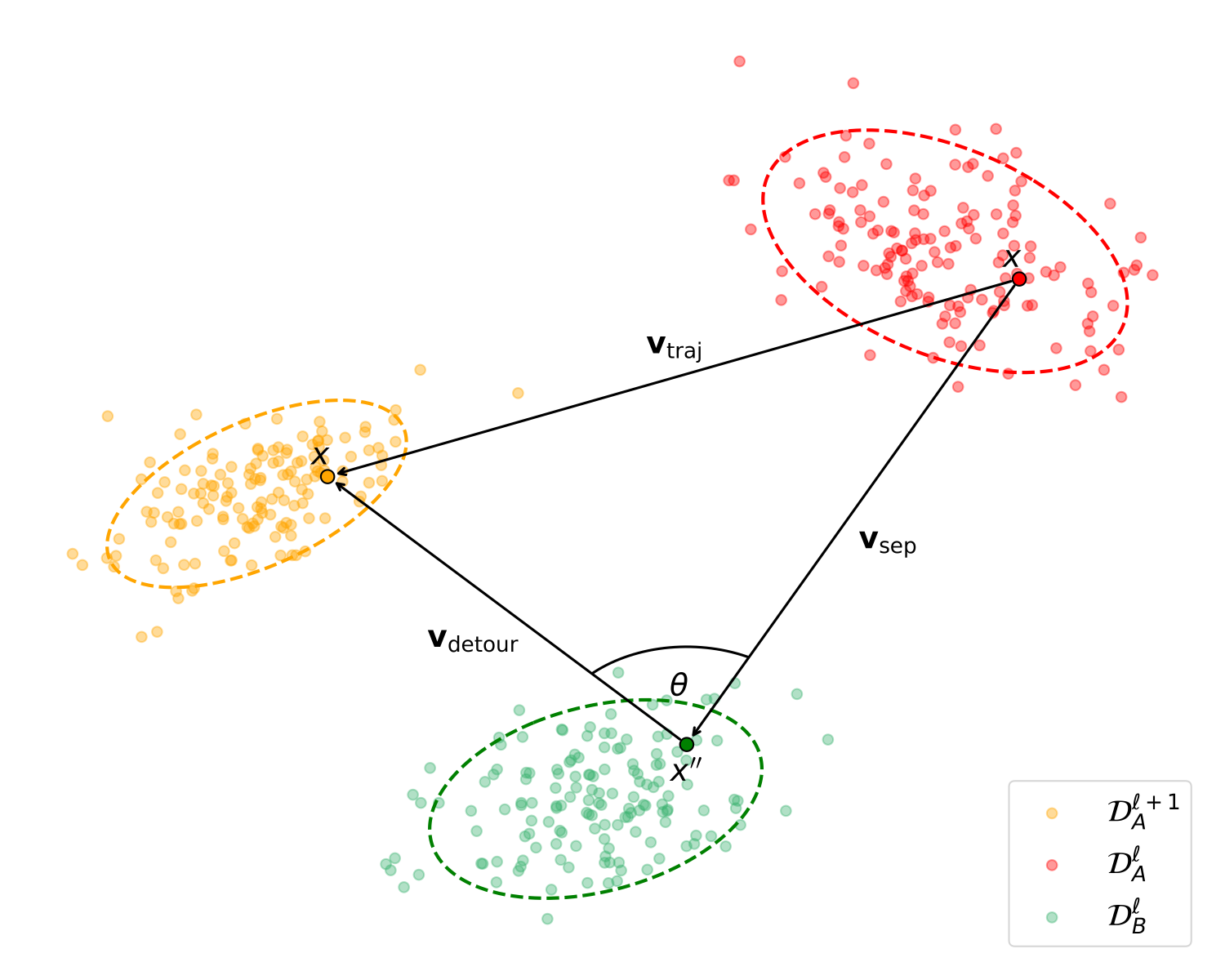}}\label{fig:triad_cross}
  \caption{Geometric triangle of displacement vectors spanning layers $l$ and $l+1$}
  \label{fig:triad}
\end{figure}

\paragraph{Angle and relative distance.}
Given two domains, $\dom{A}$ and $\dom{B}$, we compute empirical within-domain and cross-domain angle and distance distributions. For a given anchor sample $x \in \dom{A}$ with label $y$, we draw a secondary intra-domain sample $x' \in \dom{A} \setminus \{x\}$ and a cross-domain sample $x'' \in \dom{B}$. To capture within-domain dynamics, we define a geometric triangle of displacement vectors spanning layers $l$ and $l+1$ (see Figure~\ref{fig:triad}):
\begin{align}
\mathbf{v}_{\text{sep}}^{(l)}(x, x') &= \mathbf{h}^{(l)}(x') - \mathbf{h}^{(l)}(x), \label{eq:sep} \\
\mathbf{v}_{\text{detour}}^{(l)}(x, x') &= \mathbf{h}^{(l+1)}(x) - \mathbf{h}^{(l)}(x'), \label{eq:detour} \\
\mathbf{v}_{\text{traj}}^{(l)}(x) &= \mathbf{h}^{(l+1)}(x) - \mathbf{h}^{(l)}(x). \label{eq:traj}
\end{align}
 
This triad establishes the local spread --- measuring the spatial separation between samples at a single layer ($\mathbf{v}_{\text{sep}}$) and the detour required if we substituted $x$ with $x'$ before transitioning to the next layer ($\mathbf{v}_{\text{detour}}$). Analogously, we compute the corresponding cross-domain triad to capture inter-domain divergence: the cross-domain separation $\mathbf{v}_{\text{sep}}^{(l)}(x, x'')$, the cross-domain detour $\mathbf{v}_{\text{detour}}^{(l)}(x, x'')$, and the direct trajectory of the anchor sample $\mathbf{v}_{\text{traj}}^{(l)}(x)$.

To quantify the structural shift at layer $l$, we compute the angular alignment $\theta^{(l)}$ between the separation and detour vectors using cosine similarity, as well as the relative distance $d^{(l)}$. For the within-domain pair within Euclidean space, these are given by:

\begin{align}
    \theta^{(l)}(x, x') &= \arccos \left( \frac{\mathbf{v}_{\text{sep}}^{(l)}(x, x') \cdot \mathbf{v}_{\text{detour}}^{(l)}(x, x')}{\max(\|\mathbf{v}_{\text{sep}}^{(l)}(x, x')\| \|\mathbf{v}_{\text{detour}}^{(l)}(x, x')\|,\epsilon)} \right), \\
    d^{(l)}(x, x') &=  \frac{\|\mathbf{v}_{\text{sep}}^{(l)}(x, x')\| + \|\mathbf{v}_{\text{detour}}^{(l)}(x, x')\|  - \|\mathbf{v}_{\text{traj}}^{(l)}(x)\|}{\max(\|\mathbf{v}_{\text{traj}}^{(l)}(x)\|,\epsilon)},
\end{align}
where $\epsilon=10^{-8}$ serves as a lower bound for the denominators to prevent division by zero.
We compute the corresponding cross-domain angular alignment $\theta^{(l)}(x, x'')$ and relative distance $d^{(l)}(x, x'')$ analogously. By comparing the within-domain and cross-domain distributions of these quantities, we measure the overall domain divergence at a given layer. We motivate the necessity of both angles and distances in Appendix~\ref{sec:ablation_features}.

The relative-distance descriptor $d^{(l)}$ admits a clean geometric interpretation. By construction, $v_{\text{sep}} + v_{\text{detour}} = v_{\text{traj}}$ exactly (proven in Appendix~\ref{appendix:disp_triangle}), so the triangle inequality guarantees $d^{(l)} \ge 0$, with $d^{(l)}$ measuring the normalized excess path length of the two-step detour over the direct trajectory. Geometrically, $d^{(l)} = 0$ when the three representation points are collinear (i.e., when $h^{(l)}(x')$ lies exactly on the line segment from $h^{(l)}(x)$ to $h^{(l+1)}(x)$), which corresponds to $\theta^{(l)} = 0$, since both conditions are equivalent to $v_{\text{sep}}$ and $v_{\text{detour}}$ being positively collinear as vectors.

\paragraph{Euclidean vs. other spaces}
However, we observe that some models inherently lie on the surface of a hypersphere. Therefore, we propose a version of this metric defined on \textbf{geodesic} distances and angles. Let $d_S(u, v) = \arccos \left( \frac{u \cdot v}{\|u\| \|v\|} \right)$ denote the great-circle distance between two representations. We first define the arc lengths of the spherical triangle sides:
\begin{align}
    s_{\text{sep}} &= d_S(\mathbf{h}^{(l)}(x), \mathbf{h}^{(l)}(x')), \\
    s_{\text{detour}} &= d_S(\mathbf{h}^{(l)}(x'), \mathbf{h}^{(l+1)}(x)), \\
    s_{\text{traj}} &= d_S(\mathbf{h}^{(l)}(x), \mathbf{h}^{(l+1)}(x)).
\end{align}

Using the Spherical Law of Cosines, we define the geodesic angular alignment $\theta^{(l)}_{\text{geo}}$ and the relative geodesic distance $d^{(l)}_{\text{geo}}$ as:
\begin{align}
    \theta^{(l)}_{\text{geo}}(x, x') &=  \arccos \left( \frac{\cos(s_{\text{traj}}) - \cos(s_{\text{sep}})\cos(s_{\text{detour}})}{\max( \sin(s_{\text{sep}})\sin(s_{\text{detour}}), \epsilon)} \right), \\
    d^{(l)}_{\text{geo}}(x, x') &= \frac{s_{\text{sep}} + s_{\text{detour}} - s_{\text{traj}}}{\max(s_{\text{traj}},\epsilon)}.
\end{align}

As these angles and relative distances are subsequently used to measure distributional differences, which are invariant to constant shifts, we directly use the interior spherical angle rather than aligning it with the Euclidean directional formulation. Moreover, we can project these polar descriptors into a \textbf{pseudo-Cartesian} space:
\begin{align}
    x^{(l)}_{\text{cart}}(x, x') &= d^{(l)}_{\text{geo}}(x, x')\cdot\cos(\theta^{(l)}_{\text{geo}}(x, x')), \\
    y^{(l)}_{\text{cart}}(x, x') &= d^{(l)}_{\text{geo}}(x, x')\cdot\sin(\theta^{(l)}_{\text{geo}}(x, x')).
\end{align}
The optimal topology reflects a fundamental cross-modal divide: text representations strictly favor Euclidean space, whereas vision representations perform best in pseudo-Cartesian space. As projecting text embeddings into pseudo-Cartesian space causes severe performance degradation while vision models remain robust to either choice, we adopt \textbf{Euclidean} space as the most consistent unified formulation. We motivate this design choice in Appendix~\ref{sec:ablation_space}.

\paragraph{Divergence algorithms and long trajectory}To measure the divergence between the within-domain and cross-domain feature distributions, we evaluate four different metrics:
\begin{itemize}
\item \textbf{GMM + KL}: Using the collected angular alignments and relative distances, we fit Gaussian Mixture Models (GMMs) on both the within-domain metric set for $\dom{A}$ and the cross-domain metric between $\dom{A}$ and $\dom{B}$, yielding two continuous probability distributions, $P$ and $Q$, respectively. We then measure the structural divergence using a weighted symmetric Kullback-Leibler (KL) divergence: $\frac{|\dom{A}|}{|\dom{A}|+|\dom{B}|} D_{\text{KL}}(P \parallel Q) + \frac{|\dom{B}|}{|\dom{A}|+|\dom{B}|} D_{\text{KL}}(Q \parallel P)$.
\item \textbf{GMM + SWD}: Similarly, we fit GMMs on $\dom{A}$ and $\dom{B}$ to form continuous distributions, but compute the optimal transport metric between them using the Sliced Wasserstein Distance (SWD).
\item \textbf{Sinkhorn Divergence}: To avoid the parametric assumptions of GMMs, we directly compute the entropic regularized Wasserstein distance between the empirical samples of $\dom{A}$ and $\dom{B}$. This provides a computationally efficient, differentiable approximation of optimal transport.
\item \textbf{Gaussian Maximum Mean Discrepancy (MMD)}: Finally, we evaluate the non-parametric MMD using a Gaussian (Radial Basis Function) kernel, which measures the divergence by calculating the distance between the mean embeddings of $\dom{A}$ and $\dom{B}$ in a Reproducing Kernel Hilbert Space (RKHS).
\end{itemize}

GMM + KL is the only universally robust algorithm across modalities and is therefore adopted as the default, though Sinkhorn divergence proves highly competitive for text representations specifically. Full comparisons across divergence algorithms and GMM covariance parameterizations are provided in Appendices~\ref{sec:ablation_optTrans} and~\ref{sec:ablation_covariance}.

Another benefit of our method is its capacity for extended trajectory tracing. While we have previously described a single structural shift from layer $l \rightarrow l+1$, we propose evaluating our metric over longer sequences. In this paper, we capture extended trajectories spanning 13 layers (from $l-6 \rightarrow \cdots \rightarrow l \rightarrow \cdots \rightarrow l+6$). As each of the 12 intermediate layer transitions yields a pair of descriptors, concatenating these step-wise measurements produces a comprehensive 24-dimensional feature vector representing the structural evolution of the sample pair. The GMM is subsequently fit directly on this 24-D space to capture the holistic multi-layer divergence. We motivate our multi-layer trajectory tracing with a proof in Appendix~\ref{appendix:dpi}.

\paragraph{Sampling strategy.}
\label{method:sampling}
Computing this metric exactly across domains $\dom{A}$ and $\dom{B}$ requires $l \times (|\dom{A}||\dom{B}| + |\dom{A}|^2)$ descriptor evaluations, necessitating GMM fitting on millions of data points. To ensure computational feasibility, we subsample $N$ distinct pairs for both the within-domain and cross-domain sets, such that $N \ll |\dom{A}|^2$ and $N \ll |\dom{A}||\dom{B}|$. This bounds our computational complexity to $\mathcal{O}(l \cdot N)$ per ($\dom{A}$,$\dom{B}$) pair.

To guarantee balanced coverage, we employ stratified sampling across class pairs $(c_i, c_j)$. Within these strata, we apply a distance-based weighted sampling strategy rather than uniform random selection. For a given sample $x$, let $\bar{d}(x) \in [0,1]$ be its min-max normalized Euclidean distance to its domain-specific class centroid. We define its scalar inlier weight as $w(x) = 1 - \bar{d}(x)$, inherently assigning higher sampling probabilities to samples closer to the distribution center.

To obtain a comprehensive representation of the domain manifold dynamics, we construct our $N$ pairs by sampling two distinct subsets for both within-domain $(x, x')$ and cross-domain $(x, x'')$ sets:
\begin{itemize}
\item \textbf{Inlier-Inlier pairs:} The first subset is drawn by weighting the probability of selecting a pair proportional to $w(x) \cdot w(x')$.
\item \textbf{Inlier-Outlier pairs:} The second subset captures the boundaries of the distribution by weighting the pair selection proportional to $w(x) \cdot (1 - w(x'))$.
\end{itemize}
These sampled pairs are used to compute step-wise angular alignments and relative distances, which are concatenated into the feature vectors used for final divergence estimation. Comparisons across sampling strategies are provided in Appendix~\ref{sec:ablation_sampling}.

While we have formulated this metric in a pairwise manner between two domains, our approach naturally extends to multi-source settings. Specifically, the secondary domain $\dom{B}$ can be constructed as a blended pool of multiple source domains (i.e., $\dom{B} = \bigcup_{d \in \mathcal{S}} \dom{d}$). This allows our sampling strategy and subsequent divergence metrics to directly capture the structural shift between a target domain and an aggregated multi-source blend, satisfying our core objective.
\section{Results}
\label{section:results}

\subsection{Experimental Setup}
\label{sec:exp_setup}
We evaluate our proposed method across two distinct problem settings: image classification (\S~\ref{results:image_classification}) and sentiment classification (\S~\ref{results:sentiment_classification}). To demonstrate the versatility of our metric, we employ vision foundation models for the former and text embedding models for the latter. Motivated by recent findings that a model's final output layer is not guaranteed to yield the optimal representation for downstream transfer \cite{bolya2025perception}, we extract features from every transformer block rather than relying solely on the final representation. To establish a ground-truth measure of transferability, we train a lightweight two-layer MLP directly on these layer-wise features, averaging performance across $r=10$ random seeds to ensure statistical stability. For a given target domain, we construct this ground-truth baseline by evaluating the MLP on all $K-1$ pairs formed by combining the target data with one of the auxiliary source domains.

\paragraph{Dataset}
For image classification, we employ three standard domain adaptation benchmarks: DomainNet \cite{peng2019moment}, OfficeHome \cite{venkateswara2017deep}, and PACS \cite{li2017deeperbroaderartierdomain}. To keep evaluation computationally tractable, we subset DomainNet to the ten classes with the highest sample counts, and OfficeHome to the 20 most populated classes to maintain a consistent scale across datasets. PACS is used in its entirety given its smaller size.

For sentiment classification, we use the cross-lingual AmazonReview dataset from MTEB \cite{muennighoff2023mteb} and EuroEval \cite{nielsen2025encoder}, treating each language as a distinct domain---a natural choice given that language boundaries define the distribution shift. For EuroEval, we consider eight languages: English, German, Polish, Czech, Slovak, Hungarian, Italian, and Spanish. For AmazonReview, we use all six available languages. Full dataset details are provided in Appendix~\ref{appendix:datasets}.

\paragraph{Encoding backbones}
For our encoding backbones, we use the CLIP vision encoder \cite{radford2021learning} and the DINOv3 model \cite{simoni2025dinov3} for image classification. For sentiment classification, we employ the Qwen3-Embedding \cite{zhang2025qwen3} and EmbeddingGemma \cite{vera2025embeddinggemma} models. We provide the full model name as listed  on Hugging Face with the URL in Appendix~\ref{appendix:base_models}.

\paragraph{Baselines}
As a standard baseline, we compute the $\ell_2$ distance between the feature centroids of two domains, $\mathcal{D}_A$ and $\mathcal{D}_B$. This centroid distance is naturally expected to be negatively correlated with accuracy improvements on the target domain—specifically, a smaller distance between domain centroids should correspond to higher transfer accuracy gains.

To rigorously evaluate our method against the state-of-the-art, we compare it against a diverse suite of transferability metrics applied to the same frozen features. These include label-conditional metrics such as LEEP \cite{nguyen2020leep} and NCE \cite{tran2019transferability}; feature-label compatibility scores including LogME \cite{you2021logme}, HScore \cite{bao2019information}, regularized HScore \cite{ibrahim2022newer}, and TransRate \cite{huang2022frustratingly}; as well as distribution-matching and optimal transport approaches like s-OTDD \cite{nguyen2025lightspeed}.

\paragraph{Metrics}
To evaluate predictive performance, we calculate the Spearman rank correlation ($\rho$) between the predicted divergence metric and the ground-truth accuracy gains. Additionally, we report the Mean Correlation Improvement (MCI), defined as $\text{MCI}(X) = \rho_X - \rho_{\text{BASE}}$, where $X$ is the evaluated metric and $\text{BASE}$ represents the simple centroid baseline. Lower MCI values indicate greater improvement over the baseline. For similarity metrics inherently positively correlated with accuracy (e.g., LEEP \cite{nguyen2020leep}), we evaluate their inverse ($-X$) to maintain a consistent interpretation. MCI is computed over all hidden states of each network and averaged. Raw Spearman rank correlations and computational cost analysis are provided in Appendices~\ref{appendix:raw_spearman} and~\ref{appendix:cost_analysis}.

\subsection{Assessing transferability}
Table~\ref{tab:main_results} presents the Mean Correlation Improvement (MCI) for various metrics across both vision and language modalities. In the visual domain, RADAR achieves the best predictive performance among all evaluated metrics on DomainNet for both CLIP and DINOv3 architectures, alongside the strongest results on OfficeHome for CLIP. Performance is more mixed across the remaining vision benchmarks, consistent with the domain-separation regime discussed in Section~\ref{results:image_classification} and Appendix~\ref{appendix:domain_separation}. This strong performance on structured datasets extends to the textual domain, where RADAR yields the highest correlation on the EuroEval dataset for the EmbeddingGemma model. On the AmazonReviews dataset, RADAR achieves the best result among evaluated metrics for EmbeddingGemma while remaining competitive for the Qwen3-Embedding architecture, where gains are more modest.

\begin{table}[ht!]
    \caption{Mean Correlation Improvement (MCI, \%pt.). Bold indicates the best performance, underline indicates second-best performance.}
    \label{tab:main_results} 
    \centering
    \begin{subtable}[t]{0.4\textwidth}
        \centering 
        \subcaption{Results on vision benchmarks.}
        \resizebox{\textwidth}{!}{%
\begin{tabular}{lcccccc}
\toprule
 & \multicolumn{6}{c}{\textbf{Mean Correlation Improvement (MCI) (\%pt.)} $\downarrow$} \\ \cmidrule(l){2-7} 
 & \multicolumn{2}{c}{\textbf{DomainNet} \cite{peng2019moment}} & \multicolumn{2}{c}{\textbf{OfficeHome} \cite{venkateswara2017deep}} & \multicolumn{2}{c}{\textbf{PACS} \cite{li2017deeperbroaderartierdomain}} \\ \cmidrule(lr){2-3} \cmidrule(lr){4-5} \cmidrule(l){6-7} 
\textbf{Method} & \textbf{CLIP} & \textbf{DINOv3} & \textbf{CLIP} & \textbf{DINOv3} & \textbf{CLIP} & \textbf{DINOv3} \\ 
\midrule
LEEP \cite{nguyen2020leep} & {\ul -8.63} & {\ul 3.44} & -4.52 & 3.92 & -14.64 & -33.67 \\
H-Score \cite{bao2019information} & 0.92 & 19.13 & 8.77 & 21.98 & -10.28 & {\ul -50.29} \\
Reg. H-Score \cite{ibrahim2022newer} & 6.03 & 17.11 & 8.13 & 26.95 & -12.11 & -48.89 \\
LogME \cite{you2021logme} & 12.17 & 19.90 & -4.41 & 18.67 & -9.96 & \textbf{-72.08} \\
NCE \cite{tran2019transferability} & -4.88 & 4.15 &  -7.20 & 3.93 & -13.61 & -33.89 \\
TransRate \cite{huang2022frustratingly} & 2.29 & 12.71 & {\ul -11.78} & {\ul -3.12} & \textbf{-25.45} & -27.38 \\
S-OTDD \cite{nguyen2025lightspeed} & 27.01 & 21.31 & 10.88 & \textbf{-24.69} & {\ul -23.92} & -37.50 \\ 
\midrule
RADAR (OUR) & \textbf{-9.71} & \textbf{-9.87} & \textbf{-16.62} & 3.55 & -0.27 & -30.23 \\ 
\bottomrule
\end{tabular}%
}
        
        \label{tab:results_vision} 
    \end{subtable}%
    \hfill 
    \begin{subtable}[t]{0.55\textwidth}
        \centering
        \subcaption{Results on text benchmarks.}
        \resizebox{\textwidth}{!}{%
\begin{tabular}{lcccc}
\toprule
 & \multicolumn{4}{c}{\textbf{Mean Correlation Improvement (MCI) (\%pt.)} $\downarrow$} \\ \cmidrule(l){2-5} 
 & \multicolumn{2}{c}{\textbf{EuroEval} \cite{nielsen2025encoder}} & \multicolumn{2}{c}{\textbf{Amazon Reviews} \cite{muennighoff2023mteb}} \\ \cmidrule(lr){2-3} \cmidrule(l){4-5} 
\textbf{Method} & \textbf{Qwen3-Embedding} & \textbf{EmbeddingGemma} & \textbf{Qwen3-Embedding} & \textbf{EmbeddingGemma} \\ 
\midrule
LEEP \cite{nguyen2020leep} & {\ul -3.58} & 3.94 & 1.76 & {\ul 1.91 }\\
H-Score \cite{bao2019information} & 9.27 & 11.75 & -6.84 & 3.35 \\
Reg. H-Score \cite{ibrahim2022newer} & 6.13 & 10.45 & {\ul -6.91} & 3.35 \\
LogME \cite{you2021logme} & 22.93 & 20.82 & 22.57 & 20.91 \\
NCE \cite{tran2019transferability} & -0.02 & {\ul 1.88 } & \textbf{-7.60} & 5.71 \\
TransRate \cite{huang2022frustratingly} & 3.12 & 4.10 & 12.47 & 19.48 \\
S-OTDD \cite{nguyen2025lightspeed} & 6.19 & 18.71 & 15.63 & 19.69 \\ 
\midrule
RADAR (OUR) & \textbf{-7.13} & \textbf{-30.58} & 3.21 & \textbf{1.23} \\ 
\bottomrule
\end{tabular}%
}
        \label{tab:results_lang} 
    \end{subtable}
\end{table}

\subsubsection{Case study: Image Classification} 
\label{results:image_classification}
As presented in Table~\ref{tab:results_vision}, RADAR demonstrates strong predictive performance on two of the three evaluated cross-domain datasets. On DomainNet, our method is the only evaluated metric to achieve a negative MCI for the DINOv3 architecture, while also securing the top overall score for the CLIP model. Furthermore, on the OfficeHome dataset, RADAR establishes the best overall score for CLIP and remains highly competitive for DINOv3, achieving the third-best rank among all tested metrics.

We observe a reduced performance of RADAR on PACS, nonetheless, this is predictable from the dataset's geometry. We show in Appendices~\ref{appendix:imagenet_c_robustness} and \ref{appendix:domain_separation}, that inter-domain centroid distances in PACS closely mirror ImageNet-C at Severity 3, placing it in an intermediate separation regime that is neither smooth enough for trajectory-based adaptation nor distinct enough for clean divergence estimation.
This same regime is precisely where RADAR degrades when evaluated on ImageNet-C (Table~\ref{tab:imagenet_c_robustness} in Appendix~\ref{appendix:imagenet_c_robustness}), confirming that the PACS result reflects a principled scope condition rather than a general weakness.
When domains exhibit scattered, non-uniform separation at the input level, as is the case with PACS, the added utility of measuring deep geometric trajectories diminishes, allowing purely statistical or static cluster-based metrics to perform highly competitively.

\subsubsection{Case study: Sentiment Analysis} 
\label{results:sentiment_classification}
Similar to the image classification results,RADAR demonstrates strong predictive performance in the text modality, achieving the best or near-best score in three of the four evaluated configurations, with more modest gains on the Qwen3-Embedding model for Amazon Reviews (Table~\ref{tab:results_lang}). On the EuroEval dataset, RADAR is the only evaluated metric to achieve a negative MCI for the EmbeddingGemma model, outperforming the next best baseline by over 30 percentage points. Furthermore, for the Qwen3-Embedding model on EuroEval, RADAR secures the top score among the three metrics capable of achieving a negative MCI. On the Amazon Reviews dataset, RADAR establishes the state-of-the-art result for the EmbeddingGemma model while maintaining a highly competitive score for Qwen3-Embedding. Crucially, metrics that performed competitively on image classification tasks --- such as LogME, TransRate, and S-OTDD --- struggle significantly in the text modality. This stark contrast underscores RADAR's robust, cross-modal applicability.

\section{Related Work}
\label{section:related_work}

\subsection{Transferability Metrics}
\label{related:transf_metrics}
Situated within the broader domain adaptation literature \cite{farahani2021brief, long2015learning}, domain transferability metrics aim to quantify the extent to which candidate source data can facilitate learning on a target task under distribution shift \cite{wang2025understanding}. A critical objective for these metrics is predicting negative transfer \cite{zhang2022survey}, a phenomenon where incorporating source information actively degrades target performance. Indeed, a recent survey emphasizes the urgent need for reliable metrics capable of detecting such degradation scenarios, highlighting the importance of studying negative findings and counterexamples \cite{dong2026advances}. In the following sections, we categorize existing metrics based on their computational paradigms: those relying on the discrete, empirical computation of domain shift, and those formulated using optimal transport algorithms.

\paragraph{Empirical Metrics}
\label{related:emp_metrics}
Early transferability metrics focused on empirically measuring the relationship between various tasks. Given a classifier trained on a source domain, LEEP \cite{nguyen2020leep} computes the average log-likelihood of the expected empirical predictor—a simplified model that performs prediction based on the expected empirical conditional distribution between source and target label sets. Analogous to LEEP, NCE \cite{tran2019transferability} quantifies transferability via the negative conditional entropy between these label spaces. In contrast, the H-Score \cite{bao2019information} is derived from information-theoretic principles and solves the Hirschfeld-Gebelein-Rényi maximum correlation problem \cite{Hirschfeld_1935,gebelein1941statistische,renyi1959measures}. This approach was later refined by the Regularized H-Score \cite{ibrahim2022newer}, which addresses the instability of covariance matrix estimation through a shrinkage-based regularizer. 

Conversely, LogME \cite{you2021logme} employs Bayesian statistics to maximize the marginal likelihood of a linear classifier head. Another metric, TransRate \cite{huang2022frustratingly}, estimates the mutual information between labels and target features through coding rates \cite{ma2007segmentation}, offering a more geometrically grounded transferability signal than entropy-based approaches. Despite their varying methodological foundations, all of these approaches evaluate transferability from static representations at a fixed network depth, without modeling how domain structure dynamically evolves across the model's hierarchy.

\paragraph{Optimal Transport-based Metrics}
\label{related:ot_metrics}
To address the limitations of empirical label-dependent metrics, Optimal Transport (OT) based approaches aim to measure distributional distances directly by modeling the underlying data geometry. OTCE \cite{tan2021otce} treats the OT plan between input distributions as a joint probability distribution and leverages conditional entropy to quantify the gap between datasets. In follow-up work, F-OTCE \cite{tan2024transferability} simplifies this process by removing the Wasserstein distance from the domain difference calculation, instead solving the OT between source and target distributions to compute negative conditional entropy via optimal coupling. Conversely, JC-OTCE \cite{tan2024transferability} improves transferability estimation by incorporating label distances directly into the OT problem. 

Optimal Transport Dataset Distance (OTDD) \cite{alvarez2020geometric}, built upon the hierarchical OT framework \cite{yurochkin2019hierarchical}, proposes calculating label distances between the class-conditional distributions of two supervised learning tasks. This approach yields a dataset distance that accounts for both sample and label discrepancies. This framework was recently optimized by s-OTDD \cite{nguyen2025lightspeed}, which utilizes Moment Transform Projection and sliced Wasserstein distance to achieve (near-)linear computational complexity. Similarly prioritizing computational efficiency, WTE \cite{liu2025wasserstein} embeds datasets into a vector space and computes Euclidean distances as a lightweight approximation of optimal transport.

Despite their theoretical rigor, all of these approaches focus exclusively on static feature distributions at a single representation level. Consequently, they fail to capture the dynamic, layer-wise geometric evolution that RADAR leverages to characterize model transferability.

\subsection{Geometric Structure of Foundation Model Representations}
\label{related:geometric_structure}
The general inductive biases of deep neural networks often induce a \textit{cone effect} \cite{liang2022mind}, where the effective embedding space of pre-trained models is restricted to a narrow cone, and contrastive learning actively preserves the modality gap. While this work suggests that widening the gap can enhance zero-shot performance, the follow-up work, MG-CLIP \cite{huang2025mind}, demonstrates the existence of an optimal gap between modalities. The authors show that maintaining a stable modality gap is essential to preserve pre-trained knowledge and prevent representational degradation. In the specific context of CLIP, simply including a domain in the training data does not guarantee improved generalization, highlighting sufficient domain diversity as a crucial prerequisite \cite{kempf2025and}. Furthermore, Cross The Gap \cite{mistretta2025cross} shifts the focus to intra-modal misalignment, establishing that the structural integrity of representations within a single modality is equally important to inter-modal alignment.

Beyond modality-specific structures, the Platonic Representation Hypothesis \cite{huh2024position} posits that foundation models, regardless of their training objectives or modalities, converge toward a shared statistical model of reality. This suggests that the geometric properties of representation spaces reflect fundamental characteristics of the underlying data manifold rather than mere architectural artifacts. Consistent with this view, recent work demonstrates that DINO features naturally reside on the surface of a hypersphere \cite{kumar2026learning}. These insights collectively motivate the design of RADAR: by measuring angular alignment and relative distance across network layers, RADAR captures domain-specific geometric deformations within these representation spaces to directly assess cross-domain compatibility.

\subsection{Perceptual Straightening and Walking Through a Network}
\label{related:perceptual_straight}
The temporal straightening hypothesis suggests that biological visual systems facilitate the processing of dynamic stimuli by transforming highly curved visual trajectories into straightened paths within the brain's internal representation spaces \cite{henaff2019perceptual,henaff2021primary}. Recent applications of this hypothesis in AI-generated video detection utilize angles of inter-frame features to distinguish the low-curvature trajectories of real videos from the higher-curvature paths of synthetic ones \cite{interno2025aigenerated}. Inspired by this foundation, we extend the formulation to domain transferability, treating the layer-wise progression of a deep neural network as a representational trajectory. Incorporating relative distance changes between layers, we assess how the geometric structure of these paths reflects cross-domain structural shifts. This perspective is further motivated by recent findings that a model's most informative representations are not its final layer's \cite{bolya2025perception}, justifying our extraction of features across multiple transformer blocks.
\section{Limitations, Future Work and Conclusion}
\label{section:conclusion}

While RADAR demonstrates competitive performance across diverse datasets, several limitations remain. First, the metric’s predictive power tends to degrade in scenarios where domain structures lack clear, uniform geometric separation (e.g., PACS). This suggests that RADAR’s underlying assumptions are most robust when domain shifts are well-defined rather than scattered or overlapping in the latent space. Second, the unified Euclidean formulation, while the most consistent choice across tested architectures, may not be optimal for future foundation models that inherently operate on hyperspheres or other non-Euclidean manifolds, potentially requiring architecture-specific tuning to achieve peak predictive alignment. Finally, our current reliance on Gaussian Mixture Models (GMMs) for density estimation presents a computational bottleneck. The lack of a standardized, high-performance GPU implementation for GMM fitting limits the scalability of our metric when applied to real-time filtering tasks.

As future work, we intend to investigate extending RADAR with domain-aware features, similar to TransRate and LogME, to further improve predictive performance. Additionally, we plan to explore dynamic weighting schemes for different layer positions to automatically prioritize the most informative segments of the representation trajectory; such an approach could offer a clearer understanding of the specific conditions under which the metric fails. Moreover, we plan to investigate formal guarantees on the relationship between RADAR's geometric trajectory descriptor and downstream transfer performance, providing a theoretical foundation for the currently empirically motivated design choices. Lastly, we aim to extend our trajectory-based divergence analysis to other modalities, including video and audio, to determine if these geometric principles generalize to temporal and spectral representation spaces.

In this paper, we proposed RADAR, a geometrically grounded metric for predicting cross-domain transferability by measuring the KL divergence between angular and relative displacement distributions in foundation model feature spaces. Our evaluation demonstrates that layer-wise geometric structure carries a meaningful transferability signal across both vision and text modalities---a finding that most static, single-layer metrics fail to capture. We hope RADAR's cross-modal applicability and competitive performance encourage further investigation into the geometry of representation spaces as a principled basis for transfer learning decisions.
\section{Acknowledgments}
This research was funded by the Defense Advanced Research Projects Agency (DARPA), under contract W912CG23C0031.

%
%
\nocite{*}
\bibliographystyle{splncs04}
\bibliography{references}

@inproceedings{radford2021learning,
  title={Learning transferable visual models from natural language supervision},
  author={Radford, Alec and Kim, Jong Wook and Hallacy, Chris and Ramesh, Aditya and Goh, Gabriel and Agarwal, Sandhini and Sastry, Girish and Askell, Amanda and Mishkin, Pamela and Clark, Jack and others},
  booktitle={International conference on machine learning},
  pages={8748--8763},
  year={2021},
  organization={PmLR}
}

@article{zhang2025qwen3,
  title={Qwen3 embedding: Advancing text embedding and reranking through foundation models},
  author={Zhang, Yanzhao and Li, Mingxin and Long, Dingkun and Zhang, Xin and Lin, Huan and Yang, Baosong and Xie, Pengjun and Yang, An and Liu, Dayiheng and Lin, Junyang and others},
  journal={arXiv preprint arXiv:2506.05176},
  year={2025}
}

@article{kempf2025and,
  title={When and How Does CLIP Enable Domain and Compositional Generalization?},
  author={Kempf, Elias and Schrodi, Simon and Argus, Max and Brox, Thomas},
  journal={arXiv preprint arXiv:2502.09507},
  year={2025}
}

@article{zhang2022survey,
  title={A survey on negative transfer},
  author={Zhang, Wen and Deng, Lingfei and Zhang, Lei and Wu, Dongrui},
  journal={IEEE/CAA Journal of Automatica Sinica},
  volume={10},
  number={2},
  pages={305--329},
  year={2022},
  publisher={IEEE}
}

@article{dong2026advances,
  title={Advances in multimodal adaptation and generalization: From traditional approaches to foundation models},
  author={Dong, Hao and Liu, Moru and Zhou, Kaiyang and Chatzi, Eleni and Kannala, Juho and Stachniss, Cyrill and Fink, Olga},
  journal={IEEE Transactions on Pattern Analysis and Machine Intelligence},
  year={2026},
  publisher={IEEE}
}

@article{wang2025understanding,
  title={Understanding Knowledge Transferability for Transfer Learning: A Survey},
  author={Wang, Haohua and Wang, Jingge and Zhao, Zijie and Tan, Yang and Wu, Yanru and Liu, Hanbing and Yang, Jingyun and Zhang, Enming and Chen, Xiangyu and Rong, Zhengze and others},
  journal={arXiv preprint arXiv:2507.03175},
  year={2025}
}

@inproceedings{huang2025mind,
  title={Mind the gap: Preserving and compensating for the modality gap in clip-based continual learning},
  author={Huang, Linlan and Cao, Xusheng and Lu, Haori and Meng, Yifan and Yang, Fei and Liu, Xialei},
  booktitle={Proceedings of the IEEE/CVF International Conference on Computer Vision},
  pages={3777--3786},
  year={2025}
}

@inproceedings{tan2021otce,
  title={Otce: A transferability metric for cross-domain cross-task representations},
  author={Tan, Yang and Li, Yang and Huang, Shao-Lun},
  booktitle={Proceedings of the IEEE/CVF conference on computer vision and pattern recognition},
  pages={15779--15788},
  year={2021}
}

@article{henaff2019perceptual,
  title={Perceptual straightening of natural videos},
  author={H{\'e}naff, Olivier J and Goris, Robbe LT and Simoncelli, Eero P},
  journal={Nature neuroscience},
  volume={22},
  number={6},
  pages={984--991},
  year={2019},
  publisher={Nature Publishing Group US New York}
}

@inproceedings{
mistretta2025cross,
title={Cross the Gap:  Exposing the Intra-modal Misalignment in {CLIP} via Modality Inversion},
author={Marco Mistretta and Alberto Baldrati and Lorenzo Agnolucci and Marco Bertini and Andrew D. Bagdanov},
booktitle={The Thirteenth International Conference on Learning Representations},
year={2025},
url={https://openreview.net/forum?id=VVVfuIcmKR}
}

@article{liang2022mind,
  title={Mind the gap: Understanding the modality gap in multi-modal contrastive representation learning},
  author={Liang, Victor Weixin and Zhang, Yuhui and Kwon, Yongchan and Yeung, Serena and Zou, James Y},
  journal={Advances in Neural Information Processing Systems},
  volume={35},
  pages={17612--17625},
  year={2022}
}

@article{henaff2021primary,
  title={Primary visual cortex straightens natural video trajectories},
  author={H{\'e}naff, Olivier J and Bai, Yoon and Charlton, Julie A and Nauhaus, Ian and Simoncelli, Eero P and Goris, Robbe LT},
  journal={Nature communications},
  volume={12},
  number={1},
  pages={5982},
  year={2021},
  publisher={Nature Publishing Group UK London}
}

@misc{dino_url,
  author = {{Hugging Face}},
  title = {Model Card for DINOv3},
  year = {2026},
  howpublished = {\url{https://huggingface.co/facebook/dinov3-vits16-pretrain-lvd1689m}},
  note = {Accessed: 2026}
}

@misc{clip_url,
  author = {{Hugging Face}},
  title = {Model Card: CLIP},
  year = {2026},
  howpublished = {\url{https://huggingface.co/openai/clip-vit-base-patch32}},
  note = {Accessed: 2026}
}

@misc{qwen_url,
  author = {{Hugging Face}},
  title = {Qwen3-Embedding-0.6B},
  year = {2026},
  howpublished = {\url{https://huggingface.co/Qwen/Qwen3-Embedding-0.6B}},
  note = {Accessed: 2026}
}

@misc{google_url,
  author = {{Hugging Face}},
  title = {EmbeddingGemma model card},
  year = {2026},
  howpublished = {\url{https://huggingface.co/google/embeddinggemma-300m}},
  note = {Accessed: 2026}
}

@article{farahani2021brief,
  title={A brief review of domain adaptation},
  author={Farahani, Abolfazl and Voghoei, Sahar and Rasheed, Khaled and Arabnia, Hamid R},
  journal={Advances in data science and information engineering: proceedings from ICDATA 2020 and IKE 2020},
  pages={877--894},
  year={2021},
  publisher={Springer}
}

@article{renyi1959measures,
  title={On measures of dependence},
  author={R{\'e}nyi, A},
  journal={Acta Mathematica Academiae Scientiarum Hungarica},
  volume={10},
  number={3},
  pages={441--451},
  year={1959},
  publisher={Springer}
}

@article{gebelein1941statistische,
  title={Das statistische Problem der Korrelation als Variations-und Eigenwertproblem und sein Zusammenhang mit der Ausgleichsrechnung},
  author={Gebelein, Hans},
  journal={ZAMM-Journal of Applied Mathematics and Mechanics/Zeitschrift f{\"u}r Angewandte Mathematik und Mechanik},
  volume={21},
  number={6},
  pages={364--379},
  year={1941},
  publisher={Wiley Online Library}
}

@inproceedings{
interno2025aigenerated,
title={{AI}-Generated Video Detection via Perceptual Straightening},
author={Christian Intern{\`o} and Robert Geirhos and Markus Olhofer and Sunny Liu and Barbara Hammer and David Klindt},
booktitle={The Thirty-ninth Annual Conference on Neural Information Processing Systems},
year={2025},
url={https://openreview.net/forum?id=LsmUgStXby}
}

@article{Hirschfeld_1935, title={A Connection between Correlation and Contingency}, volume={31}, DOI={10.1017/S0305004100013517}, number={4}, journal={Mathematical Proceedings of the Cambridge Philosophical Society}, author={Hirschfeld, H. O.}, year={1935}, pages={520–524}}

@article{ma2007segmentation,
  title={Segmentation of multivariate mixed data via lossy data coding and compression},
  author={Ma, Yi and Derksen, Harm and Hong, Wei and Wright, John},
  journal={IEEE transactions on pattern analysis and machine intelligence},
  volume={29},
  number={9},
  pages={1546--1562},
  year={2007},
  publisher={IEEE}
}

@article{yurochkin2019hierarchical,
  title={Hierarchical optimal transport for document representation},
  author={Yurochkin, Mikhail and Claici, Sebastian and Chien, Edward and Mirzazadeh, Farzaneh and Solomon, Justin M},
  journal={Advances in neural information processing systems},
  volume={32},
  year={2019}
}

@article{alvarez2020geometric,
  title={Geometric dataset distances via optimal transport},
  author={Alvarez-Melis, David and Fusi, Nicolo},
  journal={Advances in Neural Information Processing Systems},
  volume={33},
  pages={21428--21439},
  year={2020}
}

@article{tan2024transferability,
  title={Transferability-guided cross-domain cross-task transfer learning},
  author={Tan, Yang and Zhang, Enming and Li, Yang and Huang, Shao-Lun and Zhang, Xiao-Ping},
  journal={IEEE Transactions on Neural Networks and Learning Systems},
  volume={36},
  number={2},
  pages={2423--2436},
  year={2024},
  publisher={IEEE}
}

@article{hendrycks2019robustness,
  title={Benchmarking Neural Network Robustness to Common Corruptions and Perturbations},
  author={Dan Hendrycks and Thomas Dietterich},
  journal={Proceedings of the International Conference on Learning Representations},
  year={2019}
}

@article{bolya2025perception,
  title={Perception encoder: The best visual embeddings are not at the output of the network},
  author={Bolya, Daniel and Huang, Po-Yao and Sun, Peize and Cho, Jang Hyun and Madotto, Andrea and Wei, Chen and Ma, Tengyu and Zhi, Jiale and Rajasegaran, Jathushan and Rasheed, Hanoona and others},
  journal={arXiv preprint arXiv:2504.13181},
  year={2025}
}

@inproceedings{feydy2019fast,
  title={Fast and scalable optimal transport for brain tractograms},
  author={Feydy, Jean and Roussillon, Pierre and Trouv{\'e}, Alain and Gori, Pietro},
  booktitle={International Conference on Medical Image Computing and Computer-Assisted Intervention},
  pages={636--644},
  year={2019},
  organization={Springer}
}

@article{jiang2022transferability,
  title={Transferability in deep learning: A survey},
  author={Jiang, Junguang and Shu, Yang and Wang, Jianmin and Long, Mingsheng},
  journal={arXiv preprint arXiv:2201.05867},
  year={2022}
}

@inproceedings{huh2024position,
  title={Position: The platonic representation hypothesis},
  author={Huh, Minyoung and Cheung, Brian and Wang, Tongzhou and Isola, Phillip},
  booktitle={Forty-first International Conference on Machine Learning},
  year={2024}
}

@article{kumar2026learning,
  title={Learning on the Manifold: Unlocking Standard Diffusion Transformers with Representation Encoders},
  author={Kumar, Amandeep and Patel, Vishal M},
  journal={arXiv preprint arXiv:2602.10099},
  year={2026}
}

@article{liu2025wasserstein,
  title={Wasserstein task embedding for measuring task similarities},
  author={Liu, Xinran and Bai, Yikun and Lu, Yuzhe and Soltoggio, Andrea and Kolouri, Soheil},
  journal={Neural Networks},
  volume={181},
  pages={106796},
  year={2025},
  publisher={Elsevier}
}

@misc{tllib,
    author = {Junguang, Jiang and Baixu, Chen and Bo, Fu and Mingsheng, Long},
    title = {Transfer-Learning-library},
    year = {2020},
    publisher = {GitHub},
    journal = {GitHub repository},
    howpublished = {\url{https://github.com/thuml/Transfer-Learning-Library}},
}

@inproceedings{long2015learning,
  title={Learning transferable features with deep adaptation networks},
  author={Long, Mingsheng and Cao, Yue and Wang, Jianmin and Jordan, Michael},
  booktitle={International conference on machine learning},
  pages={97--105},
  year={2015},
  organization={PMLR}
}

@article{nguyen2025lightspeed,
  title={Lightspeed geometric dataset distance via sliced optimal transport},
  author={Nguyen, Khai and Nguyen, Hai and Pham, Tuan and Ho, Nhat},
  journal={arXiv preprint arXiv:2501.18901},
  year={2025}
}

@inproceedings{huang2022frustratingly,
  title={Frustratingly easy transferability estimation},
  author={Huang, Long-Kai and Huang, Junzhou and Rong, Yu and Yang, Qiang and Wei, Ying},
  booktitle={International conference on machine learning},
  pages={9201--9225},
  year={2022},
  organization={PMLR}
}

@inproceedings{bao2019information,
  title={An information-theoretic approach to transferability in task transfer learning},
  author={Bao, Yajie and Li, Yang and Huang, Shao-Lun and Zhang, Lin and Zheng, Lizhong and Zamir, Amir and Guibas, Leonidas},
  booktitle={2019 IEEE international conference on image processing (ICIP)},
  pages={2309--2313},
  year={2019},
  organization={IEEE}
}

@inproceedings{ibrahim2022newer,
  title={Newer is not always better: Rethinking transferability metrics, their peculiarities, stability and performance},
  author={Ibrahim, Shibal and Ponomareva, Natalia and Mazumder, Rahul},
  booktitle={Joint European Conference on Machine Learning and Knowledge Discovery in Databases},
  pages={693--709},
  year={2022},
  organization={Springer}
}

@inproceedings{you2021logme,
  title={Logme: Practical assessment of pre-trained models for transfer learning},
  author={You, Kaichao and Liu, Yong and Wang, Jianmin and Long, Mingsheng},
  booktitle={International conference on machine learning},
  pages={12133--12143},
  year={2021},
  organization={PMLR}
}

@inproceedings{tran2019transferability,
  title={Transferability and hardness of supervised classification tasks},
  author={Tran, Anh T and Nguyen, Cuong V and Hassner, Tal},
  booktitle={Proceedings of the IEEE/CVF international conference on computer vision},
  pages={1395--1405},
  year={2019}
}

@inproceedings{nguyen2020leep,
  title={Leep: A new measure to evaluate transferability of learned representations},
  author={Nguyen, Cuong and Hassner, Tal and Seeger, Matthias and Archambeau, Cedric},
  booktitle={International conference on machine learning},
  pages={7294--7305},
  year={2020},
  organization={PMLR}
}

@article{vera2025embeddinggemma,
  title={Embeddinggemma: Powerful and lightweight text representations},
  author={Vera, Henrique Schechter and Dua, Sahil and Zhang, Biao and Salz, Daniel and Mullins, Ryan and Panyam, Sindhu Raghuram and Smoot, Sara and Naim, Iftekhar and Zou, Joe and Chen, Feiyang and others},
  journal={arXiv preprint arXiv:2509.20354},
  year={2025}
}

@misc{li2017deeperbroaderartierdomain,
      title={Deeper, Broader and Artier Domain Generalization}, 
      author={Da Li and Yongxin Yang and Yi-Zhe Song and Timothy M. Hospedales},
      year={2017},
      eprint={1710.03077},
      archivePrefix={arXiv},
      primaryClass={cs.CV},
      url={https://arxiv.org/abs/1710.03077}, 
}

@inproceedings{venkateswara2017deep,
  title={Deep hashing network for unsupervised domain adaptation},
  author={Venkateswara, Hemanth and Eusebio, Jose and Chakraborty, Shayok and Panchanathan, Sethuraman},
  booktitle={Proceedings of the IEEE Conference on Computer Vision and Pattern Recognition},
  pages={5018--5027},
  year={2017}
}

@article{simoni2025dinov3,
  title={Dinov3: Self-supervised learning for vision at unprecedented scale},
  author={Simoni, Oriane and others},
  journal={arXiv preprint arXiv:2508.10104},
  year={2025}
}

@inproceedings{nielsen2025encoder,
  title={Encoder vs decoder: Comparative analysis of encoder and decoder language models on multilingual NLU tasks},
  author={Nielsen, Dan Saattrup and Enevoldsen, Kenneth and Schneider-Kamp, Peter},
  booktitle={Proceedings of the Joint 25th Nordic Conference on Computational Linguistics and 11th Baltic Conference on Human Language Technologies (NoDaLiDa/Baltic-HLT 2025)},
  pages={561--572},
  year={2025}
}

@inproceedings{peng2019moment,
  title={Moment matching for multi-source domain adaptation},
  author={Peng, Xingchao and Bai, Qinxun and Xia, Xide and Huang, Zijun and Saenko, Kate and Wang, Bo},
  booktitle={Proceedings of the IEEE/CVF international conference on computer vision},
  pages={1406--1415},
  year={2019}
}

@inproceedings{muennighoff2023mteb,
  title={Mteb: Massive text embedding benchmark},
  author={Muennighoff, Niklas and Tazi, Nouamane and Magne, Lo{\"\i}c and Reimers, Nils},
  booktitle={Proceedings of the 17th Conference of the European Chapter of the Association for Computational Linguistics},
  pages={2014--2037},
  year={2023}
}

\newpage

\appendix
\section{RADAR robustness on ImageNet-C dataset}
\label{appendix:imagenet_c_robustness}

In this section, we examine the robustness of RADAR using the ImageNet-C dataset, which allows us to simulate synthetic domains with varying degrees of distribution shift by controlling corruption severity. Figure~\ref{fig:imagenet_c_severity} illustrates these shifts by depicting a single sample across different severity levels. To quantitatively assess the resilience of various transferability metrics against these perturbations, we report MCI in Table~\ref{tab:imagenet_c_robustness} across three representative severity levels---1, 3, and 5.

\begin{table}[ht!]
    \centering
    \caption{Performance comparison on the ImageNet-C dataset across increasing corruption severity levels (1, 3, and 5). While baseline statistical metrics like TransRate perform well under moderate synthetic noise (Severity 3), RADAR demonstrates exceptional robustness at extreme corruption levels (Severity 5) and competitive performance at low noise levels (Severity 1) for both architectures.}
    \label{tab:imagenet_c_robustness}
    \resizebox{\textwidth}{!}{%
    \begin{tabular}{lcccccc}
    \toprule
     & \multicolumn{6}{c}{\textbf{Mean Correlation Improvement (MCI) (\%pt.)} $\downarrow$} \\ \cmidrule(l){2-7} 
     & \multicolumn{2}{c}{\textbf{ImageNet-C-1}} & \multicolumn{2}{c}{\textbf{ImageNet-C-3}} & \multicolumn{2}{c}{\textbf{ImageNet-C-5}} \\ \cmidrule(lr){2-3} \cmidrule(lr){4-5} \cmidrule(l){6-7} 
    \textbf{Method} & \textbf{CLIP} & \textbf{DINOv3} & \textbf{CLIP} & \textbf{DINOv3} & \textbf{CLIP} & \textbf{DINOv3} \\ 
    \midrule
    LEEP \cite{nguyen2020leep} & \textbf{7.61} & 5.13 & 17.72 & 19.23 & 35.96 & 20.39 \\
    H-Score \cite{bao2019information} & {\ul 9.46} & {\ul -1.10} & {\ul 12.26} & {\ul -1.99} & 18.21 & -11.38 \\
    Reg. H-Score \cite{ibrahim2022newer} & 12.44 & \textbf{-1.82} & 13.52 & -1.36 & 24.33 & N/A \\
    LogME \cite{you2021logme} & 13.43 & 17.06 & 13.63 & 1.09 & 19.24 & -19.51 \\
    NCE \cite{tran2019transferability} & 12.21 & 2.87 & 19.72 & 19.18 & 42.95 & 24.57 \\
    TransRate \cite{huang2022frustratingly} & 10.57 & 7.54 & \textbf{5.73} & \textbf{-7.38} & \textbf{-25.68} & {\ul -25.07} \\
    S-OTDD \cite{nguyen2025lightspeed} & 19.20 & 20.61 & 22.37 & 22.60 & 4.83 & -0.46 \\ 
    \midrule
    RADAR (OUR) & \textbf{7.61} & -0.25 & 20.69 & 8.82 & {\ul 1.98} & \textbf{-30.16} \\ 
    \bottomrule
    \end{tabular}%
    }
\end{table}

Under low corruption conditions (Severity 1), RADAR demonstrates highly competitive baseline stability. For the CLIP architecture, RADAR ties with LEEP for the best performance; similarly, for DINOv3, it maintains a robust third-place ranking that closely rivals top-performing statistical benchmarks such as Regularized H-Score. These results suggest that at minimal noise levels, our geometric trajectory approach effectively captures underlying domain shifts without being significantly derailed by minor perturbations.

As corruption increases to moderate levels (Severity 3), we observe a shift in metric efficacy. Metrics such as TransRate achieve the lowest MCI results across both architectures in this regime. In contrast, RADAR experiences a comparative reduction in predictive performance at this level. This suggests that moderate synthetic noise may temporarily disrupt the trajectory alignments upon which our method relies, creating a regime that favors metrics with a learnable feature component like TransRate.

However, the most striking results emerge under extreme corruption (Severity 5). As the input space becomes heavily degraded, traditional metrics experience catastrophic failures; for instance, LEEP and NCE both exceed an MCI of 35.0 for the CLIP architecture. In contrast, RADAR demonstrates exceptional robustness. For DINOv3, RADAR achieves the optimal score of -30.16, significantly outperforming all baselines. Similarly, for CLIP, it remains highly resilient, outperforming all statistical benchmarks except for TransRate.

These findings suggest a compelling structural advantage: while moderate noise can perturb local trajectories, severe corruption uniformly distorts the embedding space, causing static metrics to collapse. Because RADAR measures the \textit{relative angular divergence} of how representations evolve across layers, it remains remarkably invariant to extreme spatial distortions, preserving its ability to accurately predict model transferability even when the underlying data is heavily corrupted. Nevertheless, we note that when the perturbations are moderate, the metric's performance degrades. This indicates that RADAR is highly competitive when domain shifts are either minimal (Severity 1) or extreme (Severity 5), but exhibits vulnerabilities to intermediate, unstructured perturbations (Severity 3).

\begin{figure}[ht!]
    \centering
    \includegraphics[width=1\linewidth]{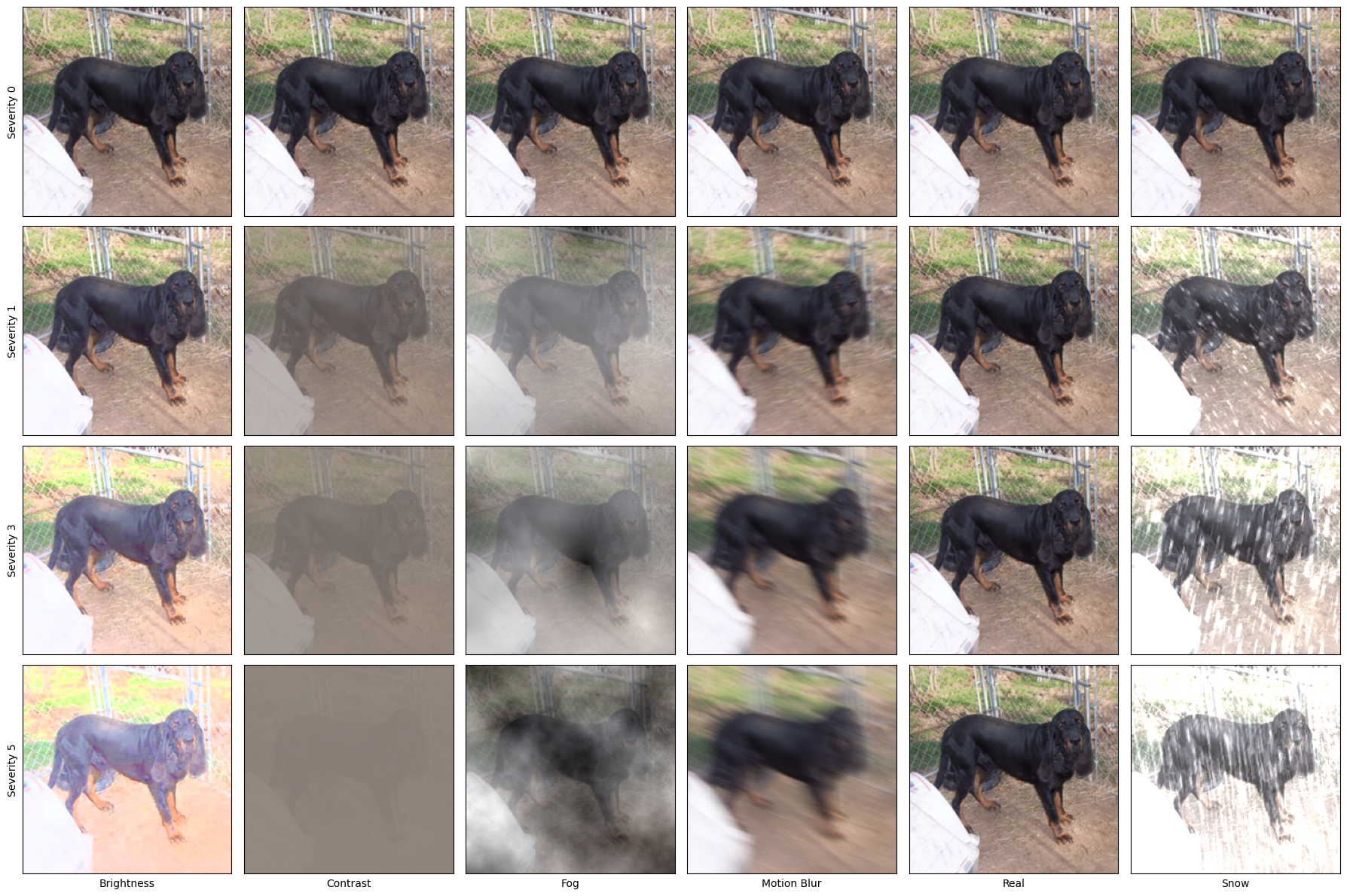}
    \caption{\textbf{Visualizing ImageNet-C synthetic corruptions.} Examples of various visual corruptions across increasing severities (0, 1, 3, and 5). The profound structural degradation at Severity 5 fundamentally scrambles patch-level token distributions in Vision Transformers. As shown in Table~\ref{tab:imagenet_c_robustness}, RADAR successfully tracks these chaotic geometric trajectories where most of the standard statistical metrics fail.}
    \label{fig:imagenet_c_severity}
\end{figure}

\section{Domain Separation --- Comparing ImageNet-C and other datasets}
\label{appendix:domain_separation}

Building upon the discussion in Appendix~\ref{appendix:imagenet_c_robustness}, we investigate the domain separation within the first-layer feature spaces of both CLIP and DINOv3. This analysis allows us to assess the domain separability inherent in standard benchmarks---DomainNet \cite{peng2019moment}, OfficeHome \cite{venkateswara2017deep}, and PACS \cite{li2017deeperbroaderartierdomain}---relative to the varying corruption severities of ImageNet-C. The resulting feature-space visualizations are presented in Figures~\ref{fig:layer0_vis_clip} and \ref{fig:layer0_vis_dino}.

\begin{figure}[ht!]
  \centering
  
  \subfloat[DomainNet]{
      \includegraphics[width=0.33\textwidth]{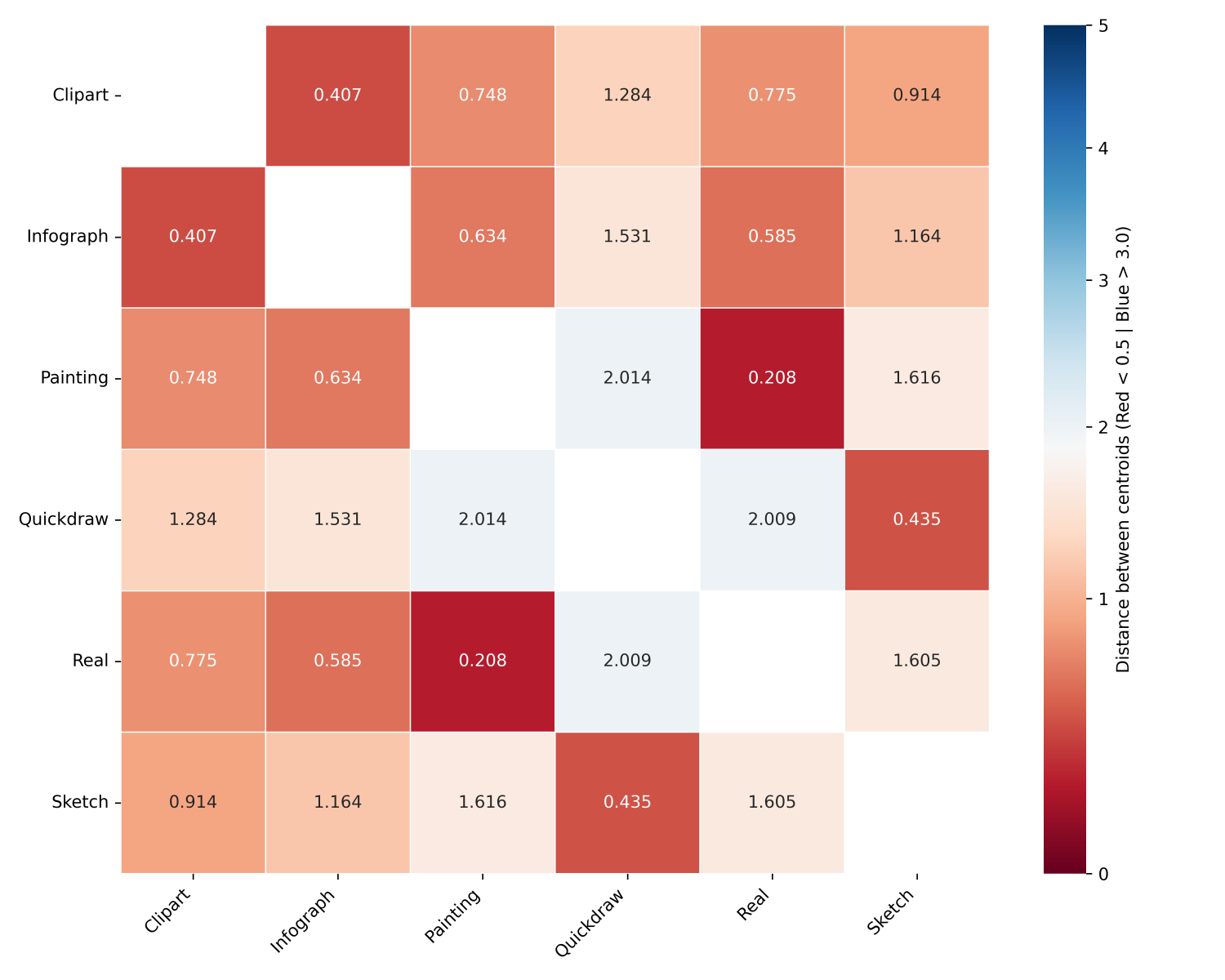}
      \label{fig:domainnet_l0} 
  }  
  \subfloat[OfficeHome]{
      \includegraphics[width=0.33\textwidth]{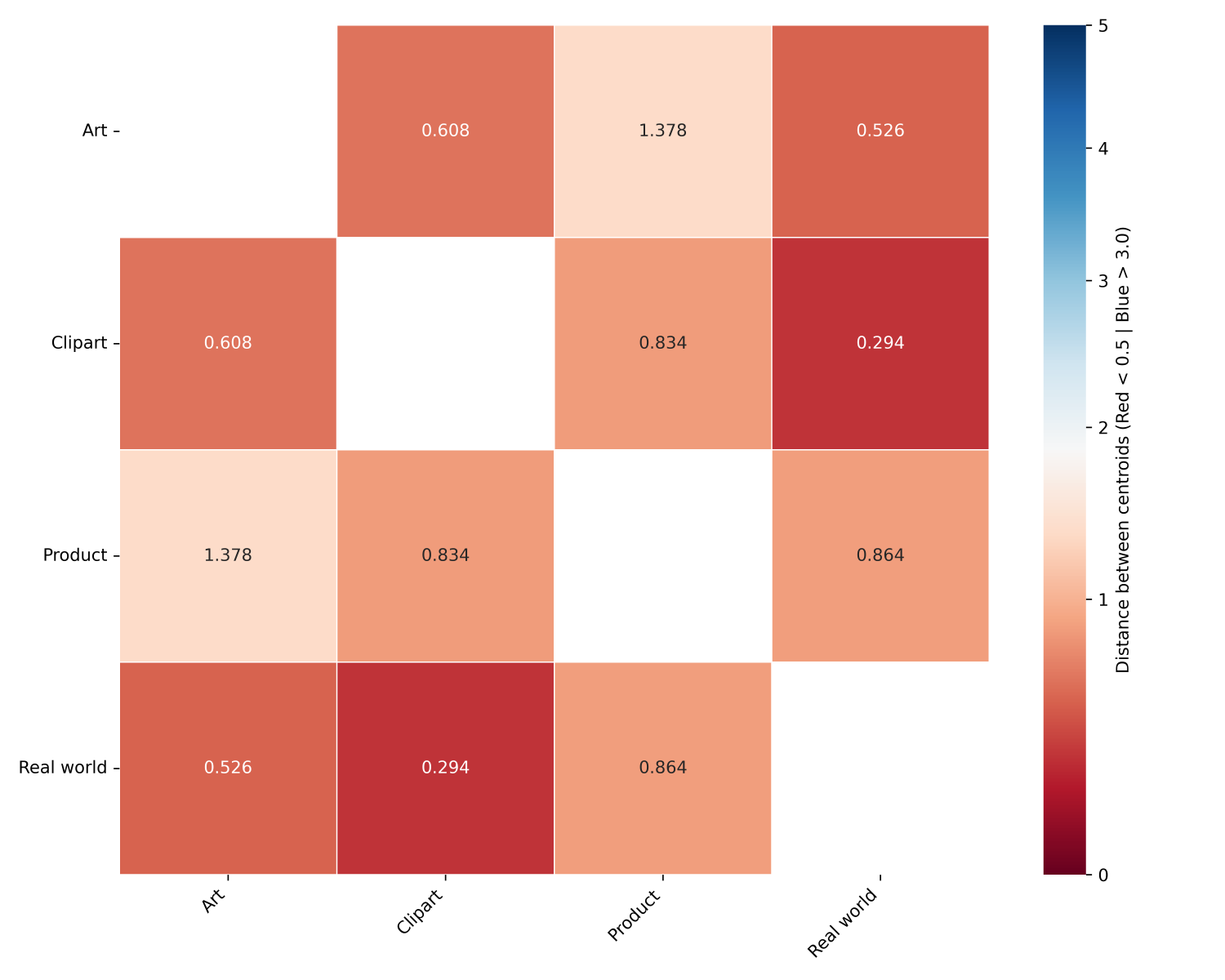}
      \label{fig:officehome_l0} 
  }
  \subfloat[PACS]{
      \includegraphics[width=0.33\textwidth]{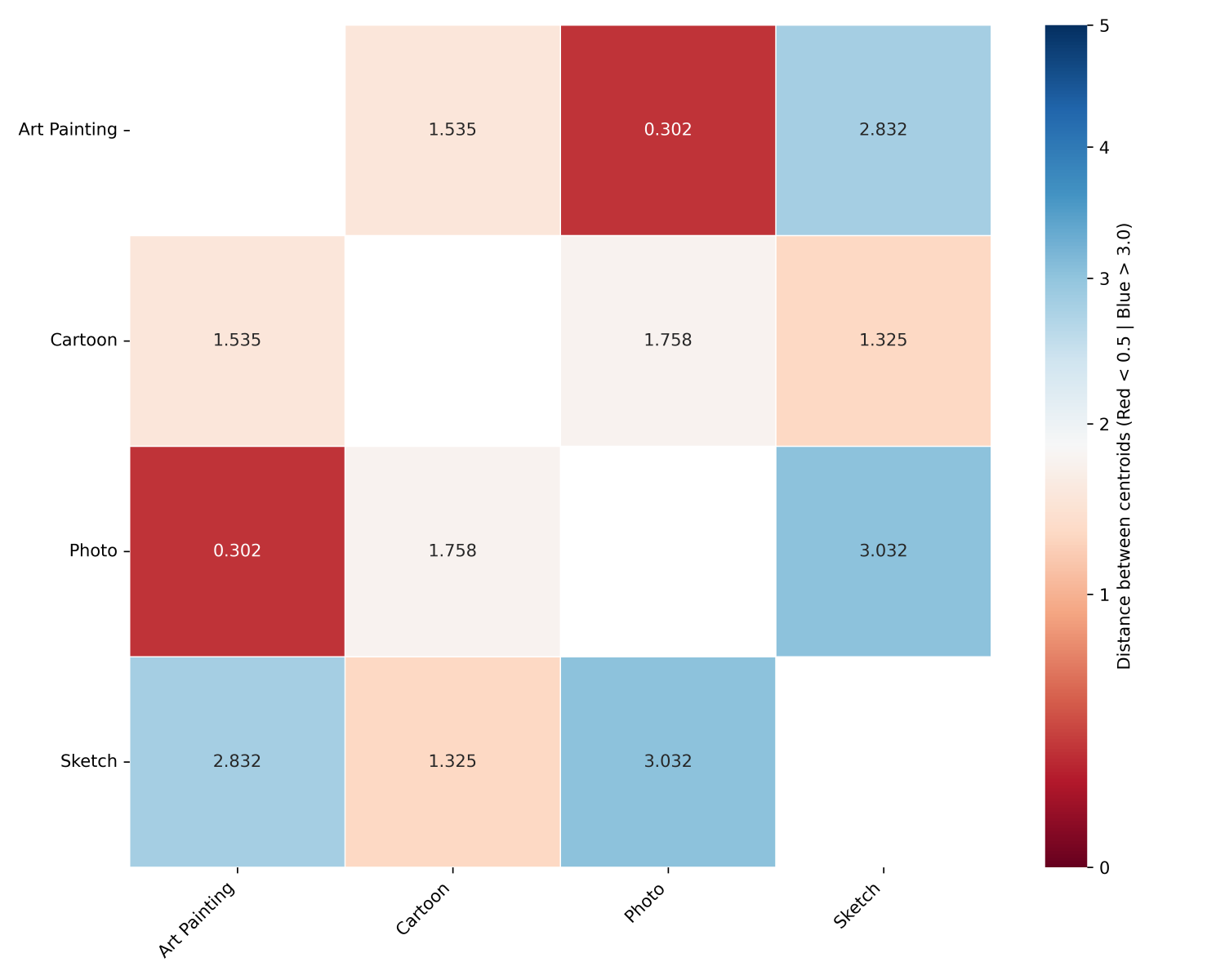}
      \label{fig:pacs_l0} 
  }
  
  \vspace{1em}
  
  \subfloat[ImageNet-C-1]{
      \includegraphics[width=0.33\textwidth]{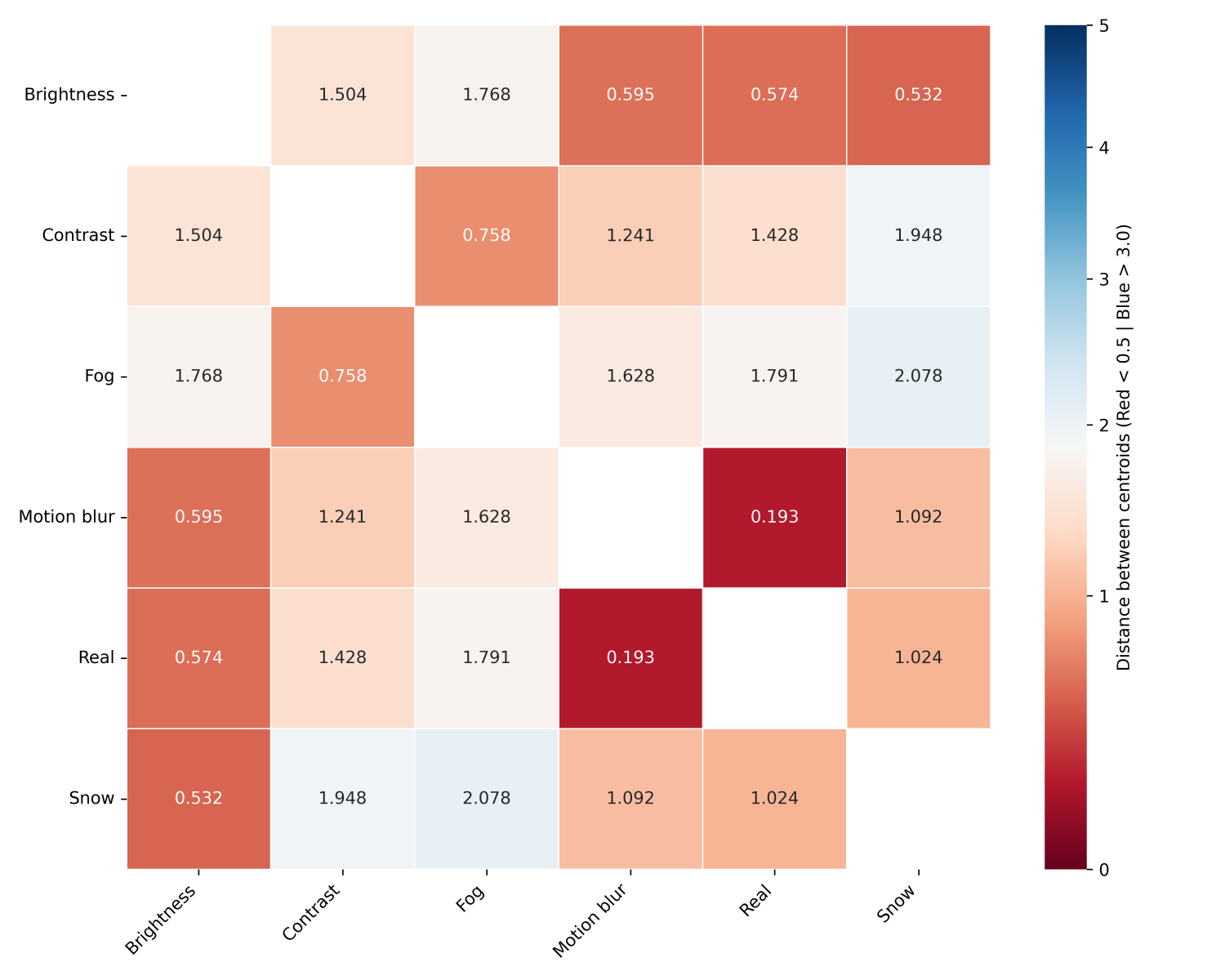}
      \label{fig:imagenet_c1_l0} 
  }  
  \subfloat[ImageNet-C-3]{
      \includegraphics[width=0.33\textwidth]{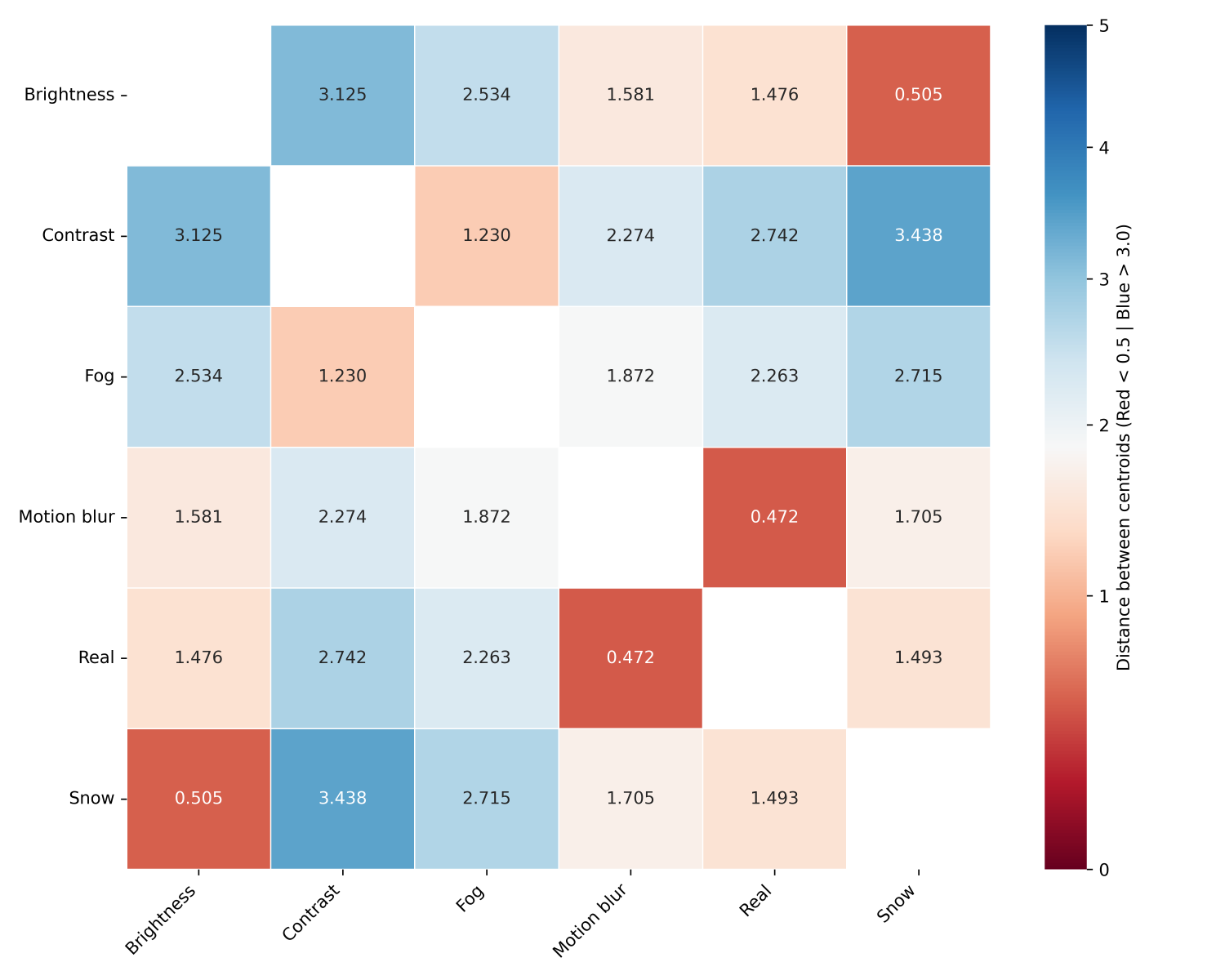}
      \label{fig:imagenet_c3_l0} 
  }
  \subfloat[ImageNet-C-5]{
      \includegraphics[width=0.33\textwidth]{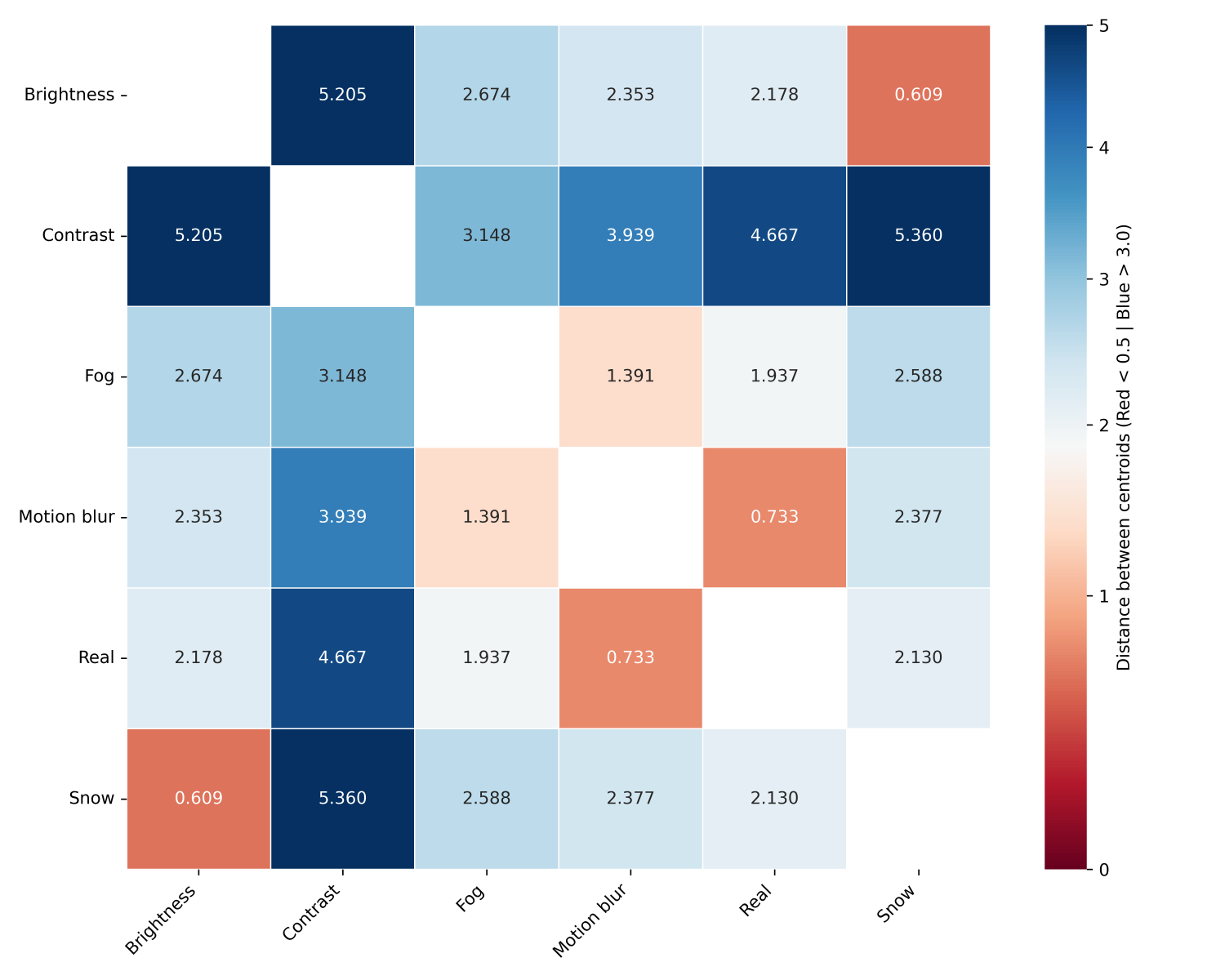}
      \label{fig:imagenet_c5_l0} 
  }
  
\caption{\textbf{Visualization of Layer 0 centroid distances across datasets for the CLIP model.} Distance matrices illustrate the spatial separation between domains right at the input embedding level. Warmer hues denote smaller centroid distances between domains, whereas cooler hues denote larger separations. At Layer 0, the domain distance distributions of DomainNet and OfficeHome closely mirror the low-severity corruptions of ImageNet-C-1. Conversely, PACS exhibits greater initial domain separation, more closely resembling the distribution seen in ImageNet-C-3.}
  \label{fig:layer0_vis_clip} 
\end{figure}

As demonstrated in Figure~\ref{fig:layer0_vis_clip}, the distances between domain feature centroids in DomainNet and OfficeHome closely resemble those of the low-severity corruptions in ImageNet-C (Severity 1). This indicates that the feature representations of these domains exhibit smooth transitions and modest distribution shifts. A notable exception is observed in the distances between the Quickdraw--Painting and Quickdraw--Real domain pairs; these demonstrate severe feature separation, mirroring the more pronounced spatial divergences characteristic of ImageNet-C-3 and ImageNet-C-5.

The PACS dataset, however, exhibits a markedly different pattern. Its inter-domain distances closely mirror those of ImageNet-C-3, indicating a moderate degree of domain separation. This offers a principled explanation for the reduced efficacy of our method on PACS: as the ImageNet-C experiments demonstrate, our approach performs optimally when domains either transition gradually or are highly separated in feature space. PACS appears to fall into an intermediate regime---analogous to ImageNet-C-3---characterized by partial overlap that is neither structured enough for smooth trajectory alignment nor distinct enough for clear separation.

\begin{figure}[ht!]
  \centering
  
  \subfloat[DomainNet]{
      \includegraphics[width=0.33\textwidth]{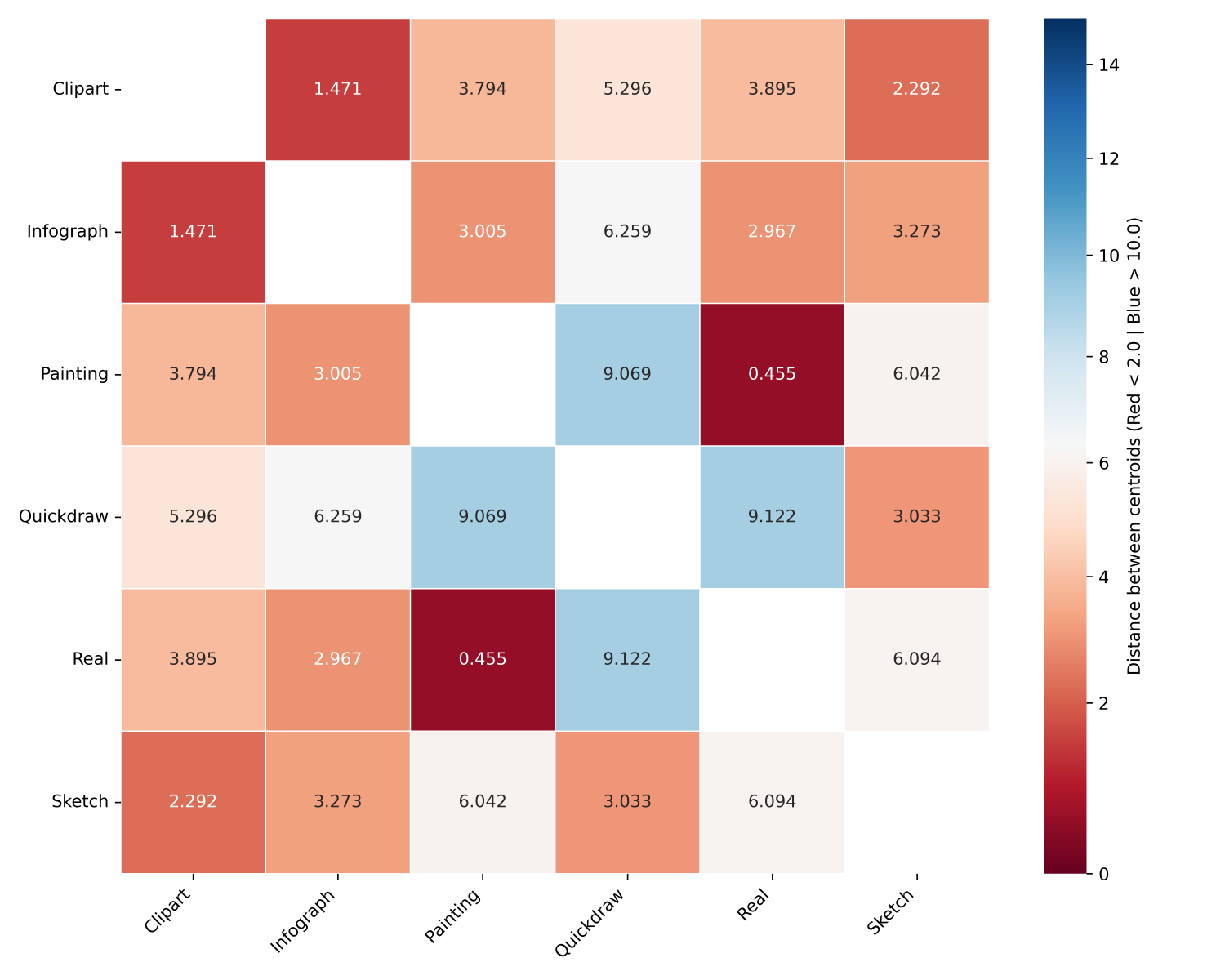}
      \label{fig:domainnet_l0_dino} 
  }  
  \subfloat[OfficeHome]{
      \includegraphics[width=0.33\textwidth]{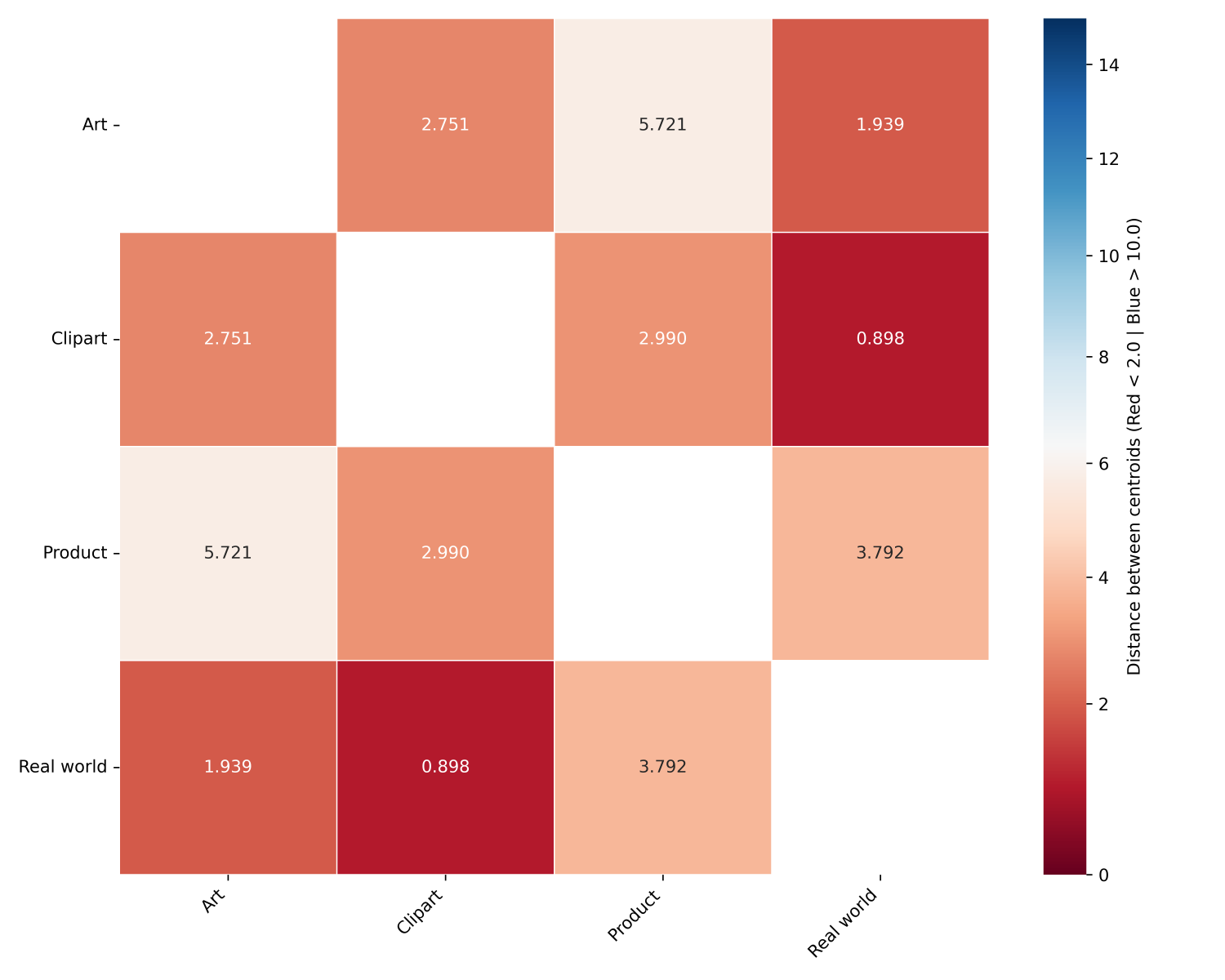}
      \label{fig:officehome_l0_dino} 
  }
  \subfloat[PACS]{
      \includegraphics[width=0.33\textwidth]{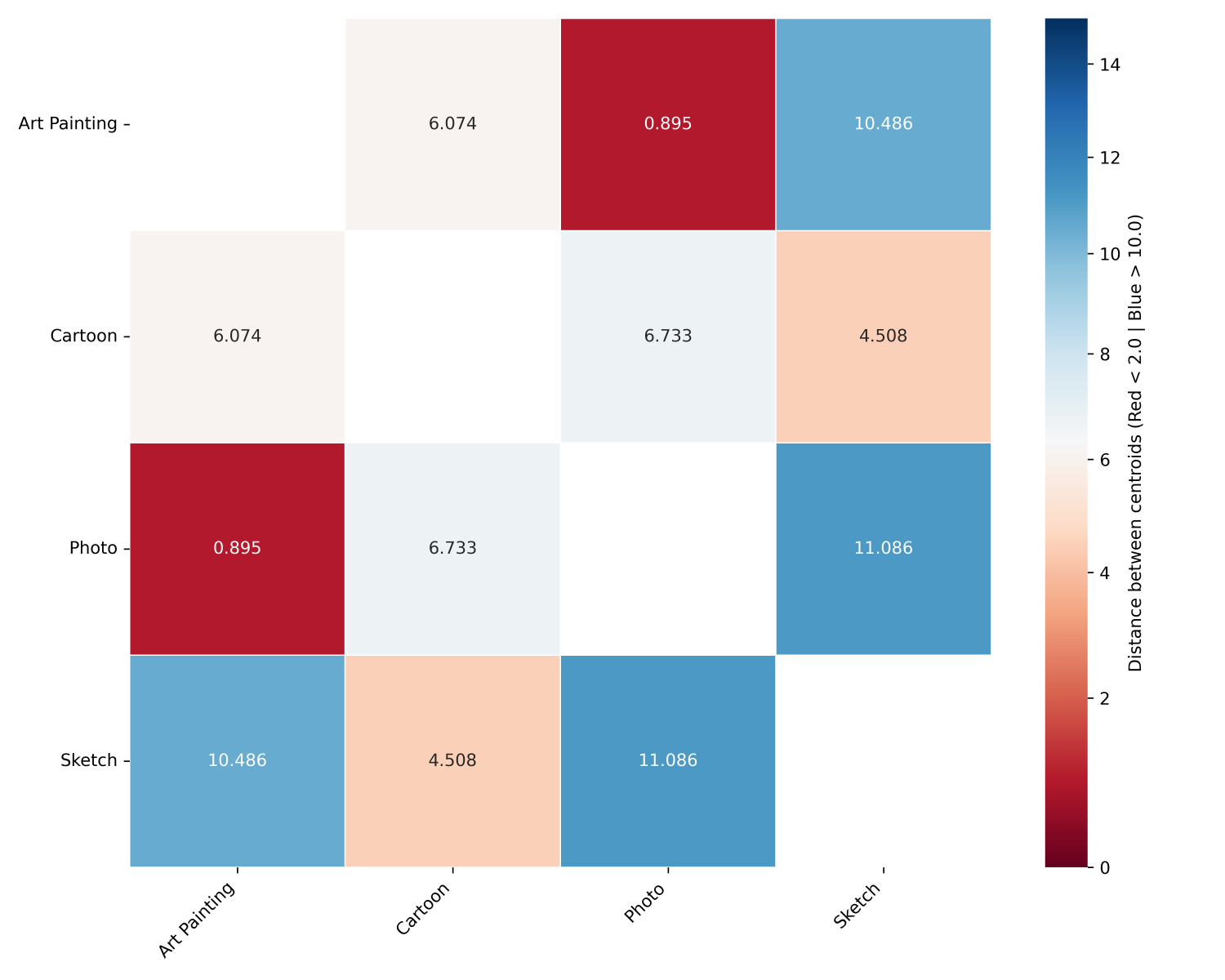}
      \label{fig:pacs_l0_dino} 
  }
  
  \vspace{1em}
  
  \subfloat[ImageNet-C-1]{
      \includegraphics[width=0.33\textwidth]{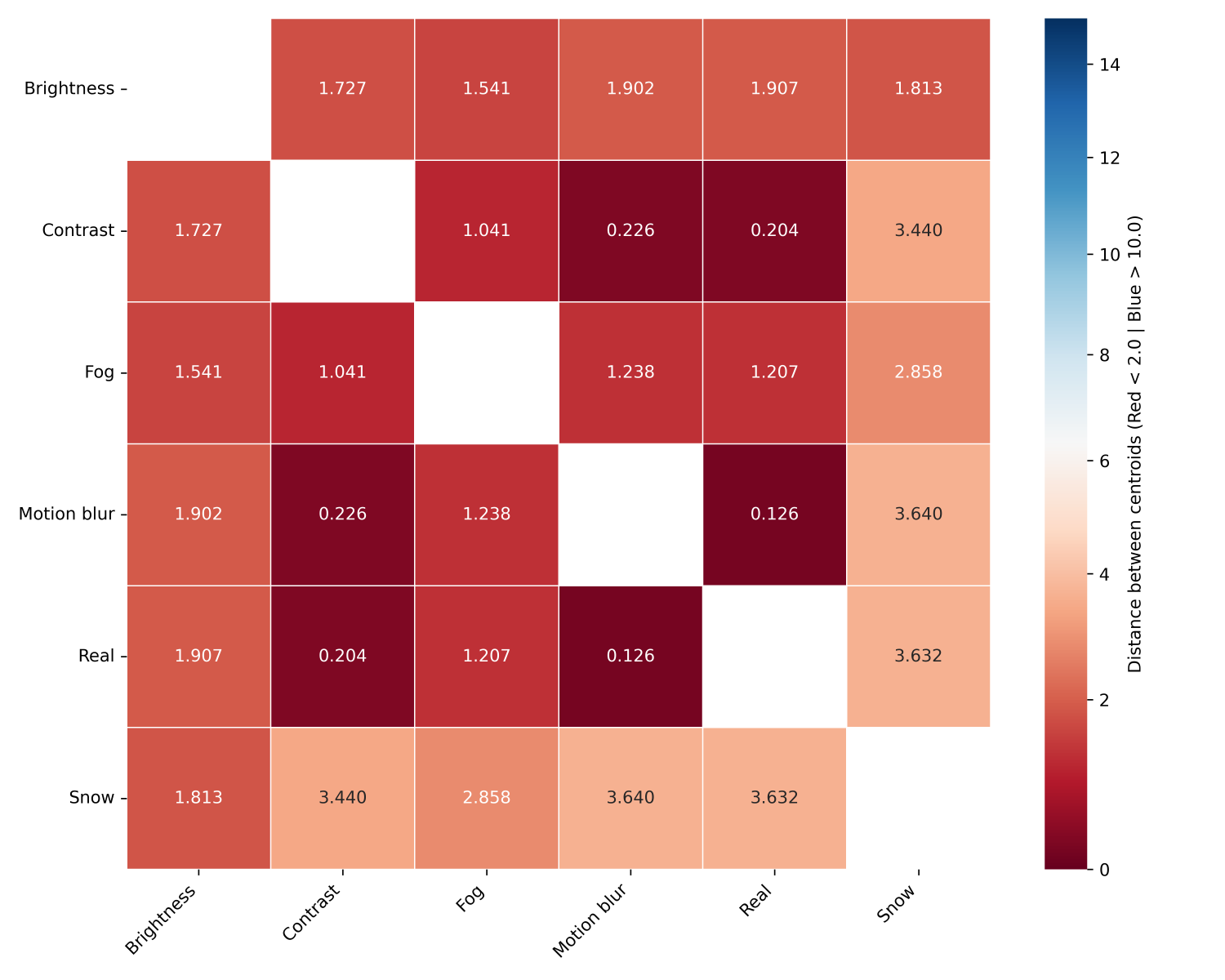}
      \label{fig:imagenet_c1_l0_dino} 
  }  
  \subfloat[ImageNet-C-3]{
      \includegraphics[width=0.33\textwidth]{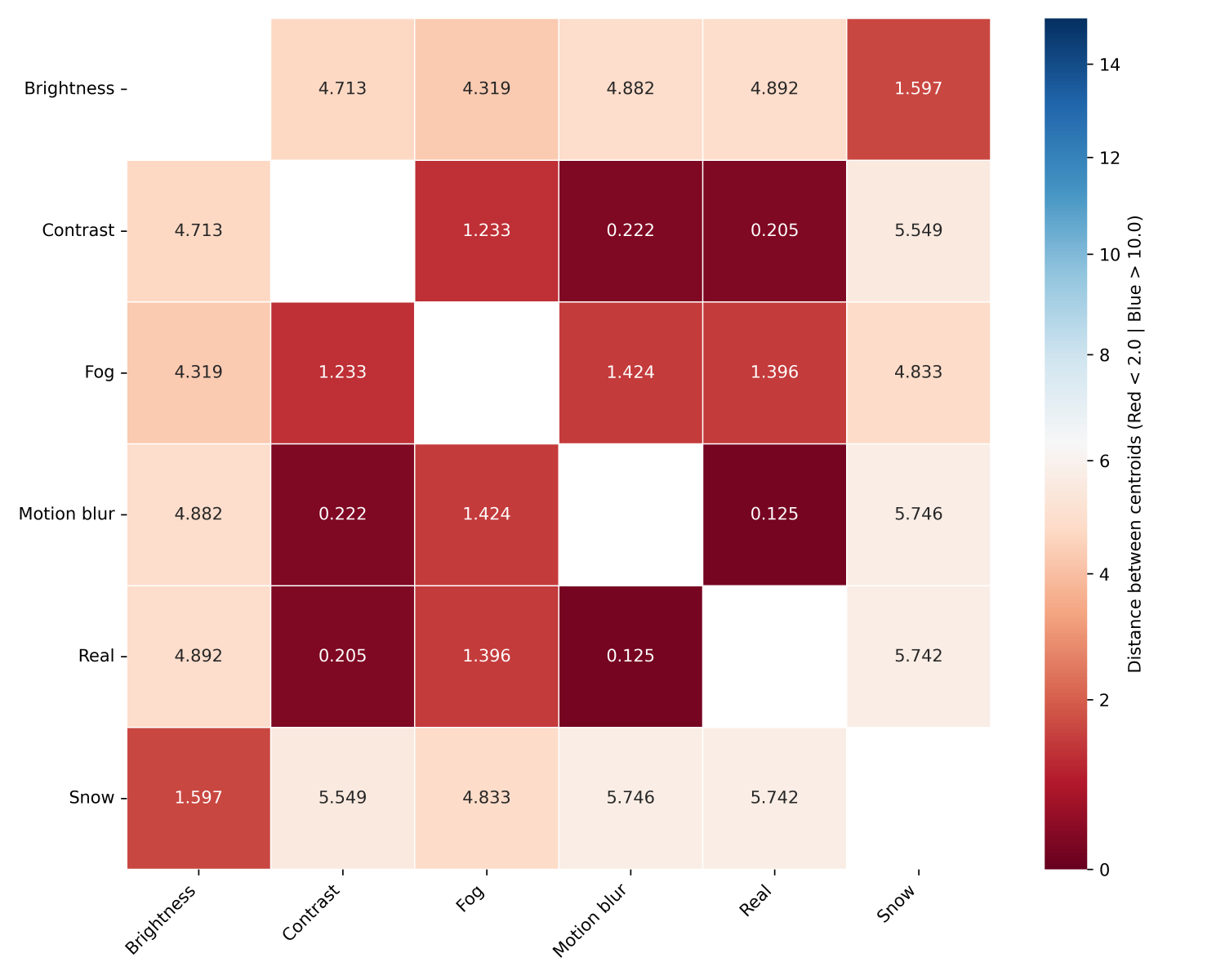}
      \label{fig:imagenet_c3_l0_dino} 
  }
  \subfloat[ImageNet-C-5]{
      \includegraphics[width=0.33\textwidth]{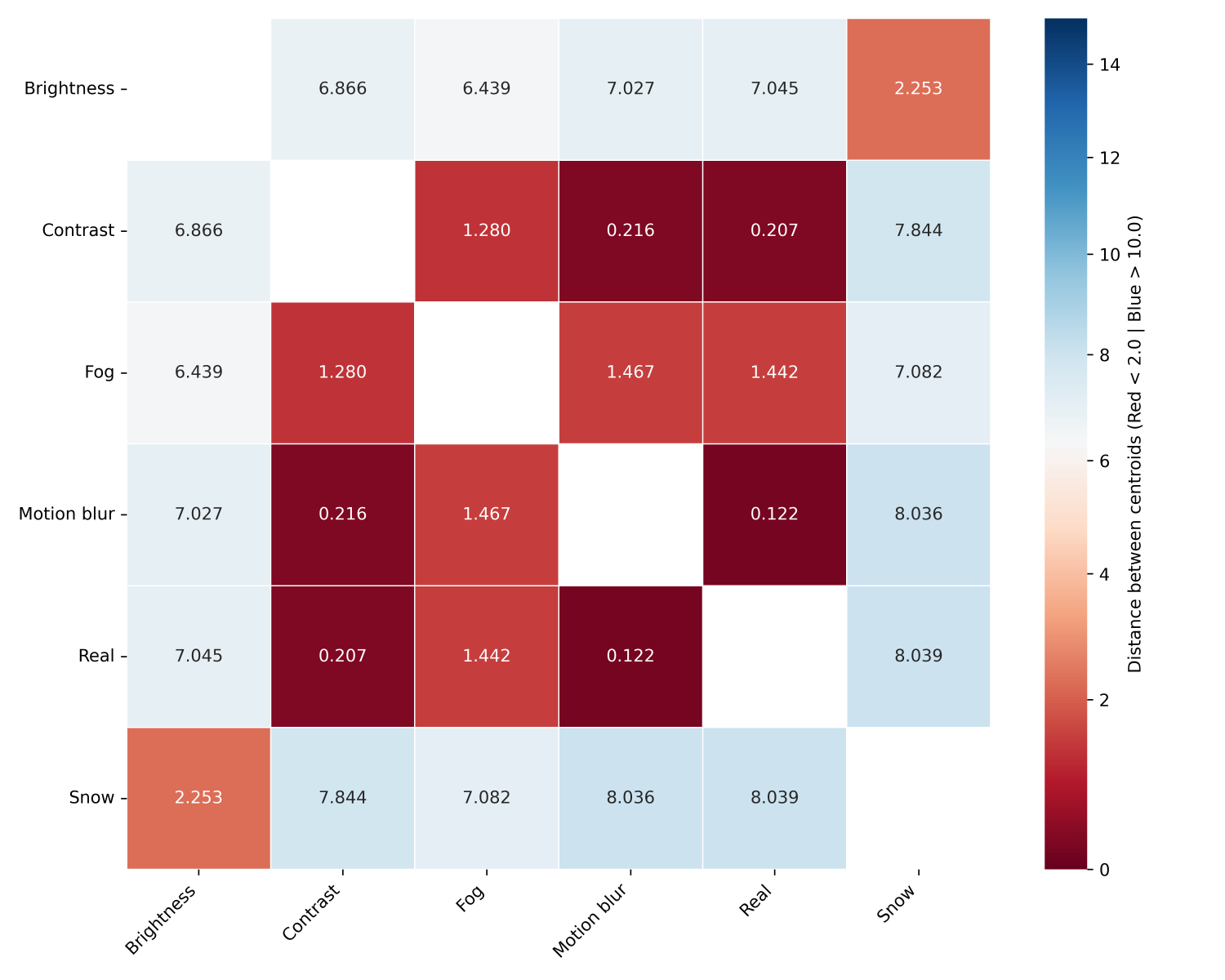}
      \label{fig:imagenet_c5_l0_dino} 
  }
  
    \caption{\textbf{Visualization of Layer 0 centroid distances across datasets for the DINOv3 model.} Distance matrices illustrate the spatial separation between domains right at the input embedding level. Warmer hues denote smaller centroid distances between domains, whereas cooler hues denote larger separations. At Layer 0, the domain distance distributions of DomainNet and OfficeHome closely mirror the ImageNet-C-1 and ImageNet-C-3 corruption severities. Conversely, PACS exhibits greater initial domain separation once more, closely resembling the distributions seen in ImageNet-C-3 and ImageNet-C-5.}
  \label{fig:layer0_vis_dino} 
\end{figure}

An analysis of the inter-feature distances in the first layer of DINOv3, presented in Figure~\ref{fig:layer0_vis_dino}, corroborates our earlier findings in the CLIP feature space. We observe that DomainNet and OfficeHome features exhibit behavior consistent with the lower ImageNet-C corruption severities (1 and 3). In contrast, PACS exhibits a broader variance, resembling the distributions associated with severities 3 and 5. This pattern suggests that domain separation in PACS is inherently challenging regardless of the underlying architecture. Specifically, it consistently exhibits the ambiguous, intermediate spatial overlaps (analogous to Severity 3) observed in the CLIP setting---a regime in which our method demonstrably struggles.

Critically, the scope of the separability condition is diagnosable in advance from Layer 0 centroid distances alone, suggesting a practical pre-screening step: compute inter-domain separation at the input embedding level before selecting a transferability metric.

\section{Ablation Studies}
\label{section:ablation}

\subsection{Feature Representation and Standardization}
\label{sec:ablation_features}

We conduct an extensive ablation study to isolate the impact of specific geometric features and representation choices within RADAR. Table~\ref{tab:combined_features} presents the results of these geometric design choices across vision and text modalities. 

\begin{table}[ht!]
    \centering
    \caption{Feature ablations for vision (DomainNet) and text (EuroEval) representations. Both modalities exhibit a strong preference for combining Distance and Angle features, though text representations show heavy reliance on angular displacement, and only text benefits from layer-wise standardization.}
    \label{tab:combined_features}
    
    \begin{subtable}[t]{\textwidth}
        \caption{Ablation study on the vision models CLIP \cite{radford2021learning} and DINOv3 \cite{simoni2025dinov3}. We observe both angles and distances are crucial.}
        \label{tab:img_feat_baseline}
        \resizebox{\textwidth}{!}{%
        \begin{tabular}{l cccccccc ccc}
        \toprule
         & \multicolumn{8}{c}{\textbf{Design choices}} & \multicolumn{3}{c}{\textbf{Mean Correlation Improvement (\%pt.)} $\downarrow$} \\ 
         \cmidrule(lr){2-9} \cmidrule(l){10-12} 
         & \textbf{Distance} & \textbf{Angle} & \textbf{Standardize} & \textbf{Sampling} & \textbf{Replace.} & \textbf{Space} & \textbf{Covariance} & \textbf{Algorithm} & \textbf{CLIP} & \textbf{DINOv3} & \textbf{Total} \\ 
        \midrule
        \multirow{4}{*}{\textbf{Features}} 
         & \xmark & \cmark & \xmark & Uniform & \xmark & Euclidean & Diag & GMM + KL & -2.66           & -5.62          & -8.28 \\
         & \cmark & \xmark & \xmark & Uniform & \xmark & Euclidean & Diag & GMM + KL & 1.34            & -7.96          & -6.62 \\
         & \cmark & \cmark & \xmark & Uniform & \xmark & Euclidean & Diag & GMM + KL & \textbf{-10.25} & {\ul -7.48}    & \textbf{-17.73} \\
         & \cmark & \cmark & \cmark & Uniform & \xmark & Euclidean & Diag & GMM + KL & {\ul -8.85 }    & \textbf{-8.31} & {\ul -17.16} \\ 
        \bottomrule
        \end{tabular}%
        }
    \end{subtable}
    
    \vspace{1.5em} 
    
    \begin{subtable}[t]{\textwidth}
        \centering
        \caption{Ablation study on the text models Qwen3-Embedding \cite{zhang2025qwen3} and EmbeddingGemma \cite{vera2025embeddinggemma}. We observe standardization provides a massive boost to text modalities, unlike vision models.}
        \label{tab:lang_feat_baseline}
        \resizebox{\textwidth}{!}{%
        \begin{tabular}{l cccccccc ccc}
        \toprule
         & \multicolumn{8}{c}{\textbf{Design choices}} & \multicolumn{3}{c}{\textbf{Mean Correlation Improvement (\%pt.)} $\downarrow$} \\ 
         \cmidrule(lr){2-9} \cmidrule(l){10-12} 
         & \textbf{Distance} & \textbf{Angle} & \textbf{Standardize} & \textbf{Sampling} & \textbf{Replace.} & \textbf{Space} & \textbf{Covariance} & \textbf{Algorithm} & \textbf{Qwen3-Embedding} & \textbf{EmbeddingGemma} & \textbf{Total} \\ 
        \midrule
        \multirow{4}{*}{\textbf{Features}} 
         & \xmark & \cmark & \xmark & Uniform & \xmark & Euclidean & Diag & GMM + KL & {\ul -5.68}    & -19.42          & -25.10 \\
         & \cmark & \xmark & \xmark & Uniform & \xmark & Euclidean & Diag & GMM + KL & 7.42           & -12.10          & -4.68 \\
         & \cmark & \cmark & \xmark & Uniform & \xmark & Euclidean & Diag & GMM + KL & -4.24          & {\ul -29.29}    & {\ul -33.53} \\
         & \cmark & \cmark & \cmark & Uniform & \xmark & Euclidean & Diag & GMM + KL & \textbf{-6.71} & \textbf{-30.75} & \textbf{-37.46} \\ 
        \bottomrule
        \end{tabular}%
        }
    \end{subtable}
\end{table}

We observe a universal synergy between distance and angular features for both modalities. Omitting either the angles or relative distances leads to severe performance degradation, with distance-only features even negatively impacting the CLIP and Qwen3-Embedding models. Notably, text representations exhibit a disproportionate reliance on angular features, yielding a total score of -25.10 alone compared to -4.68 for distance. This aligns with the widespread use of cosine-based contrastive objectives in language modeling, which inherently prioritizes angular topology. 

Furthermore, we uncover a modality-specific divergence regarding feature standardization. While vision models suffer slightly from standardizing distance and angle scales, indicating that raw magnitude variance can contain useful transferability signals, text models seem to benefit from it. Standardizing text representations neutralizes magnitude noise, allowing the combined distance and angle metric to achieve the optimal overall score of -37.46, improving the MCI by roughly 4 percentage points over the unstandardized version.
Despite the minor performance drop in vision models, we opt to apply feature standardization to both modalities to maintain a unified, modality-agnostic framework, as the substantial gains in text evaluation far outweigh the slight degradation in vision.

\subsection{Pairwise Sampling Strategies}
\label{sec:ablation_sampling}

Next, we evaluate the pairwise sampling used in our density estimation of angles and relative distances. Table~\ref{tab:combined_sampling} details the ablation comparing uniform sampling, positive (inlier-inlier) pair sampling, negative (inlier-outlier) pair sampling, and a mixture of both.

\begin{table}[ht!]
    \centering
    \caption{Sampling ablations for vision (DomainNet) and text (EuroEval) representations. Both modalities demonstrate that ``Mix'' sampling --- incorporating both positive (inlier-inlier) and negative (inlier-outlier) pairs --- yields the strongest predictive correlation, proving that mapping both class centers and class boundaries is essential.}
    \label{tab:combined_sampling}
    
    \begin{subtable}[t]{\textwidth}
        \caption{Ablation study on the vision models CLIP \cite{radford2021learning} and DINOv3 \cite{simoni2025dinov3}. We observe that Mixed sampling with replacement yields a marginal peak in total performance.}
        \label{tab:img_samp_baseline}
        \resizebox{\textwidth}{!}{%
        \begin{tabular}{l cccccccc ccc}
        \toprule
         & \multicolumn{8}{c}{\textbf{Design choices}} & \multicolumn{3}{c}{\textbf{Mean Correlation Improvement (\%pt.)} $\downarrow$} \\ 
         \cmidrule(lr){2-9} \cmidrule(l){10-12} 
         & \textbf{Distance} & \textbf{Angle} & \textbf{Standardize} & \textbf{Sampling} & \textbf{Replace.} & \textbf{Space} & \textbf{Covariance} & \textbf{Algorithm} & \textbf{CLIP} & \textbf{DINOv3} & \textbf{Total} \\ 
        \midrule
        \multirow{4}{*}{\textbf{Sampling}} 
         & \cmark & \cmark & \cmark & Positive & \xmark & Euclidean & Diag & GMM + KL & {\ul -9.25}     & -8.76           & -18.01 \\
         & \cmark & \cmark & \cmark & Negative & \xmark & Euclidean & Diag & GMM + KL & -8.45           & -8.65           & -17.10 \\
         & \cmark & \cmark & \cmark & Mix      & \xmark & Euclidean & Diag & GMM + KL & \textbf{-9.72}  & {\ul -9.87}     & {\ul -19.59} \\
         & \cmark & \cmark & \cmark & Mix      & \cmark & Euclidean & Diag & GMM + KL & -8.03           & \textbf{-11.58} & \textbf{-19.61} \\ 
        \bottomrule
        \end{tabular}%
        }
    \end{subtable}
    
    \vspace{1.5em} 
    
    \begin{subtable}[t]{\textwidth}
        \centering
        \caption{Ablation study on the text models Qwen3-Embedding \cite{zhang2025qwen3} and EmbeddingGemma \cite{vera2025embeddinggemma}. Mixed sampling significantly outperforms isolated sampling, with ``no replacement'' yielding the best results.}
        \label{tab:lang_samp_baseline}
        \resizebox{\textwidth}{!}{%
        \begin{tabular}{l cccccccc ccc}
        \toprule
         & \multicolumn{8}{c}{\textbf{Design choices}} & \multicolumn{3}{c}{\textbf{Mean Correlation Improvement (\%pt.)} $\downarrow$} \\ 
         \cmidrule(lr){2-9} \cmidrule(l){10-12} 
         & \textbf{Distance} & \textbf{Angle} & \textbf{Standardize} & \textbf{Sampling} & \textbf{Replace.} & \textbf{Space} & \textbf{Covariance} & \textbf{Algorithm} & \textbf{Qwen3-Embedding} & \textbf{EmbeddingGemma} & \textbf{Total} \\ 
        \midrule
        \multirow{4}{*}{\textbf{Sampling}} 
         & \cmark & \cmark & \cmark & Positive & \xmark & Euclidean & Diag & GMM + KL & -7.03          & {\ul -29.51}    & {\ul -36.54 } \\
         & \cmark & \cmark & \cmark & Negative & \xmark & Euclidean & Diag & GMM + KL & -6.85          & -28.19          & -35.04 \\
         & \cmark & \cmark & \cmark & Mix      & \xmark & Euclidean & Diag & GMM + KL & {\ul -7.13}    & \textbf{-30.58} & \textbf{-37.71} \\
         & \cmark & \cmark & \cmark & Mix      & \cmark & Euclidean & Diag & GMM + KL & \textbf{-7.28} & -28.40          & -35.68 \\
        \bottomrule
        \end{tabular}%
        }
    \end{subtable}
\end{table}

The results clearly demonstrate that the ``Mix'' strategy strictly dominates isolated sampling across both vision and text modalities. Relying solely on positive pairs captures only intra-class compactness, whereas relying on negative pairs maps only inter-class separation. By integrating both, RADAR comprehensively captures the topological structure of the representation space---mapping both class centers and decision boundaries---which yields the strongest predictive correlations overall, surpassing uniform sampling.

Furthermore, we evaluate the effect of sampling pairs with versus without replacement. Text modalities exhibit a strict preference for sampling without replacement, achieving an optimal total score of -37.71 compared to -35.68 with replacement, improving the MCI by roughly 2 percentage points. In those models, sampling with replacement risks reducing the geometric diversity of the sampled trajectories. Conversely, vision models exhibit high variance. While DINOv3 benefits from replacement, CLIP's performance degrades significantly. Given these dynamics, we conclude that mixed sampling \textit{without} replacement offers the most stable and universally robust configuration across diverse foundation models.

\subsection{Topological Space Formulation}
\label{sec:ablation_space}

As a subsequent evaluation, we analyze the topological space in which these angles and relative distances are computed. Table~\ref{tab:combined_space} compares the effects of computing the representations in Euclidean, Geodesic, and pseudo-Cartesian spaces prior to density estimation. 

\begin{table}[ht!]
    \centering
    \caption{Topological space ablations for vision (DomainNet) and text (EuroEval) representations. The data reveals a clear cross-modal divide: vision representations achieve peak predictability in pseudo-Cartesian space, whereas text representations heavily favor Euclidean space. However, Euclidean representation seems to be the best middle-ground.}
    \label{tab:combined_space}
    
    \begin{subtable}[t]{\textwidth}
        \caption{Ablation study on the vision models CLIP \cite{radford2021learning} and DINOv3 \cite{simoni2025dinov3}. The Cartesian projection yields the best overall performance, largely driven by DINOv3.}
        \label{tab:img_space_baseline}
        \resizebox{\textwidth}{!}{%
        \begin{tabular}{l cccccccc ccc}
        \toprule
         & \multicolumn{8}{c}{\textbf{Design choices}} & \multicolumn{3}{c}{\textbf{Mean Correlation Improvement (\%pt.)} $\downarrow$} \\ 
         \cmidrule(lr){2-9} \cmidrule(l){10-12} 
         & \textbf{Distance} & \textbf{Angle} & \textbf{Standardize} & \textbf{Sampling} & \textbf{Replace.} & \textbf{Space} & \textbf{Covariance} & \textbf{Algorithm} & \textbf{CLIP} & \textbf{DINOv3} & \textbf{Total} \\ 
        \midrule
        \multirow{3}{*}{\textbf{Space}}
         & \cmark & \cmark & \cmark & Mix      & \xmark & Euclidean & Diag & GMM + KL & \textbf{-9.72} & -9.87           & {\ul -19.59} \\
         & \cmark & \cmark & \cmark & Mix      & \xmark & Geodesic  & Diag & GMM + KL & -6.74          & {\ul -11.37}    & -18.11 \\
         & \cmark & \cmark & \cmark & Mix      & \xmark & Cartesian & Diag & GMM + KL & {\ul -8.52}    & \textbf{-12.99} & \textbf{-21.51} \\ 
        \bottomrule
        \end{tabular}%
        }
    \end{subtable}
    
    \vspace{1.5em} 
    
    \begin{subtable}[t]{\textwidth}
        \centering
        \caption{Ablation study on the text models Qwen3-Embedding \cite{zhang2025qwen3} and EmbeddingGemma \cite{vera2025embeddinggemma}. Both text foundation models strongly degrade when moved away from Euclidean space.}
        \label{tab:lang_space_baseline}
        \resizebox{\textwidth}{!}{%
        \begin{tabular}{l cccccccc ccc}
        \toprule
         & \multicolumn{8}{c}{\textbf{Design choices}} & \multicolumn{3}{c}{\textbf{Mean Correlation Improvement (\%pt.)} $\downarrow$} \\ 
         \cmidrule(lr){2-9} \cmidrule(l){10-12} 
         & \textbf{Distance} & \textbf{Angle} & \textbf{Standardize} & \textbf{Sampling} & \textbf{Replace.} & \textbf{Space} & \textbf{Covariance} & \textbf{Algorithm} & \textbf{Qwen3-Embedding} & \textbf{EmbeddingGemma} & \textbf{Total} \\ 
        \midrule
        \multirow{3}{*}{\textbf{Space}}
         & \cmark & \cmark & \cmark & Mix      & \xmark & Euclidean & Diag & GMM + KL & \textbf{-7.13} & \textbf{-30.58} & \textbf{-37.71} \\
         & \cmark & \cmark & \cmark & Mix      & \xmark & Geodesic  & Diag & GMM + KL & -4.79          & {\ul -25.60}    & -30.39 \\
         & \cmark & \cmark & \cmark & Mix      & \xmark & Cartesian & Diag & GMM + KL & {\ul -5.86}    & -24.31          & -30.17 \\
        \bottomrule
        \end{tabular}%
        }
    \end{subtable}
\end{table}

The data shows a stark cross-modal divide regarding the optimal underlying geometry. Text models exhibit a strict reliance on Euclidean space, achieving an optimal total MCI of -37.71, while projecting text embeddings into Geodesic or Cartesian spaces results in massive performance degradation.

Conversely, vision models achieve their highest total performance in a pseudo-Cartesian space. However, this total is heavily skewed by DINOv3, which uniquely thrives in Cartesian projection. CLIP, much like the text models, actually favors Euclidean space over the alternatives.

When establishing a unified, modality-agnostic configuration for RADAR, the Euclidean space emerges as the clear optimal middle-ground. It serves as the absolute best geometric space for both text models and CLIP, while remaining highly competitive for DINOv3. The severe degradation text models experience in Cartesian and Geodesic spaces far outweighs the isolated gains observed in DINOv3, making Euclidean space the most robust universal standard.

\subsection{GMM Covariance Parameterization}
\label{sec:ablation_covariance}
To evaluate how potential correlations between the angle and relative distance distributions impact density estimation, we ablate the design choice of the GMM covariance matrix. Table~\ref{tab:combined_covariance} presents this ablation for both vision and text representations, comparing Diagonal, Full, Tied, and Spherical parameterizations. 

\begin{table}[ht!]
    \centering
    \caption{GMM covariance ablations for vision (DomainNet) and text (EuroEval) representations. Both modalities show a strict preference for Diagonal covariance parameterization, avoiding the overfitting associated with Full covariance and the over-restriction of Spherical covariance.}
    \label{tab:combined_covariance}
    
    \begin{subtable}[t]{\textwidth}
        \caption{Ablation study on the vision models CLIP \cite{radford2021learning} and DINOv3 \cite{simoni2025dinov3}. Full covariance severely degrades DINOv3's predictive correlation.}
        \label{tab:img_cov_baseline}
        \resizebox{\textwidth}{!}{%
        \begin{tabular}{l cccccccc ccc}
        \toprule
         & \multicolumn{8}{c}{\textbf{Design choices}} & \multicolumn{3}{c}{\textbf{Mean Correlation Improvement (\%pt.)} $\downarrow$} \\ 
         \cmidrule(lr){2-9} \cmidrule(l){10-12} 
         & \textbf{Distance} & \textbf{Angle} & \textbf{Standardize} & \textbf{Sampling} & \textbf{Replace.} & \textbf{Space} & \textbf{Covariance} & \textbf{Algorithm} & \textbf{CLIP} & \textbf{DINOv3} & \textbf{Total} \\ 
        \midrule
        \multirow{4}{*}{\textbf{Covariance}}
         & \cmark & \cmark & \cmark & Mix      & \xmark & Euclidean & Diag & GMM + KL      & \textbf{-9.72} & \textbf{-9.87} & \textbf{-19.59} \\
         & \cmark & \cmark & \cmark & Mix      & \xmark & Euclidean & Full & GMM + KL      & -8.12          & -3.80          & -11.92 \\
         & \cmark & \cmark & \cmark & Mix      & \xmark & Euclidean & Tied & GMM + KL      & {\ul -8.46}    & -6.00          & -14.46 \\
         & \cmark & \cmark & \cmark & Mix      & \xmark & Euclidean & Spherical & GMM + KL & -7.98          & {\ul -8.47}    & {\ul -16.45} \\ 
        \bottomrule
        \end{tabular}%
        }
    \end{subtable}
    
    \vspace{1.5em} 
    
    \begin{subtable}[t]{\textwidth}
        \centering
        \caption{Ablation study on the text models Qwen3-Embedding \cite{zhang2025qwen3} and EmbeddingGemma \cite{vera2025embeddinggemma}. Spherical covariance completely collapses the metric for Qwen3-Embedding.}
        \label{tab:lang_cov_baseline}
        \resizebox{\textwidth}{!}{%
        \begin{tabular}{l cccccccc ccc}
        \toprule
         & \multicolumn{8}{c}{\textbf{Design choices}} & \multicolumn{3}{c}{\textbf{Mean Correlation Improvement (\%pt.)} $\downarrow$} \\ 
         \cmidrule(lr){2-9} \cmidrule(l){10-12} 
         & \textbf{Distance} & \textbf{Angle} & \textbf{Standardize} & \textbf{Sampling} & \textbf{Replace.} & \textbf{Space} & \textbf{Covariance} & \textbf{Algorithm} & \textbf{Qwen3-Embedding} & \textbf{EmbeddingGemma} & \textbf{Total} \\ 
        \midrule
        \multirow{4}{*}{\textbf{Covariance}}
         & \cmark & \cmark & \cmark & Mix      & \xmark & Euclidean & Diag      & GMM + KL & \textbf{-7.13} & \textbf{-30.58} & \textbf{-37.71} \\
         & \cmark & \cmark & \cmark & Mix      & \xmark & Euclidean & Full      & GMM + KL & -3.21          & {\ul-28.34}     & -31.55 \\
         & \cmark & \cmark & \cmark & Mix      & \xmark & Euclidean & Tied      & GMM + KL & {\ul-5.47}     & -26.68          & {\ul-32.15} \\
         & \cmark & \cmark & \cmark & Mix      & \xmark & Euclidean & Spherical & GMM + KL & 0.38           & -22.37          & -21.99 \\ 
        \bottomrule
        \end{tabular}%
        }
    \end{subtable}
\end{table}

Across both modalities, the Diagonal covariance structure strictly outperforms all alternative configurations, yielding the lowest MCI. 

In the vision domain, utilizing a Full covariance matrix severely degrades predictive performance for DINOv3, almost halving its correlation improvement. This degradation indicates that the Full covariance matrix likely leads to overfitting on the sampled geometric trajectories. 

Conversely, in the text domain, over-restricting the model proves equally detrimental. Applying a Spherical covariance matrix, which forces uniform variance across all dimensions, completely collapses RADAR's predictive power on the Qwen3-Embedding model, resulting in a degradation in MCI compared to the baseline. Ultimately, the Diagonal covariance strikes the optimal bias-variance tradeoff, with the Tied configuration emerging as the second-best option. The Diagonal covariance provides sufficient flexibility to capture the independent variances of angular and distance features without the parameter explosion and overfitting of a Full covariance matrix.

\subsection{Optimal Transport and Divergence Metrics}
\label{sec:ablation_optTrans}

Lastly, we ablate the algorithm used to measure the distributional shift between the source and target geometric representations. Table~\ref{tab:combined_algorithm} compares our continuous density estimation approach (GMM + KL) against standard discrete optimal transport algorithms (Sinkhorn, Sliced-Wasserstein Distance) and kernel-based metrics (Maximum Mean Discrepancy). 

\begin{table}[ht!]
    \centering
    \caption{Optimal transport algorithm ablations for vision (DomainNet) and text (EuroEval) representations. While standard algorithms like Sinkhorn perform reasonably well on text embeddings, GMM + KL is the only algorithm that demonstrates robust, generalized predictive correlation across both vision and text modalities.}
    \label{tab:combined_algorithm}
    
    \begin{subtable}[t]{\textwidth}
        \caption{Ablation study on the vision models CLIP \cite{radford2021learning} and DINOv3 \cite{simoni2025dinov3}. Standard optimal transport metrics (SWD, Sinkhorn) collapse, while MMD experiences catastrophic failure.}
        \label{tab:img_alg_baseline}
        \resizebox{\textwidth}{!}{%
        \begin{tabular}{l cccccccc ccc}
        \toprule
         & \multicolumn{8}{c}{\textbf{Design choices}} & \multicolumn{3}{c}{\textbf{Mean Correlation Improvement (\%pt.)} $\downarrow$} \\ 
         \cmidrule(lr){2-9} \cmidrule(l){10-12} 
         & \textbf{Distance} & \textbf{Angle} & \textbf{Standardize} & \textbf{Sampling} & \textbf{Replace.} & \textbf{Space} & \textbf{Covariance} & \textbf{Algorithm} & \textbf{CLIP} & \textbf{DINOv3} & \textbf{Total} \\ 
        \midrule
        \multirow{4}{*}{\textbf{Algorithm}}
         & \cmark & \cmark & \cmark & Mix      & \xmark & Euclidean & Diag & GMM + KL    & \textbf{-9.72} & \textbf{-9.87} & \textbf{-19.59} \\
         & \cmark & \cmark & \cmark & Mix      & \xmark & Euclidean & Diag & SWD         & 3.61           & {\ul -5.07}    & {\ul -1.46} \\
         & \cmark & \cmark & \cmark & Mix      & \xmark & Euclidean & --   & MMD (Gauss) & 62.32          & 61.45          & 123.77 \\
         & \cmark & \cmark & \cmark & Mix      & \xmark & Euclidean & --   & Sinkhorn    & {\ul 2.81}     & -4.24          & -1.43 \\
        \bottomrule
        \end{tabular}%
        }
    \end{subtable}
    
    \vspace{1.5em} 
    
    \begin{subtable}[t]{\textwidth}
        \centering
        \caption{Ablation study on the text models Qwen3-Embedding \cite{zhang2025qwen3} and EmbeddingGemma \cite{vera2025embeddinggemma}. While MMD and Sinkhorn show competitive performance for specific architectures, GMM + KL provides the best overall correlation.}
        \label{tab:lang_alg_baseline}
        \resizebox{\textwidth}{!}{%
        \begin{tabular}{l cccccccc ccc}
        \toprule
         & \multicolumn{8}{c}{\textbf{Design choices}} & \multicolumn{3}{c}{\textbf{Mean Correlation Improvement (\%pt.)} $\downarrow$} \\ 
         \cmidrule(lr){2-9} \cmidrule(l){10-12} 
         & \textbf{Distance} & \textbf{Angle} & \textbf{Standardize} & \textbf{Sampling} & \textbf{Replace.} & \textbf{Space} & \textbf{Covariance} & \textbf{Algorithm} & \textbf{Qwen3-Embedding} & \textbf{EmbeddingGemma} & \textbf{Total} \\ 
        \midrule
        \multirow{4}{*}{\textbf{Algorithm}}
         & \cmark & \cmark & \cmark & Mix      & \xmark & Euclidean & Diag & GMM + KL    & -7.13           & \textbf{-30.58} & \textbf{-37.71} \\
         & \cmark & \cmark & \cmark & Mix      & \xmark & Euclidean & Diag & SWD         & 6.07            & -19.05          & -12.98 \\
         & \cmark & \cmark & \cmark & Mix      & \xmark & Euclidean & --   & MMD (Gauss) & \textbf{-24.04} & -9.04           & -33.08 \\
         & \cmark & \cmark & \cmark & Mix      & \xmark & Euclidean & --   & Sinkhorn    & {\ul -13.86}    & {\ul -23.50}    & {\ul -37.36} \\ 
        \bottomrule
        \end{tabular}%
        }
    \end{subtable}
\end{table}

The data reveals a stark difference in how vision and text modalities respond to these algorithms. In the text domain, discrete and kernel-based methods perform remarkably well, with Sinkhorn and MMD achieving a highly competitive total MCI. However, in the vision domain, these same methods collapse. Sinkhorn actively degrades predictive correlation for CLIP, while MMD experiences a catastrophic failure across both vision architectures. This indicates that operating directly on the empirical point clouds of vision trajectories introduces excessive noise, causing point-to-point matching to fail. 

Fitting a GMM and computing the KL divergence emerges as the only universally robust algorithm. By mapping the discrete samples to a continuous probability density function, the GMM smooths out the empirical noise that derails other algorithms in vision models, while simultaneously capturing the precise structural alignments required to achieve optimal predictive correlation in text models. Consequently, \textit{GMM + KL} is strictly required to maintain a unified, modality-agnostic framework.

\subsection{Ablations on the robustness of hyperparameters}
\label{appendix:further_ablation}
To demonstrate the robustness of our method to various hyperparameter configurations, we conduct a comprehensive hyperparameter ablation study. Specifically, we analyze the impact of the number of sampled pairs $N$, the window radius $\ell$, and the sampling temperature $\tau$ on RADAR's overall performance.

\subsubsection{Ablation on the RADAR Sample Size}
\label{appendix:radar_sample}
In this subsection, we evaluate the stability of the RADAR metric across varying sub-sample sizes. As illustrated in Figure~\ref{fig:ablation_N_main}, we compare the metric's performance at lower sample counts against a high-fidelity baseline. Specifically, we report the Spearman rank correlation $\rho$ between RADAR scores computed with smaller sample sizes $N$ and those computed with a high-fidelity sample count of $N=65536$.

\begin{figure}[ht!]
  \centering
  \subfloat[CLIP]{
      \includegraphics[width=0.47\textwidth]{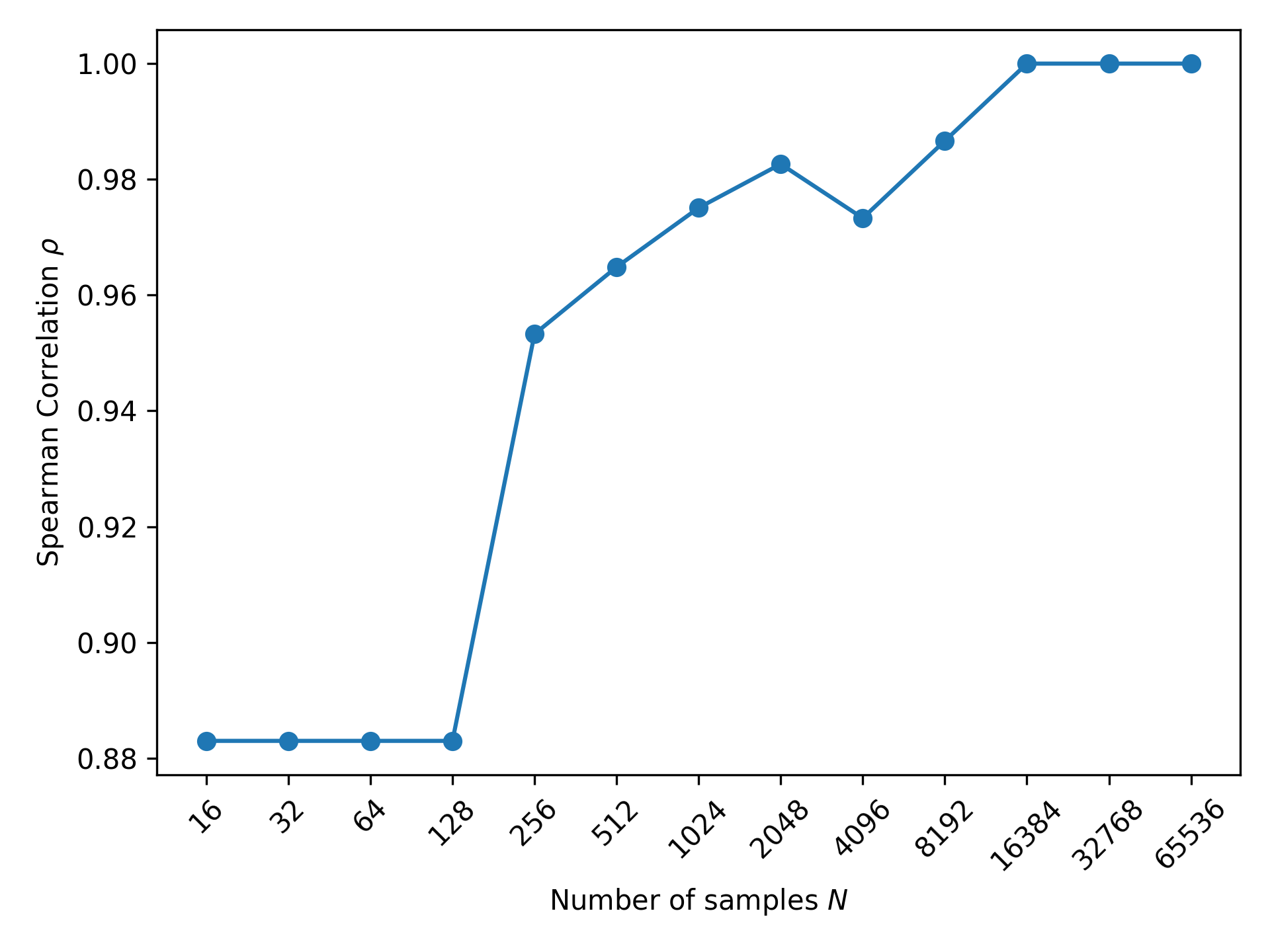}
      \label{fig:ablation_N_clip} 
  }  
  \hfill
  \subfloat[DINOv3]{
      \includegraphics[width=0.47\textwidth]{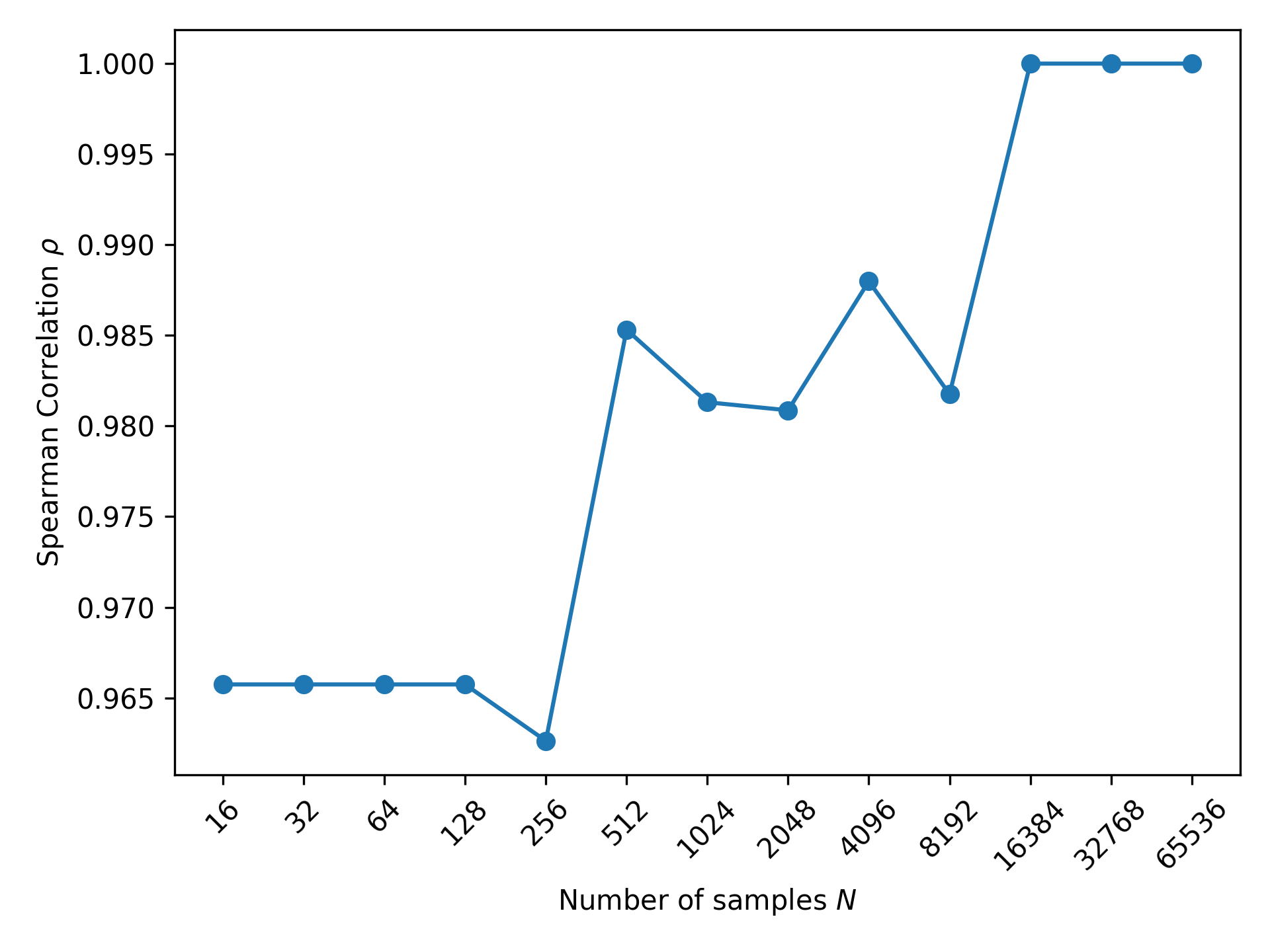}
      \label{fig:ablation_N_dino} 
  }
  \caption{\textbf{Effect of sample size ($N$) on RADAR metric stability.} The plots illustrate the Spearman rank correlation ($\rho$) of RADAR computed at various pair-wise sub-sample sizes ($N \in \{16,32,64,\dots,65536\}$) relative to a high-fidelity baseline computed with $N=65536$ on the DomainNet \cite{peng2019moment} dataset.}
  \label{fig:ablation_N_main} 
\end{figure}

We observe that for sample sizes $N \leq 128$, the CLIP model maintains a Spearman correlation exceeding 0.88, which increases to greater than 0.95 at $N=256$. For the DINOv3 model, RADAR demonstrates even greater sample efficiency, achieving a strong correlation of 0.965 with a drastically reduced sample size of just $N=16$. For both models, increasing $N$ further leads to a steady performance increase, ultimately plateauing at $N=16384$ where RADAR achieves a perfect correlation of 1.0 with the high-fidelity baseline of $N=65536$. For our experiments, we choose $N=32768$ to ensure evaluations fall safely within this stable regime.

\subsubsection{Ablation on the RADAR window size}
\label{appendix:radar_window}
Furthermore, we ablate the window size parameter $\ell$ to determine its impact on the metric's predictive stability. Figure~\ref{fig:ablation_ell_main} demonstrates the Spearman rank correlation across different window radii, measured against a baseline RADAR configuration computed at $\ell=6$.

\begin{figure}[ht!]
  \centering
  \subfloat[CLIP]{
      \includegraphics[width=0.47\textwidth]{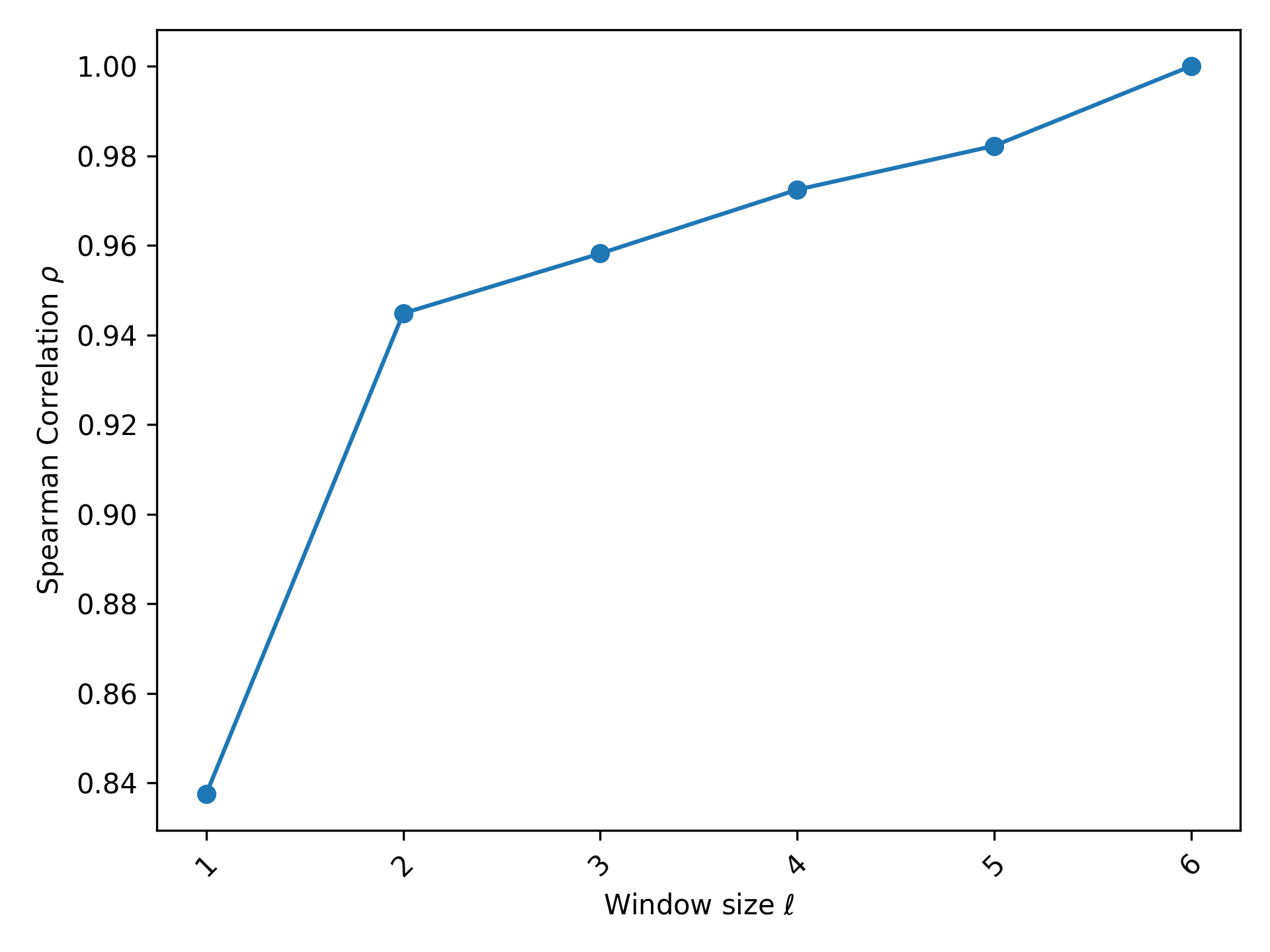}
      \label{fig:ablation_ell_clip} 
  }  
  \hfill
  \subfloat[DINOv3]{
      \includegraphics[width=0.47\textwidth]{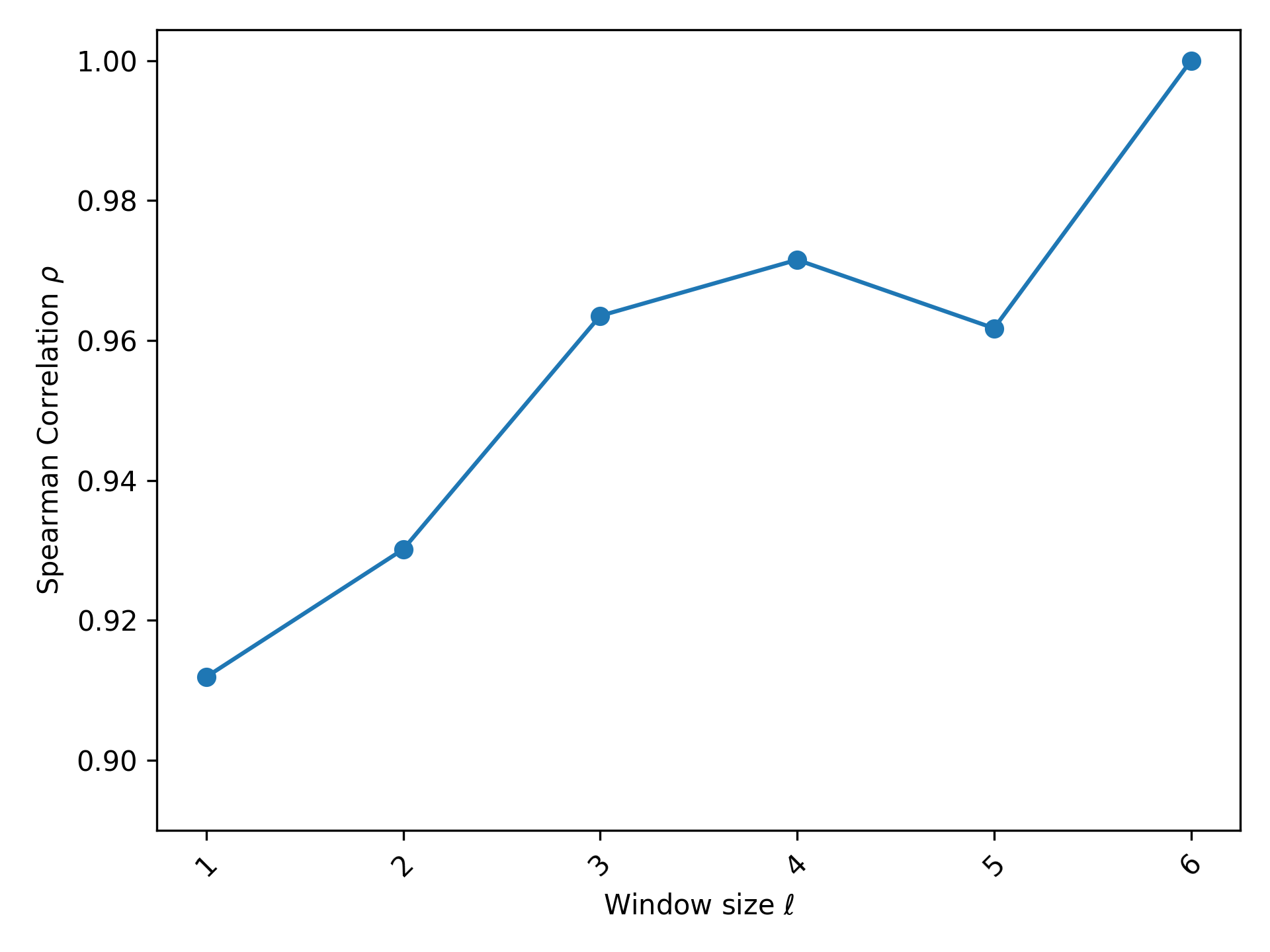}
      \label{fig:ablation_ell_dino} 
  }
  \caption{\textbf{Effect of the window size ($\ell \in \{1,2,3,4,5,6\}$) on RADAR metric stability.} The plots illustrate the Spearman rank correlation ($\rho$) of RADAR computed at various window sizes relative to a high-fidelity baseline computed with $\ell=6$ on the DomainNet \cite{peng2019moment} dataset.}
  \label{fig:ablation_ell_main} 
\end{figure}

Similar to the sample size $N$, we observe a strong correlation for both models even at the minimal window radius of $\ell=1$. The CLIP model yields a correlation of approximately 0.84, while DINOv3 starts at roughly 0.91. As $\ell$ increases, the CLIP model experiences a sharp jump to roughly 0.95 at $\ell=2$. Conversely, DINOv3 sees its steepest ascent at $\ell=3$, reaching a Spearman correlation of approximately 0.97. Overall, performance for both models rises steadily as $\ell$ grows, with CLIP achieving a 0.98 correlation at $\ell=5$. Although DINOv3 experiences a minor drop at $\ell=5$, the metric remains highly correlated overall, demonstrating that longer, dynamic trajectories provide increasingly stable transferability estimates. Therefore, we set $\ell=6$ as our default hyperparameter.

\subsubsection{Ablation on the RADAR weighting temperature}
\label{appendix:radar_temp}

Finally, we ablate the RADAR temperature parameter $\tau$, which controls the smoothness of the inlier-inlier and inlier-outlier sampling distributions, as detailed in Appendix~\ref{appendix:radar_params}. Figure~\ref{fig:ablation_tau_main} demonstrates the Mean Correlation Improvement (MCI) across different temperature settings achieved on the DomainNet \cite{peng2019moment} dataset.

\begin{figure}[ht!]
  \centering
  \subfloat[CLIP]{
      \includegraphics[width=0.47\textwidth]{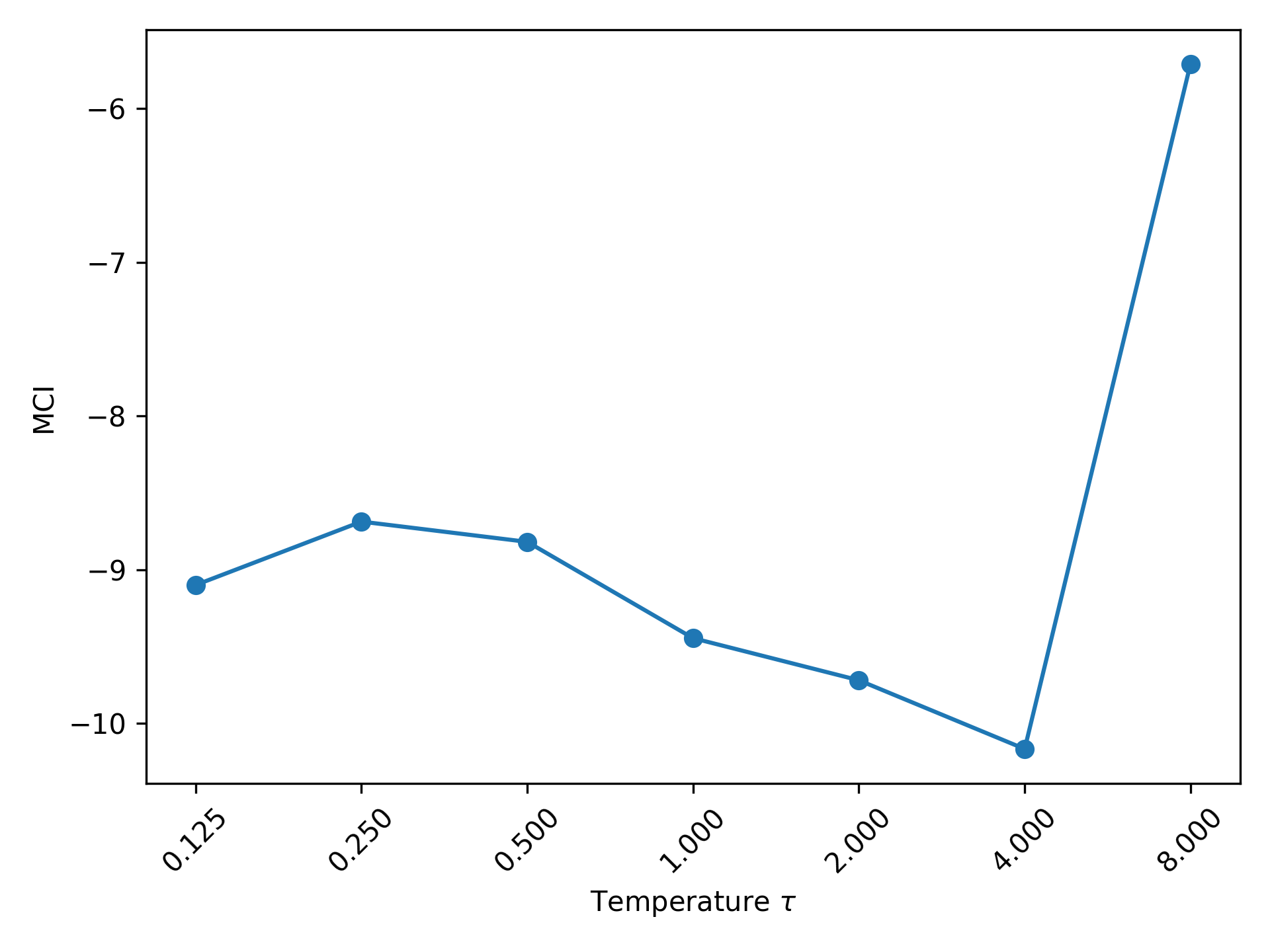}
      \label{fig:ablation_tau_clip} 
  }  
  \hfill
  \subfloat[DINOv3]{
      \includegraphics[width=0.47\textwidth]{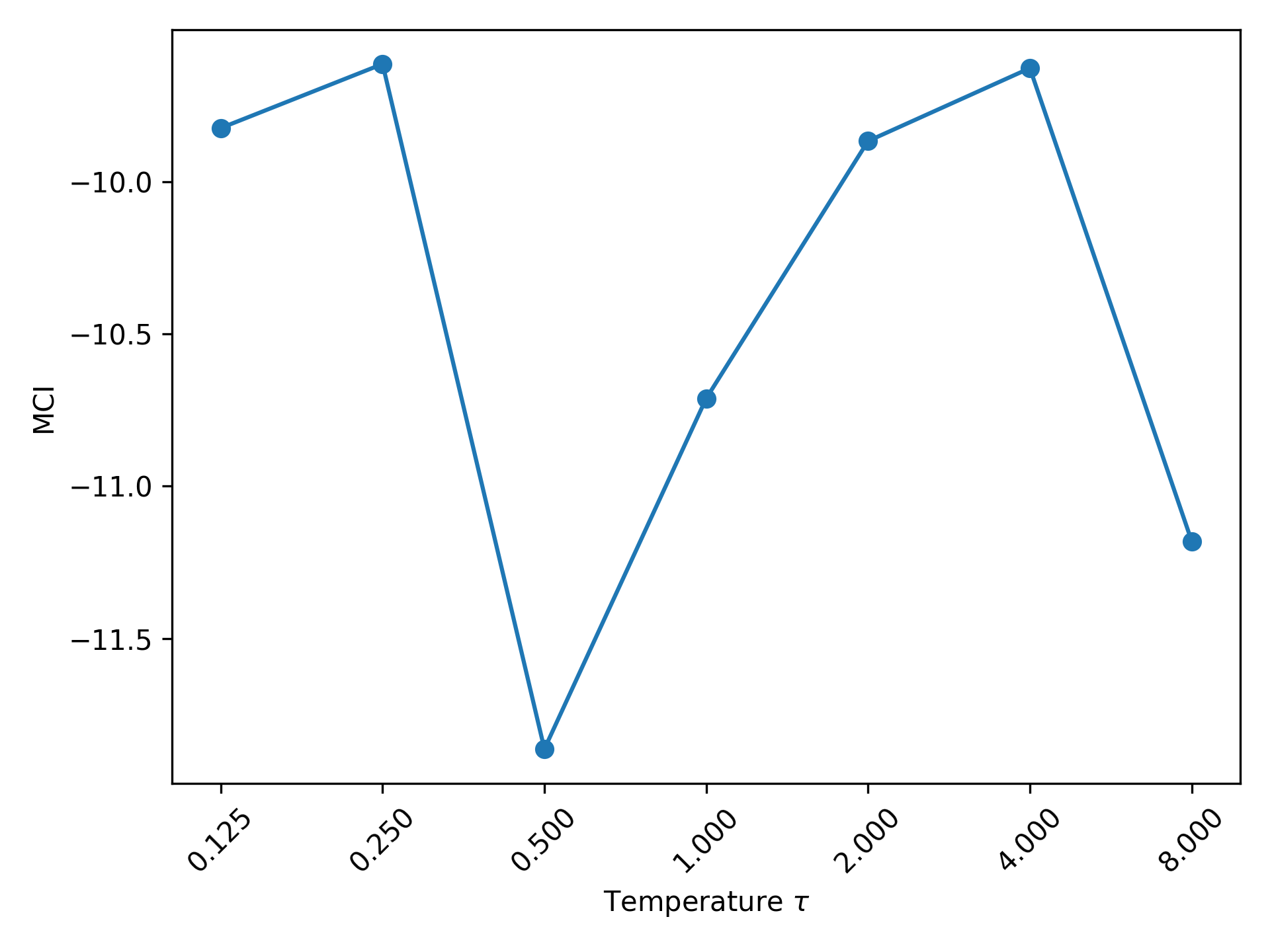}
      \label{fig:ablation_tau_dino} 
  }
  \caption{\textbf{Effect of the temperature $\tau$ ($\tau \in \{0.125,0.25,0.5,1,2,4,8\}$) on RADAR metric stability.} The plots illustrate the Mean Correlation Improvement (MCI) of the RADAR metric computed across various weighting temperatures on the DomainNet \cite{peng2019moment} dataset.}
  \label{fig:ablation_tau_main}
\end{figure}

For the CLIP model, the MCI exhibits a slight initial degradation as $\tau$ increases from 0.125 to 0.250, followed by a steady improvement as $\tau$ ranges from 0.5 to 4.0. However, we observe a catastrophic performance collapse at $\tau=8$, suggesting that oversmoothing the sampling distribution is highly detrimental. In contrast, the DINOv3 model remains remarkably robust across the 0.125--8.0 range, with a total MCI variation of only 2.0 percentage points (pp.). Given that CLIP exhibits a much higher sensitivity---with a more than 4.0 pp. spread over the same range---we prioritize its stability and select $\tau=2.0$ as our default configuration to avoid oversmoothing while maintaining peak performance across both architectures.

\section{Experiment parameters}
\label{appendix:experiment_params}

\subsection{RADAR parameters}
\label{appendix:radar_params}

As a first step of our metric, we apply layer-wise standardization to the geometric features. Specifically, we sample $2^{20}$ representational pairs to compute the baseline statistics, which are then used to normalize the angles and relative distances to a zero mean and unit variance. This standardization is critical as it ensures that the angular and distance components are evaluated on the same scale, preventing either feature from disproportionately dominating the Gaussian Mixture Model during density estimation.

Pairwise sampling is conducted using the weighted strategy detailed in Section~\ref{method:sampling}. To reduce the variance of the sampling weights and encourage broader exploration, we apply temperature scaling:
\begin{equation}
    w_{reg}(x) = \exp\left(-\frac{w(x)}{\tau}\right),
\end{equation}
where the temperature hyperparameter $\tau$ is set to $2.0$. Finally, to guarantee that every pair maintains a non-zero probability of selection, the resulting weights $w_{reg}(x)$ are strictly bounded within the interval $[0.1, 1.0]$.

For the density estimation step in RADAR, we employ a Gaussian Mixture Model (GMM) with strictly controlled parameters to ensure reproducibility. The number of mixture components, $K$, is dynamically configured as $K = \min(2C, 50)$, where $C$ is the number of unique classes in the target dataset. Allocating twice as many components as classes serves a critical regularization purpose: it provides the GMM with sufficient structural flexibility to model multi-modal distributions within individual classes, effectively capturing both inlier-inlier and inlier-outlier pairs for computing the angles and relative distances. Initialization is performed via the $k$-means algorithm with a single run and capped at 100 Expectation-Maximization (EM) iterations. To maintain computational stability and enforce positive semi-definite covariance matrices, we apply a diagonal regularization constant of $1 \times 10^{-3}$. Finally, the covariance type remains configurable per experiment, and the random state is fixed.

For the remaining RADAR hyperparameters, we sample $N=32768$ pairs and employ a window radius of $\ell=6$. This configuration provides sufficient resolution to comprehensively capture the representations' multi-layer geometric trajectories. Importantly, RADAR exhibits strong robustness to these specific choices. A detailed ablation study confirming this stability is provided in Appendix~\ref{appendix:further_ablation}.

\subsection{MLP probe parameters}
\label{appendix:mlp_params}
To evaluate the quality of the learned representations, we train a Multi-Layer Perceptron (MLP) probe. The probe architecture consists of an initial linear layer that projects the input representations from the model's hidden dimension to an intermediate size of 512, followed by a ReLU activation function. To mitigate overfitting, we apply dropout regularization with a probability of $p=0.1$. Finally, a second linear layer projects the intermediate features to the target number of classes.

The MLP probe is trained for a maximum of 100 epochs using a batch size of 256. We optimize the network using the AdamW optimizer with a learning rate of 0.01 and a weight decay of $1 \times 10^{-4}$. To mitigate overfitting, we employ early stopping with a patience of 10 epochs. To account for class imbalance during training, we apply a class-weighted Cross-Entropy loss. Specifically, the weight for each class $c$, denoted as $w_c$, is computed to be inversely proportional to its sample count $N_c$. To maintain the overall magnitude of the loss, these weights are normalized such that their sum equals the total number of classes $C$. Formally, the weight for class $c$ is defined as:
\begin{equation}
    w_c = C \frac{N_c^{-1}}{\sum_{j=1}^{C} N_j^{-1}}.
\end{equation}
To ensure the robustness of our evaluation and account for initialization variance, we train the MLP probes over $r=10$ distinct random seeds for each source-target pair. Across all runs, we observe rapid convergence, with early stopping consistently triggered before epoch 40.

\section{Datasets Used}
\label{appendix:datasets}

To evaluate the cross-modal applicability of RADAR, we utilize diverse datasets across both vision (three benchmarks) and text (two benchmarks). While EuroEval \cite{nielsen2025encoder} and PACS \cite{li2017deeperbroaderartierdomain} are used in their entirety, we strategically subsample OfficeHome \cite{venkateswara2017deep}, DomainNet \cite{peng2019moment}, and Amazon Reviews \cite{muennighoff2023mteb} to ensure tractable evaluation. Specifically, we restrict the size of DomainNet and OfficeHome by adopting sampling protocols standard in the domain adaptation literature \cite{zhang2022survey}. Furthermore, we uniformly downsample the Amazon Reviews dataset from its original 256,000 samples per language to create a balanced, computationally efficient subset. Finally, to explicitly assess the metric's resilience to natural corruptions, we incorporate a subset of the ImageNet-C \cite{hendrycks2019robustness} dataset evaluated at varying severity levels (1, 3, and 5).

\subsection{Image classification datasets}

\noindent \textbf{DomainNet \cite{peng2019moment} Dataset Details:}
\begin{itemize}
    \item \textbf{Dataset Splits:} Train: 15,874 \textbar{} Validation: 3,175 \textbar{} Test: 6,841
    \item \textbf{Domain Distribution (Train):} Clipart (1,889), Infograph (2,038), Painting (2,375), Quickdraw (3,500), Real (4,146), Sketch (1,926)
    \item \textbf{Class Distribution (Train):} Strawberry (1,524), Circle (1,172), Bread (1,609), Feather (1,620), Teapot (1,575), Whale (1,756), Windmill (1,649), Zebra (1,584), Tiger (1,732), Bird (1,653)
\end{itemize}

\noindent \textbf{PACS \cite{li2017deeperbroaderartierdomain} Dataset Details:}
\begin{itemize}
    \item \textbf{Dataset Splits:} Train: 8,977 \textbar{} Validation: 1,014 \textbar{} Test: 9,991
    \item \textbf{Domain Distribution (Train):} Art Painting (1,840), Cartoon (2,107), Photo (1,499), Sketch (3,531)
    \item \textbf{Class Distribution (Train):} Dog (1,555), Elephant (1,486), Giraffe (1,407), Guitar (1,000), Horse (1,384), House (846), Person (1,299)
\end{itemize}

\noindent \textbf{OfficeHome \cite{venkateswara2017deep} Dataset Details:}
\begin{itemize}
    \item \textbf{Dataset Splits:} Train: 6,031 \textbar{} Validation: 606 \textbar{} Test: 604
    \item \textbf{Domain Distribution (Train):} Art (1,034), Clipart (1,709), Product (1,720), Real World (1,568)
    \item \textbf{Class Distribution (Train):} Alarm Clock (299), Backpack (295), Bike (317), Bottle (338), Candles (330), Chair (363), Computer (303), Couch (268), Flipflops (270), Flowers (355), Folder (266), Helmet (298), Keyboard (291), Laptop (316), Monitor (310), Scissors (295), Sneakers (276), Speaker (290), TV (268), Telephone (283)
\end{itemize}

\subsection{Sentiment analysis datasets}

\noindent \textbf{EuroEval \cite{nielsen2025encoder} Dataset Details:}
\begin{itemize}
    \item \textbf{Dataset Splits:} Train: 8,029 \textbar{} Validation: 2,048 \textbar{} Test: 13,312
    \item \textbf{Domain Distribution (Train):} Czech (1,024), English (1,024), German (1,024), Hungarian (1,024), Italian (1,024), Polish (1,024), Slovak (1,024), Spanish (861)
    \item \textbf{Class Distribution (Train):} Negative (2,853), Neutral (2,360), Positive (2,816)
\end{itemize}

\noindent \textbf{Amazon Reviews \cite{muennighoff2023mteb} Dataset Details:}
\begin{itemize}
    \item \textbf{Dataset Splits:} Train: 196,596 \textbar{} Validation: 12,276 \textbar{} Test: 12,276
    \item \textbf{Domain Distribution (Train):} Chinese (32,766), English (32,766), French (32,766), German (32,766), Japanese (32,766), Spanish (32,766)
    \item \textbf{Class Distribution (Train):} Negative (65,532), Neutral (65,532), Positive (65,532)
\end{itemize}

\subsection{Synthetic datasets}
\noindent \textbf{Imagenet-C \cite{hendrycks2019robustness}}
\begin{itemize}
    \item \textbf{Dataset Splits:} Train: 16,140 \textbar{} Validation: 3,228 \textbar{} Test: 10,818
    \item \textbf{Domain Distribution (Train):} Brightness (2,699), Contrast (2,696), Fog (2,691), Motion Blur (2,687), Real (2,684), Snow (2,683)
    \item \textbf{Class Distribution (Train):} rock beauty (872), Mexican hairless (679), digital watch (800), Japanese spaniel (694), toy terrier (774), black-and-tan coonhound (658), English foxhound (678), space heater (903), otterhound (664), wire-haired fox terrier (879), Sealyham terrier (842), Petri dish (865), sweatshirt (837), hair spray (882), hatchet (801), throne (878), Irish water spaniel (872), power drill (817), letter opener (887), affenpinscher (858)
\end{itemize}

\section{Baseline Implementations}
\label{appendix:baselines}
We implement the majority of our baseline methods by directly adapting the codebase from the Transfer Learning Library \cite{jiang2022transferability, tllib}. For the s-OTDD baseline, we utilize the official implementation provided by the original authors, with minor modifications to interface properly with our standardized experimental framework. 

Additionally, for the computation of the Sinkhorn and MMD (Maximum Mean Discrepancy) distances, we leverage the highly optimized \texttt{geomloss} library \cite{feydy2019fast}.

\section{Detailed statistics --- raw Spearman rank correlation}
\label{appendix:raw_spearman}

Table~\ref{tab:appendix_raw} details the raw mean Spearman correlation ($\rho$) across the vision and text modalities, complementing the MCI results presented in the main text. As in our primary evaluation, we adjust the sign of each metric such that a \textbf{negative} Spearman rank correlation with target accuracies indicates optimal predictive performance.

\begin{table}[ht!]
    \centering
    \begin{subtable}[t]{0.4\textwidth}
        \centering 
        
        \subcaption{Results on vision benchmarks.}
        \resizebox{\textwidth}{!}{%
\begin{tabular}{lcccccc}
\toprule
 & \multicolumn{6}{c}{\textbf{Mean Spearman Correlation (\%pt.)} $\downarrow$} \\ \cmidrule(l){2-7} 
 & \multicolumn{2}{c}{\textbf{DomainNet} \cite{peng2019moment}} & \multicolumn{2}{c}{\textbf{OfficeHome} \cite{venkateswara2017deep}} & \multicolumn{2}{c}{\textbf{PACS} \cite{li2017deeperbroaderartierdomain}} \\ \cmidrule(lr){2-3} \cmidrule(lr){4-5} \cmidrule(l){6-7} 
\textbf{Method} & \textbf{CLIP} & \textbf{DINOv3} & \textbf{CLIP} & \textbf{DINOv3} & \textbf{CLIP} & \textbf{DINOv3} \\ 
\midrule
LEEP \cite{nguyen2020leep}              & {\ul-51.65}     & {\ul-31.16}     & 0.16            & -3.42           & -4.09           & 12.00 \\
H-Score \cite{bao2019information}       & -42.10          & -15.47          & 13.45           & 14.63           & 0.27            & {\ul-4.62} \\
Reg. H-Score \cite{ibrahim2022newer}    & -37.00          & -17.50          & 12.80           & 19.61           & -1.56           & -3.23 \\
LogME \cite{you2021logme}               & -30.86          & -14.70          & 0.27            & 11.33           & 0.59            & \textbf{-26.41} \\
NCE \cite{tran2019transferability}      & -47.91          & -30.46          &  -2.53          & -3.42           & -3.07           & 11.78 \\
TransRate \cite{huang2022frustratingly} & -40.73          & -21.90          & {\ul -7.10}     & {\ul -10.47}    & \textbf{-14.90} & 18.29 \\
S-OTDD \cite{nguyen2025lightspeed}      & -16.01          & -13.29          & 15.55           & \textbf{-32.03} & {\ul-13.37}     & 8.26 \\ 
\midrule
RADAR (OUR)                             & \textbf{-52.74} & \textbf{-44.47} & \textbf{-11.94} & -3.80           & 10.27           & 15.44 \\ 
\bottomrule
\end{tabular}%
}
        
        \label{tab:results_raw_vision} 
    \end{subtable}%
    \hfill 
    \begin{subtable}[t]{0.55\textwidth}
        \centering
        \subcaption{Results on text benchmarks.}
        \resizebox{\textwidth}{!}{%
\begin{tabular}{lcccc}
\toprule
 & \multicolumn{4}{c}{\textbf{Mean Spearman Correlation (\%pt.)} $\downarrow$} \\ \cmidrule(l){2-5} 
 & \multicolumn{2}{c}{\textbf{EuroEval} \cite{nielsen2025encoder}} & \multicolumn{2}{c}{\textbf{Amazon Reviews} \cite{muennighoff2023mteb}} \\ \cmidrule(lr){2-3} \cmidrule(l){4-5} 
\textbf{Method} & \textbf{Qwen3-Embedding} & \textbf{EmbeddingGemma} & \textbf{Qwen3-Embedding} & \textbf{EmbeddingGemma} \\ 
\midrule
LEEP \cite{nguyen2020leep}              & {\ul-4.61}     & -1.66           & -17.29          & {\ul-11.27} \\
H-Score \cite{bao2019information}       & 8.24           & 6.15            & -25.89          & -9.84 \\
Reg. H-Score \cite{ibrahim2022newer}    & 5.10           & 4.85            & {\ul -25.96}    & -9.83 \\
LogME \cite{you2021logme}               & 21.90          & 15.21           & 3.53            & 7.72 \\
NCE \cite{tran2019transferability}      & -1.05          & {\ul-3.72}      & \textbf{-26.64} & -7.47 \\
TransRate \cite{huang2022frustratingly} & 2.10           & -1.50           & -6.58           & 6.29 \\
S-OTDD \cite{nguyen2025lightspeed}      & 5.16           & 13.11           & -3.40           & 6.50 \\ 
\midrule
RADAR (OUR)                             & \textbf{-8.16} & \textbf{-36.18} & -15.83          & \textbf{-11.96} \\ 
\bottomrule
\end{tabular}%
}
        
        \label{tab:results_raw_lang} 
    \end{subtable}
    \caption{Raw Mean Spearman Rank correlation (\%pt., $\downarrow$). Bold indicates the best performance, underline indicates second-best performance.}
    \label{tab:appendix_raw} 
\end{table}

Interestingly, excluding PACS, we observe that RADAR is the only metric that successfully maintains a negative Spearman correlation across all evaluated vision and text datasets. While NCE and LEEP also achieve consistently negative rank correlations within the text domain experiments, RADAR demonstrates exceptionally strong predictive performance on specific benchmarks; most notably, it outperforms all baselines by a substantial margin on EuroEval.

\subsection{Statistical significance}
In this subsection, we report mean and standard deviation of RADAR's Spearman rank correlation across $r=10$ random seeds for both backbone architectures on DomainNet \cite{peng2019moment} and EuroEval \cite{nielsen2025encoder} in Table~\ref{tab:raw_var}. This directly quantifies the variance introduced by RADAR's stochastic pair sampling and confirms the stability of the reported results.

\begin{table}[ht!]
    \caption{Mean, standard deviation, minimum and maximum of RADAR's 
    Spearman rank correlation across $r=10$ subsampling seeds on 
    DomainNet \cite{peng2019moment} and EuroEval \cite{nielsen2025encoder}.}
    \label{tab:raw_var}
    \centering
    \resizebox{0.7\textwidth}{!}{%
        \begin{tabular}{llccccc}
        \toprule
        \textbf{Dataset} & \textbf{Backbone} & 
        \textbf{Mean $\rho$ (\%pt., $\downarrow$)} & &
        \textbf{Std Dev} & \textbf{Min} & \textbf{Max} \\
        \midrule
        \multirow{2}{*}{DomainNet} 
            & CLIP      & -51.75 & $\pm$ & 0.94 & -53.25 & -50.57 \\
            & DINOv3    & -44.60 & $\pm$ & 0.91 & -46.09 & -43.61 \\
        \midrule
        \multirow{2}{*}{EuroEval}  
            & Qwen3-Embedding   & -8.37   & $\pm$ & 0.64 & -9.62  & -7.52 \\
            & EmbeddingGemma    & -34.92  & $\pm$ & 1.55 & -36.80 & -32.29 \\
        \bottomrule
        \end{tabular}
    }
\end{table}

On DomainNet, RADAR exhibits strong stability with respect to the randomness of empirical pair sampling and GMM fitting. The standard deviations for the Spearman correlation ($\rho$) across all $r=10$ independent seeds are consistently low for both vision architectures: $\pm0.94$ percentage points for CLIP and $\pm0.91$ for DINOv3, with narrow empirical spreads of $[-53.25, -50.57]$ and $[-46.09, -43.61]$, respectively.

RADAR exhibits broadly stable behaviour across both text architectures 
on EuroEval, though with some variation between models. The standard deviations are comparable to those observed in the vision 
domain for Qwen3-Embedding ($\pm0.64$ percentage points), while 
EmbeddingGemma exhibits notably higher variance ($\pm1.55$ percentage 
points), reflected in its wider empirical spread 
of $[-36.88, -32.29]$ compared to $[-9.62, -7.52]$ for Qwen3-Embedding.

These results suggest that RADAR's geometric trajectory descriptor consistently produces stable divergence estimates across independent subsampling runs, and that the stratified pair sampling strategy introduced in Section~\ref{sec:method} provides sufficient coverage of the domain manifold without exhaustive enumeration.

\textit{Note:} The mean $\rho$ reported in Table~\ref{tab:raw_var} differs slightly from that reported in Table~\ref{tab:appendix_raw}, as the two  tables are computed over independent sets of subsampling seeds. Given the sampling variance of each model, this discrepancy is expected and reflects seed-to-seed variability rather than an inconsistency in the experimental setup. Nevertheless, the difference remains within 1 standard deviation reported in Table~\ref{tab:raw_var}.

\section{Broader Impacts}
\label{appendix:impact}
Our work aims to yield positive societal impacts by enabling more data-efficient training paradigms. By strategically filtering training data, our method significantly reduces the computational overhead --- and consequently, the environmental footprint, such as electricity and cooling water consumption --- associated with training large-scale models. However, we acknowledge the dual-use nature of our research. As RADAR can be used to predict negative transfer, malicious actors could theoretically invert our metric's objective to intentionally degrade model performance, induce catastrophic forgetting, or systematically amplify harmful biases within foundation models. Mitigating this risk would require access controls on the metric's deployment and awareness of adversarial inversion when applying transferability metrics in safety-critical settings.

\section{Computational cost analysis}
\label{appendix:cost_analysis}

In Table~\ref{tab:comp_cost}, we report the wall-clock time required to compute each metric across all domain pairs in the DomainNet \cite{peng2019moment} dataset. Because RADAR relies on GMMs---which currently lack highly optimized GPU implementations---and several baseline metrics require a model trained on source data, we evaluate all methods under both CPU-only and GPU-accelerated configurations. For metrics necessitating a source model, we include the training time of an auxiliary MLP classifier head to ensure a fair comparison. Baseline metrics are evaluated using the standardized implementations from the Transfer Learning Library \cite{jiang2022transferability, tllib}, with the exception of s-OTDD \cite{nguyen2025lightspeed}, for which we utilize the official implementation. Due to implementation-specific compatibility constraints, s-OTDD \cite{nguyen2025lightspeed} results are reported for CPU execution only. Regarding our own method, we omit the CPU timing for the Sinkhorn-based variant as its primary utility lies in GPU acceleration; its GPU performance is included to demonstrate the latent optimization potential of our metric.

\begin{table}[ht!]
\centering
\caption{Computational cost analysis comparing RADAR to baseline metrics. Time represents the wall-clock time in seconds ($\downarrow$) to evaluate all the domain pairs over the whole network, averaged over $r=10$ runs. Evaluated on an AMD EPYC 9124 CPU and a single NVIDIA RTX 6000 Ada Generation GPU.}
\label{tab:comp_cost}
\resizebox{\textwidth}{!}{
\begin{tabular}{lccccccc}
\toprule
\textbf{Hardware} & \textbf{LEEP} & \textbf{LogME} & \textbf{NCE} & \textbf{TransRate} & \textbf{s-OTDD} & \textbf{RADAR (Ours)} & \textbf{RADAR (Sinkhorn)} \\ 
\midrule
\textbf{CPU Only} & 894.62 & 2803.74 & 916.64          & 1193.73 & >3600 & \textbf{644.56} & --\\ 
\textbf{GPU}      & 208.01 & 1411.28 & \textbf{204.05} & 378.09  & N/A   & 454.20          & 253.53\\ 
\bottomrule
\end{tabular}
}
\end{table}

Remarkably, RADAR is the most efficient metric in CPU-only environments, outperforming all baselines including LEEP \cite{nguyen2020leep}, NCE \cite{tran2019transferability} and s-OTDD \cite{nguyen2025lightspeed}. While our approach involves training a GMM, strategic subsampling of the data allows us to bypass the longer training times associated with the classifier heads required by many traditional benchmarks. Although the layer-wise trajectory analysis of our vanilla method introduces a slight overhead on the GPU, the RADAR (Sinkhorn) variant significantly mitigates this, achieving runtimes comparable to the fastest statistical benchmarks. These results demonstrate that RADAR offers a superior balance of predictive power and practical scalability, with significant room for further hardware-specific acceleration.

\section{Data Processing Inequality and Multi-Layer Advantage}
\label{appendix:dpi}

A natural question is whether the multi-layer trajectory descriptor used by RADAR provides any advantage over a single-layer divergence computed at the network's final representation. The data processing inequality (DPI) for total variation gives a precise answer: divergences present at intermediate layers can only be \emph{suppressed}, never amplified, by subsequent deterministic layer maps. This justifies extracting features from intermediate transformer blocks rather than relying on the final-layer representation alone.

\begin{assumption}[Frozen constant-width transformer]
\label{ass:transformer}
Let $\Phi : \mathcal{X} \to \mathbb{R}^{H}$ be a frozen, pre-trained foundation model with $L$ residual blocks of constant ambient dimension $H$. The layer-$l$ feature map $h^{(l)} : \mathcal{X} \to \mathbb{R}^{H}$ satisfies $h^{(0)}(x) = E(x)$ and $h^{(l)}(x) = T_{l-1}(h^{(l-1)}(x))$, where each $T_{l} : \mathbb{R}^{H} \to \mathbb{R}^{H}$ is deterministic and measurable, and features are uniformly bounded: $\|h^{(l)}(x)\| \le B$ for all $l, x$. The pushforward feature distribution at layer $l$ for domain $D$ is $\mu_{D}^{(l)} := (h^{(l)})_{\#}\, P_{D}^{X}$.
\end{assumption}

\begin{proposition}[Layer-wise total variation is non-increasing]
\label{prop:dpi}
Under Assumption~\ref{ass:transformer}, for any two domains $A, B$ and any pair of layer indices $l \le l'$ within the network,
\begin{equation}
    d_{\mathrm{TV}}\!\left(\mu_{A}^{(l)},\, \mu_{B}^{(l)}\right)
\;\;\ge\;\;
d_{\mathrm{TV}}\!\left(\mu_{A}^{(l')},\, \mu_{B}^{(l')}\right).
\end{equation}
\end{proposition}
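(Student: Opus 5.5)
The plan is to reduce the statement to the data processing inequality for total variation under a single measurable map, and then compose along the network. By Assumption~\ref{ass:transformer}, for $l \le l'$ the layer-$l'$ feature map factors as $h^{(l')} = T_{l'-1}\circ\cdots\circ T_{l}\circ h^{(l)}$. Write $G := T_{l'-1}\circ\cdots\circ T_{l}$, with $G=\mathrm{id}$ when $l=l'$. As a composition of deterministic measurable maps, $G:\mathbb{R}^H\to\mathbb{R}^H$ is measurable. Functoriality of pushforwards then gives
\begin{equation}
\mu_D^{(l')} = (G\circ h^{(l)})_{\#}P_D^X = G_{\#}\bigl((h^{(l)})_{\#}P_D^X\bigr) = G_{\#}\mu_D^{(l)}
\end{equation}
for $D\in\{A,B\}$. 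So it suffices to show that for any probability measures $\mu,\nu$ on $\mathbb{R}^H$ and any measurable $G$, we have $d_{\mathrm{TV}}(G_{\#}\mu, G_{\#}\nu)\le d_{\mathrm{TV}}(\mu,\nu)$.

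For this step I would use the variational definition $d_{\mathrm{TV}}(\mu,\nu)=\sup_{E}|\mu(E)-\nu(E)|$, with the supremum over Borel sets $E$. For any Borel $F\subseteq\mathbb{R}^H$, measurability of $G$ makes $G^{-1}(F)$ Borel, and
\begin{equation}
|G_{\#}\mu(F)-G_{\#}\nu(F)| = |\mu(G^{-1}(F))-\nu(G^{-1}(F))| \le d_{\mathrm{TV}}(\mu,\nu).
\end{equation}
Taking the supremum over $F$ gives the claim. The argument amounts to restricting the supremum to the sub-$\sigma$-algebra $G^{-1}(\mathcal{B})$.

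Alternatively, one could do a one-step induction: prove $d_{\mathrm{TV}}(\mu_A^{(k+1)},\mu_B^{(k+1)})\le d_{\mathrm{TV}}(\mu_A^{(k)},\mu_B^{(k)})$ using $G=T_k$, then chain these inequalities from $k=l$ to $l'-1$. This version avoids discussing compositions but gives the same result.

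There is no real obstacle, but two points need care. First, measurability must hold so that preimages are Borel; this is exactly what the assumption provides. Second, the TV normalization convention, $\sup$ versus $\tfrac12\|\cdot\|_1$, must match whatever the paper uses. The two conventions agree for probability measures, and in any case a constant factor would not affect the inequality. The boundedness $\|h^{(l)}\|\le B$ and the constant width $H$ are not needed for this statement. I would remark on this, since they are presumably used only in later results of the appendix.
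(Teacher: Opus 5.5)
Your proof is correct and follows essentially the same route as the paper: factor $h^{(l')}$ through the composed measurable map $T_{l'-1}\circ\cdots\circ T_{l}$, so that $\mu_D^{(l')}$ is the pushforward of $\mu_D^{(l)}$ under it, and then apply the data processing inequality for total variation. The only difference is that you prove that inequality directly, by taking preimages of Borel sets in the variational definition $\sup_E|\mu(E)-\nu(E)|$, whereas the paper simply cites it.
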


\begin{proof}
By Assumption~\ref{ass:transformer}, the composition
\begin{equation}
    T_{l:l'} \;:=\; T_{l'-1} \circ T_{l'-2} \circ \cdots \circ T_{l}
\;:\; \mathbb{R}^{H} \to \mathbb{R}^{H}
\end{equation}
is a deterministic measurable map satisfying $\mu_{D}^{(l')} = (T_{l:l'})_{\#}\, \mu_{D}^{(l)}$ for $D \in \{A, B\}$. The data processing inequality for total variation states that for any measurable map $f$ and any pair of probability measures $\mu, \nu$ on a common space,
\begin{equation}
    d_{\mathrm{TV}}(f_{\#}\mu,\, f_{\#}\nu) \;\le\; d_{\mathrm{TV}}(\mu, \nu).
\end{equation}
Applying this with $f = T_{l:l'}$, $\mu = \mu_{A}^{(l)}$, and $\nu = \mu_{B}^{(l)}$ yields the claim.
\end{proof}

\begin{corollary}[Final-layer representations underestimate domain divergence]
\label{cor:multi-layer}
For any intermediate layer $l < L$,
\begin{equation}
    d_{\mathrm{TV}}\!\left(\mu_{A}^{(l)},\, \mu_{B}^{(l)}\right)
\;\;\ge\;\;
d_{\mathrm{TV}}\!\left(\mu_{A}^{(L)},\, \mu_{B}^{(L)}\right).
\end{equation}
Consequently, any transferability metric that operates exclusively on single-layer features (e.g., centroid distance, LogME, TransRate computed at the network output) systematically underestimates the domain divergence that may be visible at any earlier layer.
\end{corollary}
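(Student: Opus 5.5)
The corollary follows directly from Proposition~\ref{prop:dpi} by specializing the layer indices. Take an arbitrary intermediate layer $l < L$ and set $l' = L$. By Assumption~\ref{ass:transformer}, the final representation $h^{(L)}$ is obtained from $h^{(l)}$ through the composed map $T_{l:L} = T_{L-1}\circ\cdots\circ T_l$. This map is deterministic and measurable, because it is a finite composition of measurable maps. Hence $\mu_D^{(L)} = (T_{l:L})_{\#}\mu_D^{(l)}$ for $D\in\{A,B\}$. Proposition~\ref{prop:dpi} applied with the pair $(l, L)$ then gives
\begin{equation*}
d_{\mathrm{TV}}\!\left(\mu_{A}^{(l)},\mu_{B}^{(l)}\right) \ge d_{\mathrm{TV}}\!\left(\mu_{A}^{(L)},\mu_{B}^{(L)}\right),
\end{equation*}
which is the displayed inequality.

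The one bookkeeping point is to confirm that $L$ is an admissible index in Proposition~\ref{prop:dpi}, i.e.\ that $h^{(L)}$ is the network output $\Phi$. This needs the recursion in Assumption~\ref{ass:transformer} to run through all $L$ blocks, with the constant ambient dimension $H$ so that the composition is well-typed. If one prefers a self-contained argument, I would instead induct on $L-l$. The base case is the single-step data processing inequality for $T_{L-1}$. The inductive step chains the inequalities, using transitivity of $\ge$ together with $(g\circ f)_{\#} = g_{\#}\circ f_{\#}$.

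The ``consequently'' clause is interpretive rather than a separate mathematical claim. I would formalize it as follows. Any single-layer metric evaluated at the output is a functional of $(\mu_A^{(L)},\mu_B^{(L)})$ alone. The total variation between these measures is bounded above by the total variation at every earlier layer, so it can equal but never exceed the divergence visible at an earlier layer. This is the meaning of ``underestimate'' here: a weak upper bound, with strict loss whenever $T_{l:L}$ merges regions where the two measures differ.

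I expect no real obstacle. The only subtlety is not to overclaim: the inequality concerns total variation, and transferring it to the specific metrics named (centroid distance, LogME, TransRate) is not implied. I would therefore phrase that sentence as a remark about the information available to such metrics rather than as a quantitative bound on their values.
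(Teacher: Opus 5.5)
Your proposal is correct and follows the paper's route: the corollary is Proposition~\ref{prop:dpi} with $l' = L$, i.e.\ the data processing inequality for total variation applied to the composed map $T_{l:L}$. Your reading of the ``consequently'' clause as interpretive, rather than a quantitative bound on centroid distance, LogME, or TransRate, agrees with the paper's own Remark~\ref{rem:dpi-scope}; one small fix to your aside on strictness is that total variation is lost only when $T_{l:L}$ merges sets of positive mass on which $\mu_A^{(l)}-\mu_B^{(l)}$ has \emph{opposite} signs, since merging regions where the difference has the same sign leaves $d_{\mathrm{TV}}$ unchanged.
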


\begin{remark}[Implications for transferability estimation]
\label{rem:multi-layer}
Proposition~\ref{prop:dpi} formalizes an observation made empirically by recent work on foundation-model representations: the most discriminative features for downstream tasks are not always located at the network's output~\cite{bolya2025perception}. Final-layer normalization and the inductive bias toward task-aligned features can suppress domain-specific structure that remains observable at intermediate depths.

This has two consequences for the design of transferability metrics:
\begin{enumerate}
    \item \textbf{Single-layer baselines are biased.} Metrics that compare only final-layer feature distributions (such as $l_2$ centroid distance or s-OTDD) inherit a one-sided bias: they may report small divergence even when the underlying domains are meaningfully distinct at earlier layers.
    \item \textbf{Multi-layer extraction is justified.} Multi-layer extraction captures the full divergence profile across depth, recovering divergence signals that are monotonically suppressed as depth increases and would be invisible to any single-layer metric.
\end{enumerate}
The trajectory description leveraged by RADAR utilizes this insight by integrating geometric information across a window of radius $\ell=6$ centered at $l$ (from $l-6 \rightarrow \cdots \rightarrow l \rightarrow \cdots \rightarrow l+6$). The empirical justification for the specific window size $\ell = 6$ is provided by the ablation in Appendix~\ref{appendix:radar_window} and Figure~\ref{fig:ablation_ell_main}, which demonstrates that Spearman correlation increases monotonically with $\ell$ across both vision architectures.
\end{remark}

\begin{remark}[Scope of the inequality]
\label{rem:dpi-scope}
Proposition~\ref{prop:dpi} bounds total variation, not the full RADAR KL divergence, and concerns marginal feature distributions at single layers, not the joint trajectory law. It establishes that single-layer final-only metrics are biased, but does not by itself establish that the specific $(\theta^{(l)}, d^{(l)})$ trajectory descriptor is optimal among multi-layer alternatives. The empirical case for the geometric descriptor over alternative multi-layer summaries is made through the ablations in Section~\ref{section:ablation}.
\end{remark}

\section{Proof of the Displacement Triangle}
\label{appendix:disp_triangle}

\begin{proposition}[Triangle closure]
\label{prop:triangle-closure}
For all $x, x' \in \mathcal{X}$ and all $l \in \{0, 1, \ldots, L-1\}$,
\begin{equation}
    v_{\mathrm{sep}}^{(l)}(x, x') \;+\; v_{\mathrm{detour}}^{(l)}(x, x') \;=\; v_{\mathrm{traj}}^{(l)}(x).
\end{equation}
\end{proposition}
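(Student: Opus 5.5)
The proof is a direct telescoping computation from the definitions in \eqref{eq:sep}, \eqref{eq:detour} and \eqref{eq:traj}, and needs none of the structural content of Assumption~\ref{ass:transformer}. First, I would check that every quantity in the statement is well defined. For $l \in \{0,1,\ldots,L-1\}$ both indices $l$ and $l+1$ lie in $\{0,\ldots,L\}$, so $\mathbf{h}^{(l)}(x)$, $\mathbf{h}^{(l)}(x')$ and $\mathbf{h}^{(l+1)}(x)$ are all defined. I would also note that these three vectors live in a common vector space, so the sums and differences make sense. Under Assumption~\ref{ass:transformer} this is immediate from the constant width $H$. More generally, the three vectors in the definitions must share an ambient space (that is, $H_l = H_{l+1}$) for the triad to be defined at all, and I would state this explicitly as the only hypothesis used.

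Second, I would substitute the definitions and add:
\begin{align*}
v_{\mathrm{sep}}^{(l)}(x,x') + v_{\mathrm{detour}}^{(l)}(x,x')
&= \bigl(\mathbf{h}^{(l)}(x') - \mathbf{h}^{(l)}(x)\bigr) + \bigl(\mathbf{h}^{(l+1)}(x) - \mathbf{h}^{(l)}(x')\bigr) \\
&= \mathbf{h}^{(l+1)}(x) - \mathbf{h}^{(l)}(x),
\end{align*}
where the intermediate point $\mathbf{h}^{(l)}(x')$ cancels. By \eqref{eq:traj}, the right-hand side is exactly $v_{\mathrm{traj}}^{(l)}(x)$. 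The identity is purely algebraic (Chasles' relation for the three points). It therefore holds for every $x'$, including $x' = x$ and cross-domain partners $x''$, and it uses neither determinism, boundedness, nor measurability of the $T_l$.

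Third, as a short remark I would record the consequence the main text uses. Applying the triangle inequality to $v_{\mathrm{traj}} = v_{\mathrm{sep}} + v_{\mathrm{detour}}$ gives $\|v_{\mathrm{traj}}\| \le \|v_{\mathrm{sep}}\| + \|v_{\mathrm{detour}}\|$, so the numerator of $d^{(l)}$ is nonnegative and hence $d^{(l)} \ge 0$, since the denominator is positive because of the $\epsilon$ floor. Equality holds exactly when $v_{\mathrm{sep}}$ and $v_{\mathrm{detour}}$ are nonnegatively collinear. There is no real obstacle here. The only point needing care is the dimensional compatibility of $\mathbf{h}^{(l)}$ and $\mathbf{h}^{(l+1)}$, which I would settle by invoking the constant-width part of Assumption~\ref{ass:transformer}.
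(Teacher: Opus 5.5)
Your proof is correct and follows the paper's argument: expand the definitions of $v_{\mathrm{sep}}^{(l)}$, $v_{\mathrm{detour}}^{(l)}$ and $v_{\mathrm{traj}}^{(l)}$, and observe that the intermediate point $\mathbf{h}^{(l)}(x')$ cancels. Your remarks on the needed compatibility $H_l = H_{l+1}$ and on the consequence $d^{(l)} \ge 0$ are accurate additions that the paper's proof does not spell out.
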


\begin{proof}
Direct expansion of Definitions~(\ref{eq:sep})--(\ref{eq:traj}):
\begin{align}
v_{\mathrm{sep}}^{(l)}(x, x') + v_{\mathrm{detour}}^{(l)}(x, x')
&= \bigl[h^{(l)}(x') - h^{(l)}(x)\bigr] + \bigl[h^{(l+1)}(x) - h^{(l)}(x')\bigr] \\
&= h^{(l+1)}(x) - h^{(l)}(x) \\
&= v_{\mathrm{traj}}^{(l)}(x).
\end{align}
The intermediate term $h^{(l)}(x')$ cancels, leaving the direct displacement from layer $l$ to layer $l+1$ at anchor $x$.
\end{proof}

\section{Base models}
\label{appendix:base_models}
We describe the exact base model configurations used for feature extraction and our main results. 

For image classification:
\begin{itemize}
    \item \textbf{DINOv3 \cite{simoni2025dinov3}}: facebook/dinov3-vits16-pretrain-lvd1689m \cite{dino_url}
    \item \textbf{CLIP \cite{radford2021learning}}: openai/clip-vit-base-patch32 \cite{clip_url}
\end{itemize}

For sentiment analysis:
\begin{itemize}
    \item \textbf{Qwen3-Embedding \cite{zhang2025qwen3}}: Qwen/Qwen3-Embedding-0.6B \cite{qwen_url}
    \item \textbf{EmbeddingGemma \cite{vera2025embeddinggemma}}: google/embeddinggemma-300m \cite{google_url}
\end{itemize}

\section{Declaration of LLM usage}
\label{appendix:llm_usage}
We declare that LLMs were used for spell-checking, editing, formatting, and paraphrasing. The proofs in Appendix~\ref{appendix:dpi} were additionally proposed and reviewed with LLM assistance and independently verified by the authors. LLMs were also used for programming aid, providing initial, skeleton implementations that were subsequently refined by the authors.

\end{document}